\title[Private Bernoulli Bandits]{Optimal Regret of Bernoulli Bandits under Global Differential Privacy}
\newcommand{\reg}{\mathrm{Reg}}
\newcommand{\mech}{\mathcal{M}}
\newcommand{\real}{\mathbb{R}}
\newcommand{\E}{\mathbb{E}}
\newcommand{\horizon}{T}
\newcommand{\episode}{\ell}
\newcommand{\arms}{K}
\newcommand{\pulls}{N}
\def\abs#1{\left| #1 \right|}
\newcommand \pol {\ensuremath{\pi}}
\newcommand \model {\ensuremath{\nu}}
\newcommand \defn {\mathrel{\triangleq}}
\newcommand \dd {\,\mathrm{d}}
\DeclareMathOperator*{\argmax}{arg\,max}
\DeclareMathOperator*{\argmin}{arg\,min}
\newcommand \expect {\mathop{\mbox{\ensuremath{\mathbb{E}}}}\nolimits}
\newcommand \ind[1] {\mathds{1}\left\{#1\right\}}
\newcommand{\dham}{d_{\text{Ham}}}
\newcommand{\eps}{\epsilon}
\newcommand{\View}{\mathrm{View}}
\newtheorem{rep@theorem}{\rep@title}
\newcommand{\newreptheorem}[2]{%
	\newenvironment{rep#1}[1]{%
		\def\rep@title{\textbf{#2} \ref{##1}}%
		\begin{rep@theorem}}%
		{\end{rep@theorem}}}
\DeclareRobustCommand{\bigO}{\text{\usefont{OMS}{cmsy}{m}{n}O}}
\newif\ifdoublecol
\DeclareRobustCommand{\bigO}{%
  \text{\usefont{OMS}{cmsy}{m}{n}O}%
}
\tikzset{
   container/.style = {rectangle, rounded corners, draw=yellow, dashed,
fit=#1, inner sep=6mm, node contents={}},
circle-label/.style = {circle, draw}
        }
\tikzset{box/.style={draw, diamond, thick, text centered, minimum height=0.5cm, minimum width=1cm, text width=0.9cm}}
\tikzset{line/.style={draw, thick, -latex'}}
\newcommand{\dpklucb}{{\color{blue}\ensuremath{\mathsf{DP\text{-}KLUCB}}}}
\newcommand{\dpimed}{{\color{blue}\ensuremath{\mathsf{DP\text{-}IMED}}}}
\newcommand{\dpucb}{{\ensuremath{\mathsf{DP\text{-}UCB}}}}
\newcommand{\dpse}{{\ensuremath{\mathsf{DP\text{-}SE}}}}
\newcommand{\adapklucb}{{\ensuremath{\mathsf{AdaP\text{-}KLUCB}}}}
\newcommand{\klucb}{{\ensuremath{\mathsf{KL\text{-}UCB}}}}
\newcommand{\imed}{{\ensuremath{\mathsf{IMED}}}}
\newcommand \deffn {\mathrel{\triangleq}}
\newcommand{\cF}{\mathcal{F}}
\renewcommand{\ln}{\log}
\newcommand{\ie}{\emph{i.e.}} 
\newcommand{\de}{\delta}
\newcommand{\nn}{\nonumber\\}
\newcommand{\n}{\nonumber}
\newcommand*{\e}{\mathrm{e}}
\newcommand{\rd}{\mathrm{d}}
\newcommand{\since}[1]{\quad\left(\mbox{#1}\right)}
\newcommand{\tmu}{\tilde{\mu}}
\newcommand{\onex}[1]{\mathbbm{1}\left[#1\right]}
\newcommand{\base}{\alpha}
\newcommand{\calT}{\mathcal{T}}
\newcommand{\dep}{\mathrm{d}_{\epsilon}}
\newcommand{\Aa}{A_a}
\newcommand{\Aap}{A_{a'}}
\newcommand{\mustar}{\mu^{\star}}
\newtheorem{reduction}{Reduction}
\begin{document}

\maketitle

\begin{abstract}%
As sequential learning algorithms are increasingly applied to real life, ensuring data privacy while maintaining their utilities emerges as a timely question. In this context, regret minimisation in stochastic bandits under $\epsilon$-global Differential Privacy (DP) has been widely studied. 
The present literature poses a gap between the regret lower and upper bounds in this setting. Unlike bandits without DP, no algorithm in this setting matches the lower bound with the same constants. There is a significant gap between the best-known regret lower and upper bound, though they ``match'' in the order.
Thus, we revisit the regret lower and upper bounds of $\epsilon$-global DP algorithms for Bernoulli bandits and improve both. 
First, we prove a tighter regret lower bound involving a novel information-theoretic quantity characterising the hardness of $\epsilon$-global DP in stochastic bandits. This quantity smoothly interpolates between Kullback–Leibler divergence and Total Variation distance, depending on the privacy budget $\epsilon$. Our lower bound strictly improves on the existing ones across all $\epsilon$ values. 
Then, we choose two asymptotically optimal bandit algorithms, \ie, $\klucb{}$ and $\imed{}$, and propose their DP versions using a unified blueprint, \ie, (a) running in arm-dependent phases, and (b) adding Laplace noise to achieve privacy. For Bernoulli bandits, we analyse the regrets of these algorithms and show that their regrets asymptotically match our lower bound up to a constant arbitrary close to 1. This refutes the conjecture that forgetting past rewards is necessary to design optimal bandit algorithms under global DP. 
At the core of our algorithms lies a new concentration inequality for sums of Bernoulli variables under Laplace mechanism, which is a new DP version of the Chernoff bound. This result is universally useful as the DP literature commonly treats the concentrations of Laplace noise and random variables separately, while we couple them to yield a tighter bound. 
Finally, our numerical experiments validate that \dpklucb{} and \dpimed{} achieve lower regret than the existing $\epsilon$-global DP bandit algorithms.
\end{abstract}
  
\begin{keywords}%
  Global Differential Privacy, Multi-armed Bandits, Regret Analysis, Lower Bound%
\end{keywords}
\section{Introduction}
Multi-armed bandit is a classical setup of sequential decision-making under partial information, where the agent collects more information about an environment by interacting with it. To understand the setting, let us consider a clinical trial, where a doctor has $K$ candidate medicines to choose from and wants to recommend ``effective" medicines to their patients. At each step $t$ of the trial, a new patient $p_t$ arrives, the doctor prescribes $a_t \in [K] \defn \{1, \dots, K\}$ one of the $K$ medicines, and observes the reaction of the patient to the medicine. The observations are quantified as rewards, such that $r_t = 1$ if the patient $p_t$ is cured and $0$ otherwise. To design an algorithm recommending ``effective" medicines, the doctor can use a regret-minimising bandit algorithm~\citep{thompson1933likelihood}, \ie, a bandit algorithm that aims to maximise the expected number of cured patients during the trial. 

Following the trial, the doctor wants to release the trial results to the public, \ie, the sequence of medicines $(a_1, \dots, a_T)$, in order to communicate the findings. However, the doctor fears that publishing the results may compromise the privacy of the patients who participated in the trial. Specifically, the rewards $(r_1, \dots, r_T)$ constitute the private information that needs to be protected, since rewards in clinical trials may reveal sensitive information about the health condition of the patients. In addition to clinical trials, many applications of bandits, such as recommendation systems~\citep{silva2022multi}, online advertisement~\citep{chen2014combinatorial}, crowd-sourcing~\citep{zhou2014optimal}, user studies~\citep{losada2022day}, hyper-parameter tuning~\citep{li2017hyperband}, communication networks~\citep{lindstaahl2022measurement},  and pandemic mitigation~\citep{libin2019bayesian}), involve sensitive user data, and thus invokes the data privacy concerns. Motivated by the privacy concerns in bandits, \textit{we study the privacy-utility trade-off in stochastic multi-armed bandits}. 

We adhere to Differential Privacy (DP)~\citep{dwork2014algorithmic} as the privacy framework, and regret minimisation~\citep{auer2002finite} in stochastic bandits as the utility measure. DP has been studied for multi-armed bandits under different bandit settings: finite-armed stochastic~\citep{Mishra2015NearlyOD, dpseOrSheffet, localDP, lazyUcb, azize2022privacy, hu2022near, azizeconcentrated, wang2024optimal}, adversarial~\citep{guha2013nearly, agarwal2017price, tossou2017achieving}, linear~\citep{hanna2022differentially, li2022differentially, azizeconcentrated}, contextual linear~\citep{shariff2018differentially, neel2018mitigating, zheng2020locally, azizeconcentrated}, and kernel bandits~\citep{pavlovic2025differentially}, among others. Most of these works were for regret minimisation, but the problem has also been explored for best-arm identification, with fixed confidence~\citep{azize2023complexity, azize2024differentially}
and fixed budget~\citep{chen2024fixed}. The problem has also been studied under three different DP trust models: (a) global DP where the users trust the centralised decision maker~\citep{Mishra2015NearlyOD, shariff2018differentially, dpseOrSheffet, azize2022privacy, hu2022near}, (b) local DP where each user deploys a local perturbation mechanism to send a ``noisy" version of the rewards to the policy~\citep{basu2019differential,localDP, zheng2020locally, han2021generalized}, and (c) shuffle DP where users still feed their data to a local perturbation, but now they trust an intermediary to apply a uniformly random permutation on all users' data before sending to the central servers~\citep{tenenbaum2021differentially, garcelon2022privacy, chowdhury2022shuffle}. 

In this paper, \textit{we focus on $\epsilon$-pure DP, under a global trust model, in stochastic finite-armed bandits, with the aim of regret minimisation}. 

\paragraph{Related Works.} This problem setting has been studied by~\citet{Mishra2015NearlyOD, dpseOrSheffet, lazyUcb, azize2022privacy, hu2022near}.  \dpucb{}~\citep{Mishra2015NearlyOD} was the first DP version of the Upper Confidence Bound (UCB) algorithm~\citep{auer2002finite} that achieved logarithmic regret. \dpucb{} uses the tree-based mechanism~\citep{dpContinualObs, treeMechanism2} to compute privately the sum of rewards. For each arm, the tree mechanism maintains a binary tree of depth $\log(\horizon)$ over the $\horizon$ streaming reward observations. As a result, the noise added to the sum of rewards has a scale of $\mathcal{O} \left( \log^{2.5}(\horizon)/{\epsilon} \right)$ for rewards in $[0,1]$. \dpucb{} builds a high probability upper bound on the means using the noisy sum of rewards to design a private UCB index and yields a regret bound of $\bigO \left( \sum_a \frac{\log(T)}{\Delta_a} + \textcolor{blue}{ K \log^{2.5}(T)/\epsilon} \right)$, where $\Delta_a$ is the difference between the mean reward of an optimal arm and arm $a$. This upper bound has an additional $\log^{1.5}(T)$ factor compared to the $\textcolor{blue}{\Omega(K \log(T)/\epsilon)}$ regret lower bound, first proved by~\cite{shariff2018differentially}.  



\dpse{}~\citep{dpseOrSheffet} was the first DP bandit algorithm to eliminate the additional multiplicative factor $\log^{1.5}(\horizon)$ in the regret. \dpse{} is a DP version of the Successive Elimination algorithm~\citep{se}. \dpse{} runs in \textit{independent} episodes. At each episode, the algorithm explores a set of active arms uniformly. At the end of an episode, \dpse{} eliminates provably sub-optimal arms, but \emph{only uses the samples collected at the current episode} to decide the arms to eliminate. Due to the addition of the Laplace noise to the sum of rewards, each arm is explored longer, resulting in the additional $\textcolor{blue}{\mathcal{O}\left(K\log(T)/\epsilon \right)}$ in the regret.

A careful reading of \dpse{} suggests that running the algorithm in independent episodes while forgetting the previous samples shreds the extra $\log^{1.5}(\horizon)$ in the regret. These ingredients, \ie, running in independent phases with forgetting and adding Laplace noise, have been further adapted to UCB in~\citet{lazyUcb, azize2022privacy} and to Thompson Sampling in~\citet{hu2022near}. The state-of-the art regret upper bound is thus $\bigO\left (\sum_a \log(T)/\min\{\Delta_a, \textcolor{blue}{\epsilon}\} \right)$. Similarly, \cite{azize2022privacy} use the same three components of doubling, forgetting, and Laplace mechanism to propose \adapklucb{} that achieves $\bigO\left (\frac{C_1(\tau) \Delta_a }{\min\{ \mathrm{kl}(\mu_a, \mu^*) ,  \textcolor{blue}{C_2 \epsilon \Delta_a}\}} \log(\horizon)\right)$ regret for $\tau>3$. Though the regret of \adapklucb{} is order-optimal, we observe that $C_1(\tau)$ and $C_2$ are not universal constants, \ie, may depend on the environment. 

On the other hand, \cite{azize2022privacy} improve the problem-dependent regret lower bound of~\cite{shariff2018differentially} to $\Omega\left(\sum_a \log(T)\frac{\Delta_a}{\min(d_a, \textcolor{blue}{6 \epsilon t_a})}\right)$. Here, $d_a$ is the Kullback-Leibler (KL) indistinguishability gap for arm $a$ characterising the hardness of non-private bandits~\citep{lai1985asymptotically}, and $t_a$ is a ``Total Variation" (TV) version of $d_a$ characterising the hardness of private bandits. For Bernoulli bandits, $t_a = \Delta_a$ and $d_a \approx \Delta_a^2$. Under these approximations, the lower bound of~\cite{azize2022privacy} recovers that of~\cite{shariff2018differentially}, and the regret upper bounds of~\cite{dpseOrSheffet, azize2022privacy, hu2022near} match approximately the lower bound. However, the approximation $d_a \approx \Delta_a^2$ can be arbitrarily bad, exposing a gap between the state-of-the-art upper and lower bounds in DP bandits.
This motivates us to ask:
\begin{center}
    Q1. \textit{Can we derive matching regret upper and lower bounds \emph{up to the same constant} for $\epsilon$-global DP Bernoulli bandits?}
\end{center}

Additionally, following the triumph of doubling and forgetting as an algorithmic blueprint in DP bandits, \cite{lazyUcb} conjectured that forgetting is necessary for designing any $\epsilon$-global DP bandit algorithm with an optimal regret upper bound matching the lower bound. 
Thus, we wonder:
\begin{center}
    Q2. \textit{Is it possible to design an optimal $\epsilon$-global DP bandit algorithm without applying \emph{forgetting}?}
\end{center}

\paragraph{Aim and Contributions.} To address these questions, we revisit regret minimisation for Bernoulli bandits under $\epsilon$-global DP. Our main goal is to provide matching regret upper and lower bounds \emph{up to the same constant}. Answering this question leads to the following contributions:

1. \textit{Tighter regret lower bound}: In Theorem~\ref{thm:low_bound}, we provide a new asymptotic regret lower bound for any consistent $\epsilon$-global DP policy. This result is a strict improvement over the lower bound of~\cite{azize2022privacy} for all $\epsilon$.
This lower bound depends on a new information-theoretic quantity $\mathrm{d}_\eps$ (Eq.~\eqref{eq:d_eps}) interpolating smoothly between  KL and TV depending on $\epsilon$. 
This quantity also indicates a smooth transition between high and low privacy regimes, where the impact of DP does and does not appear, respectively. 
In addition to the existing techniques, our proof applies a new ``double change" of environment idea to couple the impacts of DP and bandit feedback (Lemma~\ref{lem:doub_env}).

2. \textit{Tighter concentration inequality}: In Proposition~\ref{prop:conc}, we provide a DP version of Chernoff-style concentration bound for sum of Bernoullis with added Laplace noise. $\mathrm{d}_\eps$ naturally appears in this bound. Also, the bound suggests that as long as the number of summed Laplace noise is negligible compared to the number of summed Bernoullis, the effect of the noise is comparable to having one Laplace noise in the dominant term of the bound. This bound is universally interesting for DP literature as the concentrations of random variables and Laplace noises are commonly treated separately unlike the coupled treatment in Proposition~\ref{prop:conc}.

3. \textit{Algorithm design and tighter regret upper bounds}: Based on the concentration bound of Proposition~\ref{prop:conc}, we modify the generic blueprint used by~\citet{dpseOrSheffet, azize2022privacy, hu2022near}. We (a) get rid of ``reward-forgetting" and thus summing all rewards at each phase, and (b) develop new private indexes using $\mathrm{d}_\eps$. We also run the algorithms in geometrically increasing arm-dependent batches, with ratio $\alpha >1$.
We instantiate these modifications for two algorithms that achieve constant optimal regrets withour privacy, \ie, KL-UCB and IMED, to propose \dpklucb{} and \dpimed{} (Algorithm~\ref{alg_DP}). We analyse the regret of both algorithms (Theorem~\ref{thm:upp_bound}) and show that their regret upper bounds match asymptotically the regret lower bound of Theorem~\ref{thm:low_bound} up to the constant $\alpha$, which can be set arbitrarily close to $1$.

We also validate experimentally that our algorithms \dpimed{} and~\dpklucb{} achieve the lowest regret among DP bandit algorithms in the literature. Finally, in Appendix~\ref{app:adap_cont}, we extend the adaptive continual release model of~\cite{jain2023price} to bandits and show that this definition is equivalent to the classic $\epsilon$-global DP notion adopted in the DP bandit literature~\citep{Mishra2015NearlyOD, azize2022privacy, azizeconcentrated}. This result can be of independent interest.

\section{Background}\label{sec:backgr}
In this section, we formalise the essential components of our work, \ie, the stochastic bandit problem, regret minimisation as a utility measure, and Differential Privacy (DP) as the privacy constraint.

\paragraph{Stochastic Bandits.} A stochastic bandit problem is a sequential game between a policy $\pi$ and a stochastic environment $\nu$~\citep{thompson1933likelihood,lai1985asymptotically}.  
The game is played over $T$ rounds, where $T \in \{1, 2, \dots \}$ is a natural number called the \textit{horizon}. At each step $t \in \{1, \dots, T\}$, the policy $\pi$ chooses an action $a_t \in [K]$. The stochastic environment, which is a collection of distributions $\nu \defn (P_a : a \in [K])$, samples a reward $r_t \sim P_{a_t}$ and reveals it to the policy $\pi$.  The interaction between
the policy $\pi$ and environment $\nu \defn (P_a : a \in [K])$ over $T$ steps induces a probability measure on the
sequence of outcomes $H_T \defn (a_1, r_1, a_2, r_2, \dots , a_T, r_T)$. Let each $P_a$ be a probability measure
on $(\real, \mathcal{B}(\real))$ with $\mathfrak{B}$ being the Borel set. For each $t \in [\horizon]$, let $\Omega_t = ([\arms]\times \real)^t \subset \real^{2t}$ and $\mathcal{F}_t = \mathfrak{B}(\Omega_t)$. First, we formalise the definition of a policy.

\begin{definition}[Policy]\label{def:pol}
    A policy $\pol$ is a sequence $(\pol_t)_{t=1}^{\horizon}$ , where $\pol_t$ is a probability kernel from $(\Omega_t , \mathcal{F}_t)$ to $([\arms], 2^{[\arms]} )$. Since $[\arms]$ is discrete, we adopt the convention that for $a \in [\arms]$, $\pol_t (a \mid  a_1 , r_1 , \dots , a_{t - 1} , r_{t - 1} ) = \pol_t (\{a\} \mid a_1 , r_1 , \dots , a_{t - 1} , r_{t - 1} )~.$
\end{definition}

The interaction probability measure on $(\Omega_T , \mathcal{F}_T)$ depends on the environment and the policy:
(a) the conditional distribution of action $a_t$ given $ a_1 , r_1 , \dots , a_{t - 1} , r_{t - 1}$ is $\pi(a_t \mid H_{t-1})$, and (b) the conditional distribution of reward $r_t$ given $ a_1 , r_1 , \dots , a_{t - 1} , r_{t - 1}, a_t$ is $P_{a_t}$. To construct the probability measure, let $\lambda$ be a $\sigma$-finite measure on $(\real, \mathcal{B}(\real))$ for which $P_a$ is absolutely continuous with respect to $\lambda$ for all $a \in [\arms]$. Let $p_a = dP_a/d\lambda$ be the Radon–Nikodym derivative of $P_a$ with respect to $\lambda$. Letting $\rho$ be the counting measure with $\rho(B) = |B|$, the density $p_{\model \pol} : \Omega_\horizon \rightarrow \real$ can now be defined with respect to the product measure $(\rho \times \lambda)^\horizon$ by
\begin{equation}
     p_{\model \pol}(a_1 , r_1 , \dots , a_\horizon , r_\horizon ) \deffn \prod_{t=1}^\horizon \pol_t(a_t \mid a_1 , r_1 , \dots , a_{t-1} , r_{t-1} ) p_{a_t} (r_t)
\end{equation}
and $\mathbb{P}_{\model \pol} (B) \deffn \int_B p_{\model \pol}(\omega) (\rho \times \lambda)^\horizon (\dd \omega)$ for all $B \in \mathcal{F}_\horizon$. So $(\Omega_\horizon, \mathcal{F}_\horizon,  \mathbb{P}_{\model \pol})$ is a probability space over histories induced by the interaction between $\pol$ and $\model$.

\paragraph{Regret minimisation.} We study regret minimisation as the utility measure~\citep{lai1985asymptotically}. Informally, the regret of a policy is the deficit suffered by the learner relative to the optimal policy which knows the environment and always plays the optimal arm. Let $\model = (P_a : a \in [\arms])$ a bandit instance and define $\mu_a(\nu) = \int_{- \infty}^\infty x \dd P_a(x)$ the mean of arm $a$'s reward distribution. We assume throughout that $\mu_a(\nu)$ exists and is finite for all actions. Let $\mu^\star(\nu) = \max_{a \in [\arms]} \  \mu_a(\nu)$ the largest mean among all the arms. The regret of policy $\pol$ on bandit instance $\nu$ is
\begin{equation}\label{eq:DefReg}
    \reg_{\horizon}(\pi, \nu) \deffn \horizon \mu^\star(\nu)-\expect_{\model \pol}\left[\sum_{t=1}^{\horizon} r_{t}\right] = \sum_{a = 1}^\arms \Delta_{a}(\nu) \mathbb{E}_{\model \pol}\left[\pulls_{a}(\horizon)\right].
\end{equation}
where  $\pulls_{a}(\horizon) \deffn \sum_{t=1}^{\horizon} \ind{a_{t}=a} $ and $\Delta_{a}(\nu) \deffn \mu^\star(\nu) - \mu_a(\nu)$.
The expectation is taken with respect to the probability measure $\mathbb{P}_{\model \pol}$ on action-reward sequences induced by the interaction of $\pol$ and $\model$. 
Hereafter, we drop the dependence on $\nu$ when the context is clear.

For many classes of bandits, it is possible to define a notion of instance-dependent optimality that characterises the hardness of regret minimisation. Specifically, for any consistent policy $\pi$ over a class of bandits $\mathcal{E} \deffn \mathcal{M}_1 \times \dots \times \mathcal{M}_\arms$, \ie, a policy $\pi \in \Pi_\mathrm{cons}(\mathcal{E})$ verifies $ \lim_{T \rightarrow \infty} \frac{\reg_T(\pol, \nu)}{T^p} = 0$ for all $\nu \in \mathcal{E}$ and all $p > 0$,
then the regret of $\pi$ on any environment $\nu \in \mathcal{E}$ is lower bounded by
\begin{equation}\label{eq:prb_dep_low}
        \liminf_{T \rightarrow \infty} \ \frac{\reg_T(\pi, \nu)}{\log(T)} \geq \sum_{a: \Delta_a(\nu) > 0} \frac{\Delta_a(\nu)}{ \mathrm{KL}_\mathrm{inf} (P_a, \mu^\star, \mathcal{M}_a)},
\end{equation}
where $
  \mathrm{KL}_\mathrm{inf} (P, \mu^\star, \mathcal{M}) \deffn \inf_{P' \in \mathcal{M}} \left\{ \mathrm{KL}(P,P'): \mu(P') > \mu^\star \right\} $,
  and $\mathrm{KL}$ is the Kullback-Leibler divergence, \ie, for two probability distributions $P, Q$ on $(\Omega, \mathcal{F})$, the KL divergence is $\mathrm{KL}(P, Q) \triangleq \int \log\left(\frac{\dd P}{\dd Q} (\omega) \right) \dd P (\omega)$ when $P \ll Q$, and $+\infty$ otherwise. The lower bound of Equation~\eqref{eq:prb_dep_low} is tight for many classes of bandits, and the ``KL-inf'' is a fundamental quantity that characterises the complexity of regret minimisation in bandits. 


\paragraph{Bernoulli bandits.} A Bernoulli bandit is a stochastic environment where the distribution of each arm follows a Bernoulli distribution. Let $\mu \in [0,1]^K$, then $\nu^\mathcal{B}_\mu = (\textrm{Bernoulli}(\mu_a) : a \in [K])$ is a Bernoulli environment. For Bernoulli bandits, $ \mathrm{KL}_\mathrm{inf} (P_a, \mu^\star, \mathcal{M}_a) = \mathrm{kl}(\mu_a, \mu^\star)$, where $\mathrm{kl}$ is the relative entropy between Bernoullis, \ie, $\mathrm{kl}(p,q) \defn p \log(p/q) + (1 - p) \log((1 - p)/(1 - q))$
for $p,q \in [0,1]$ and singularities are defined by taking limits. Using the ``optimism in the face of uncertainty" principle, it is possible to design algorithms tailored for Bernoulli bandits, such as KL-UCB~\citep{KLUCBJournal} or IMED~\citep{honda2015non}, that achieve the lower bound of Equation~\eqref{eq:prb_dep_low} asymptotically, \emph{up to the same constant}. 

\paragraph{Differential Privacy (DP).} 
DP~\citep{dwork2014algorithmic} guarantees that any sequence of algorithm outputs is ``essentially'' equally likely to occur, regardless of the presence or absence of any individual. The probabilities are taken over random choices made by the algorithm, and ``essentially" is captured by closeness parameters that we call privacy budgets. Formally, DP is a constraint on the class of mechanisms, where a mechanism $\mech$ is a randomised algorithm that takes as input a dataset $D \deffn \{ x_1, \dots, x_T\} \in \mathcal{X}^T$ and outputs $o \sim \mech_D$. The probability space is over the coin flips of the mechanism $\mech$. Given some event $E$ in the output space $(\mathcal{O}, \cF)$, we note $\mech_D(E) \deffn \mech(E | D)$ the probability of observing the event $E$ given that the input of the mechanism is $D$.
\begin{definition}[$\epsilon$-DP~\citep{Dwork_Calibration}]\label{Def_DP}
    A mechanism $\mech$ satisfies $\epsilon$-DP for a given $\epsilon \geq 0$, 
         if 
        \begin{align}\label{eq:DP}
            \forall D \sim D', \; \forall E \in \mathcal{O} , \; \mech_{D}(E) \leq e^\epsilon\mech_{D'}(E),
        \end{align}
    where $D \sim D'$ if and only if $\dham(D, D') \deffn \sum_{t=1}^T \ind{D_t \neq D'_t} \leq 1$, \ie, $D$ and $D'$ differ by at most one record, and are said to be neighbouring datasets.
\end{definition}

DP is widely adopted as a privacy framework since the definition enjoys different interesting properties, and can be achieved by combining simple basic mechanisms. Hereafter, we mainly use two important DP properties: post-processing (Proposition~\ref{prp:post_proc}) and group privacy (Proposition~\ref{prp:grp_priv}), and we use the Laplace mechanism (Theorem~\ref{thm:laplace}) to achieve DP.


\paragraph{Bandits under DP.} We extend DP to bandits by reducing a policy $\pi = (\pi_1, \dots, \pi_T)$ to a ``batch'' mechanism $\mathcal{M}^\pi$~\citep{azizeconcentrated}. Different ways of reducing a policy to a batch mechanism differ on the input representation and the nature of the mechanism. 

(a) In Table DP, we represent each user $u_t$ by the vector $x_t \deffn (x_{t, 1}, \dots, x_{t, K}) \in \real^K$ of all its $K$ ``potential rewards." This is the vector of potential rewards since the policy only observes $r_t \deffn x_{t, a_t}$ when it recommends action $a_t$. In Table DP, the induced ``batch" mechanism $\mech^\pi$ from the policy $\pi$ takes as input a table of rewards $\textbf{x} \triangleq \lbrace(x_{t, i})_{i \in [\arms]} \rbrace_{t \in [\horizon]}  \in (\real^\arms)^\horizon$, and outputs a sequence of actions $\textbf{a} \defn (a_1, \dots, a_T) \in [K]^T$ with probability 
$\mech^\pol_\textbf{x}(\textbf{a}) \deffn \prod_{t = 1}^\horizon \pol_t\left(a_t | a_1, x_{1, a_1}, \dots a_{t-1}, x_{t -1, a_{t-1}}\right).$
This is the probability of observing $(a_1, \dots, a_T)$ when $\pi$ interacts with the table of rewards $\textbf{x}$. $\mech^\pol_\textbf{x}$ is a distribution over sequences of actions since $\sum_{\mathbf{a} \in [K]^T} \mech^\pol_\textbf{x}(\mathbf{a}) = 1$.

(b) In View DP, the induced ``batch" mechanism from the policy $\pi$ takes as input a list of rewards and outputs a sequence of actions. The difference is in the representation of the input dataset. Since in bandits, the policy only observes the reward corresponding to the action chosen, another natural choice for the input is a list of rewards, \ie, $\mathbf{r} \deffn \{r_1, \dots, r_T \} \in \real^T$. Thus, now the induced ``batch" mechanism $\mathcal{V}^\pol$ from the policy $\pi$ takes as input a list of rewards $\mathbf{r} \deffn \{r_1, \dots, r_T \} \in \real^T$, and outputs a sequence of actions $\mathbf{a} \deffn (a_1, \dots, a_T) \in [K]^T$, with probability
$\mathcal{V}^\pol_{\mathbf{r}}(\mathbf{a}) \defn \prod_{t = 1}^\horizon \pol_t(a_t | a_1, r_1, \dots a_{t-1},r_{t-1}).$
This is the probability of observing $\textbf{a}$ when $\pi$ interacts with $\textbf{r}$. $\mathcal{V}^\pol_\mathbf{r}$ is a distribution over sequences of actions, since $\sum_{\mathbf{a} \in [K]^T} \mathcal{V}^\pol_{\mathbf{r}}(\mathbf{a}) = 1$. 

\begin{definition}[Table DP and View DP~\citep{azizeconcentrated}]\label{def:table_view_dp} (a)  A policy $\pol$ satisfies $\epsilon$-Table DP if and only if $\mathcal{M}^\pol$ is $\epsilon$-DP. (b) A policy $\pol$ satisfies $\epsilon$-View DP if and only if $\mathcal{V}^\pol$ is $\epsilon$-DP.
\end{definition}\vspace{-0.5em}

Table DP and View DP have been formalised in~\citet{azizeconcentrated}, but have been used interchangeably in the private bandit literature, e.g. Table DP in~\citet{guha2013nearly, Mishra2015NearlyOD, neel2018mitigating} and View DP in~\citet{dpseOrSheffet, lazyUcb, azize2022privacy}. For $\epsilon$-pure, these two definitions are equivalent.

\begin{proposition}[$\epsilon$-global DP, Proposition 1 in~\citet{azizeconcentrated}]\label{prop:tab_view}
 For any policy $\pi$, we have that: 
    $\pi$ is $\epsilon$-Table DP $\Leftrightarrow$ $\pi$ is $\epsilon$-View DP. 
\end{proposition}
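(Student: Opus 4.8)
The plan is to prove both implications through a single device: for each fixed action sequence $\mathbf{a} \in [K]^T$, identify the Table-DP mechanism $\mathcal{M}^\pi$ and the View-DP mechanism $\mathcal{V}^\pi$ via the ``diagonal'' correspondence $r_t \leftrightarrow x_{t,a_t}$, transfer the $\epsilon$-inequality across this correspondence, and then pass from point masses to arbitrary events by additivity. The first step I would record is the elementary reduction that, because the output space $[K]^T$ is finite, a mechanism with this output space is $\epsilon$-DP if and only if $\mathcal{M}_D(\{\mathbf{a}\}) \le e^\epsilon \mathcal{M}_{D'}(\{\mathbf{a}\})$ for every neighbouring pair $D \sim D'$ and every $\mathbf{a}$; summing such inequalities over $\mathbf{a} \in E$ recovers Definition~\ref{Def_DP}. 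So it is enough to compare the point masses $\mathcal{M}^\pi_{\mathbf{x}}(\mathbf{a})$ and $\mathcal{V}^\pi_{\mathbf{r}}(\mathbf{a})$.

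The key structural observation is that, for a fixed $\mathbf{a}$, the product defining $\mathcal{M}^\pi_{\mathbf{x}}(\mathbf{a})$ only ever reads the entries $x_{t,a_t}$ of the table, never the off-diagonal ones; hence setting $r_t \defn x_{t,a_t}$ gives $\mathcal{M}^\pi_{\mathbf{x}}(\mathbf{a}) = \mathcal{V}^\pi_{\mathbf{r}}(\mathbf{a})$ factor by factor. For the direction View DP $\Rightarrow$ Table DP: given neighbouring tables $\mathbf{x} \sim \mathbf{x}'$ differing only in row $t_0$, the diagonals $\mathbf{r},\mathbf{r}'$ satisfy $r_t = r'_t$ for all $t \neq t_0$, so $\mathbf{r} \sim \mathbf{r}'$; applying the View-DP guarantee to $\{\mathbf{a}\}$ and translating back yields $\mathcal{M}^\pi_{\mathbf{x}}(\mathbf{a}) \le e^\epsilon \mathcal{M}^\pi_{\mathbf{x}'}(\mathbf{a})$. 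For Table DP $\Rightarrow$ View DP: given neighbouring reward lists $\mathbf{r} \sim \mathbf{r}'$ differing only at $t_0$, I would build tables $\mathbf{x},\mathbf{x}'$ by placing $r_t$ (resp. $r'_t$) in the $a_t$-th slot of row $t$ and filling every other entry with one common fixed value; then $\mathbf{x}$ and $\mathbf{x}'$ agree on every row except $t_0$, so $\mathbf{x} \sim \mathbf{x}'$, and the Table-DP guarantee on $\{\mathbf{a}\}$ transfers back to $\mathcal{V}^\pi_{\mathbf{r}}(\mathbf{a}) \le e^\epsilon \mathcal{V}^\pi_{\mathbf{r}'}(\mathbf{a})$. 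In both cases, summing the per-$\mathbf{a}$ inequalities over an event $E$ completes the argument.

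The only points that need care — and where I would be most precise in the write-up — are that the table $\mathbf{x}$ (respectively the reward list $\mathbf{r}$) produced by the reduction depends on the output $\mathbf{a}$, so the $\epsilon$-inequality must be established at the level of individual outputs \emph{before} it is summed into events; and that the neighbouring relation is genuinely preserved by the diagonal map, which uses the facts that ``differing in at most one record'' concerns differing rows/coordinates and that the off-diagonal padding is chosen identically on the two sides. There is no analytic difficulty: the equivalence is ultimately a bookkeeping statement about which coordinates of the input the kernel product actually consults, which is exactly the single diagonal slot $(t, a_t)$.
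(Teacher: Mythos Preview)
Your proof is correct. The paper does not prove Proposition~\ref{prop:tab_view} directly in the main text (it is cited from prior work), but it does establish the equivalence in Appendix~\ref{app:adap_cont} as part of the stronger Proposition~\ref{prop:adap_table_view}, which adds a third equivalent notion --- $\epsilon$-DP in the adaptive continual release model --- and closes the cycle View DP $\Rightarrow$ adaptive continual release DP $\Rightarrow$ Table DP, with Table DP $\Rightarrow$ View DP left implicit.

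Your route is the direct one: you avoid the detour through the adaptive continual release model and instead exhibit, for each fixed action sequence $\mathbf{a}$, explicit maps between tables and lists that preserve both the point-mass value $\mathcal{M}^\pi_{\mathbf{x}}(\mathbf{a})=\mathcal{V}^\pi_{\mathbf{r}}(\mathbf{a})$ and the neighbouring relation. This is more elementary and self-contained. The paper's approach buys a stronger conclusion (equivalence with the interactive definition of~\citet{jain2023price}), and its argument for the harder direction View DP $\Rightarrow$ Table DP goes via Remark~\ref{rem:int_cont} and Reduction~\ref{red:table_to_adv} rather than your diagonal-extraction map --- though the underlying mechanism is the same: both proofs rely on the fact that for pure $\epsilon$-DP it suffices to check atomic outputs $\mathbf{a}$, so the input constructed in the reduction may depend on $\mathbf{a}$ (the paper explicitly notes this is where approximate DP would break). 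One minor simplification available in your Table DP $\Rightarrow$ View DP direction: instead of placing $r_t$ only in column $a_t$ and padding the rest, set $x_{t,i}=r_t$ for \emph{all} $i$; then the table no longer depends on $\mathbf{a}$, and that implication holds event-wise without summing over atoms.
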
\vspace{-0.5em}

Thus, we refer to any policy that verifies $\epsilon$-Table DP or $\epsilon$-View DP as an \textbf{$\epsilon$-global DP} policy. In Appendix~\ref{app:adap_cont}, we also extend the interactive DP definition of~\citet{jain2023price} to bandits and show that $\epsilon$-global DP is equivalent to it. In the following, our main goal is to design an $\epsilon$-global DP policy that minimises the regret $\reg_T(\pi, \nu)$ on any Bernoulli environment $\nu$.



 




\section{Regret Lower Bound under $\epsilon$-global DP}\label{sec:lower_bound}
In this section, we present a new regret lower bound for Bernoulli bandits under $\epsilon$-global DP. We compare this result to the lower bound of~\cite{azize2022privacy}, and provide a proof.

\begin{theorem}[Regret lower bound under $\epsilon$-global DP]\label{thm:low_bound}
For every $\epsilon$-global DP consistent policy over the class of Bernoulli bandits, we have
\begin{equation}\label{eq:reg_low}
    \liminf _{T \rightarrow \infty}\ \frac{\mathrm{Reg}_{T}(\pi, \nu)}{\log (T)} \geq \sum_{a: \Delta_{a}>0} \frac{\Delta_a}{\textrm{d}_\epsilon \left(\mu_a, \mu^\star\right)},
\end{equation}
where 
\begin{equation}\label{eq:d_eps}
    \mathrm{d}_\epsilon(x, y) \defn \inf_{z \in [x \wedge y , x \vee y]}\left \{\epsilon \abs{z - x} + \mathrm{kl}(z, y)\right\}~,\quad x\in \mathbb{R},\,y\in[0,1].
\end{equation}

\end{theorem}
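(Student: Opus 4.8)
I would follow the classical change-of-measure scheme for asymptotic bandit lower bounds --- the canonical bandit model and the divergence-decomposition lemma, the Bretagnolle--Huber inequality, and the consistency assumption --- but replace the single change of environment by a \emph{two-hop} argument routed through an intermediate mean, which is exactly what produces the infimum defining $\mathrm{d}_\epsilon$. Fix a consistent $\epsilon$-global DP policy $\pi$, a Bernoulli instance $\nu$ with means $\mu$, and a suboptimal arm $a$ (so $\mu_a < \mu^\star$). For a parameter $z \in [\mu_a,\mu^\star]$ and a small $\lambda>0$, introduce two alternative Bernoulli instances that agree with $\nu$ on every arm except $a$: the \emph{intermediate} instance $\tilde\nu$, in which arm $a$ has mean $z$, and the \emph{confusing} instance $\nu'$, in which arm $a$ has mean $\mu^\star+\lambda$, so that $a$ is the unique optimal arm in $\nu'$ (the boundary case $\mu^\star=1$ is handled separately).

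The heart of the argument is a ``double change of environment'' lemma that upper bounds the relative entropy between the two history measures by routing through $\tilde\nu$:
\[
\mathrm{KL}\left(\mathbb{P}_{\nu\pi}\,\|\,\mathbb{P}_{\nu'\pi}\right)\;\le\;\mathbb{E}_{\nu\pi}\left[N_a(T)\right]\left(\epsilon\,\abs{z-\mu_a}\;+\;\mathrm{kl}(z,\mu^\star+\lambda)\right),\qquad z\in[\mu_a,\mu^\star].
\]
The two summands come from the two hops. For the hop $\tilde\nu\to\nu'$ I use the standard divergence decomposition: since these instances differ only in arm $a$, each pull of $a$ contributes $\mathrm{kl}(z,\mu^\star+\lambda)$. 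For the hop $\nu\to\tilde\nu$ I couple the reward streams of arm $a$ by the optimal coupling of $\mathrm{Bernoulli}(\mu_a)$ and $\mathrm{Bernoulli}(z)$, under which a given pull of $a$ produces discrepant rewards with probability $\abs{z-\mu_a}$; conditionally on such a discrepancy, the $\epsilon$-DP (equivalently Table/View DP) property of $\pi$, applied through group privacy to that single differing record, forces the law of all subsequent actions to move by at most a factor $e^\epsilon$, hence to contribute at most $\epsilon$ to the relative entropy (using $\mathrm{KL}(P\|Q)\le\epsilon$ whenever $\mathrm{d}P/\mathrm{d}Q\le e^\epsilon$ pointwise). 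Accounting for these per-pull contributions yields the first summand $\epsilon\,\abs{z-\mu_a}\,\mathbb{E}_{\nu\pi}[N_a(T)]$.

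Given this lemma, the rest is routine. Apply Bretagnolle--Huber to the event $\{N_a(T)\le T/2\}$: consistency of $\pi$ forces $\mathbb{E}_{\nu\pi}[N_a(T)]=o(T^p)$ and $\mathbb{E}_{\nu'\pi}[T-N_a(T)]=o(T^p)$ for every $p>0$, so by Markov both relevant tail probabilities are $o(T^{p-1})$, giving $\mathrm{KL}(\mathbb{P}_{\nu\pi}\|\mathbb{P}_{\nu'\pi})\ge(1-o(1))\log T$. Combining with the lemma, dividing by $\log T$ and taking $\liminf_{T\to\infty}$, we obtain for every admissible $z$ and $\lambda$
\[
\liminf_{T\to\infty}\frac{\mathbb{E}_{\nu\pi}[N_a(T)]}{\log T}\;\ge\;\frac{1}{\epsilon\,\abs{z-\mu_a}+\mathrm{kl}(z,\mu^\star+\lambda)}.
\]
Since the left-hand side does not depend on $z$ or $\lambda$, we may optimise the right-hand side: letting $\lambda\downarrow 0$ (by continuity of $\mathrm{kl}(z,\cdot)$) and taking the supremum over $z\in[\mu_a,\mu^\star]$ turns the denominator into $\inf_{z\in[\mu_a,\mu^\star]}\{\epsilon\,\abs{z-\mu_a}+\mathrm{kl}(z,\mu^\star)\}=\mathrm{d}_\epsilon(\mu_a,\mu^\star)$. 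Plugging this into $\reg_T(\pi,\nu)=\sum_a\Delta_a\,\mathbb{E}_{\nu\pi}[N_a(T)]$ and summing over the suboptimal arms yields the claimed bound.

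The main obstacle is making the $\nu\to\tilde\nu$ hop fully rigorous: group privacy is a \emph{multiplicative} guarantee whose exponent --- the number of discrepant \emph{pulls} of arm $a$ --- is a random quantity entangled with the realised trajectory, so one cannot naively take expectations, and a crude coupling of the whole potential-reward tables changes $\Theta(T)$ records rather than $\Theta(\mathbb{E}_{\nu\pi}[N_a(T)])$. Executing this cleanly requires a careful coupling that follows the two processes until their first ``relevant'' disagreement, together with a chain-rule/joint-convexity bookkeeping of the relative entropy in which the $e^\epsilon$ factor is spent one record at a time; reconciling this with the $\mathbb{E}_{\nu\pi}[N_a(T)]$ prefactor is the delicate step. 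Secondary, minor points are the boundary cases ($\mu^\star$ near $1$, or the infimum defining $\mathrm{d}_\epsilon$ not attained) and checking that $\liminf_T$, $\sup_z$ and $\lambda\downarrow 0$ may be exchanged --- which they may, since the $\liminf$ is a single number dominating each member of the parametrised family.
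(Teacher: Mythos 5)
There is a genuine gap, and it sits exactly where you place your key lemma. As stated — for ``the two history measures'' — the claimed inequality $\mathrm{KL}(\mathbb{P}_{\nu\pi}\|\mathbb{P}_{\nu'\pi})\le \mathbb{E}_{\nu\pi}[N_a(T)]\,(\epsilon|z-\mu_a|+\mathrm{kl}(z,\mu^\star+\lambda))$ is false: the standard divergence decomposition gives the \emph{identity} $\mathrm{KL}(\mathbb{P}_{\nu\pi}\|\mathbb{P}_{\nu'\pi})=\mathbb{E}_{\nu\pi}[N_a(T)]\,\mathrm{kl}(\mu_a,\mu^\star+\lambda)$ for full histories (the policy kernels cancel), and in the high-privacy regime the right-hand side of your lemma, optimised over $z$, equals $\mathbb{E}_{\nu\pi}[N_a(T)]\,\mathrm{d}_\epsilon(\mu_a,\mu^\star+\lambda)$, which is strictly smaller. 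Privacy cannot shrink the history KL; it can only constrain the law of the \emph{action sequence} (the view). So the lemma must be restated for the pushforward to $(a_1,\dots,a_T)$ — and then your Bretagnolle--Huber step still applies since $\{N_a(T)\le T/2\}$ is action-measurable — but even in that form your proof sketch does not establish it. Group privacy is a multiplicative guarantee with a \emph{random} exponent (the number of pulls of arm $a$ at which the coupled rewards disagree), so the per-discrepancy ``contributes at most $\epsilon$'' accounting cannot simply be averaged: the typical Hamming distance is about $\mathrm{TV}(P_a,\tilde P_a)\,N_a(T)$, but on the atypical event the log-ratio can be as large as $\epsilon N_a(T)\le \epsilon T$, and controlling its expected contribution is not routine (the atypical probability decays only in the number of pulls, which is of order $\log T$ at the relevant scale). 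You flag this as ``the delicate step'' but do not resolve it, and it is the entire content of the result.

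The paper avoids this obstacle by never bounding a KL between induced measures. It works on a coupled environment (maximal coupling of $P_a$ and the intermediate $\tilde P_a$) and bounds the \emph{probability} of the low-pull event $\Omega=\{N_2(T)\le n_2\}$ intersected with two typical-behaviour events: $L$, on which the Hamming distance of the coupled reward streams is at most $(1+\alpha)n_2\,\mathrm{TV}(P_2,P_2')$, and $A$, on which the Lai--Robbins log-likelihood ratio toward the confusing instance is at most $(1+\alpha)n_2\,\mathrm{kl}(\mu_2',\mu_2'')$. On $\Omega\cap L\cap A$ both the group-privacy exponent and the likelihood ratio are deterministically controlled, so the double change of measure (privacy/TV hop, then KL hop) goes through with the factor $e^{(1+\alpha)n_2(\epsilon\mathrm{TV}+\mathrm{kl})}$ (Lemma~\ref{lem:doub_env}); the complements $\Omega\cap L\cap A^c$ and $\Omega\cap L^c$ are shown to have vanishing probability by a maximal Hoeffding inequality (Lemmas~\ref{lem:p_omega_L_A_c} and~\ref{lem:p_omega_L_c}); consistency in the confusing environment and Markov's inequality, with $n_2\approx(1-\alpha)\log T/(\epsilon\mathrm{TV}+\mathrm{kl})$, then give the bound, and one optimises over the intermediate mean at the end. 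To repair your argument you would either have to prove the action-law KL bound rigorously (handling the atypical-event contribution), or switch, as the paper does, to this event-probability change of measure on the coupled space, where the restriction to $\Omega\cap L\cap A$ replaces the problematic expectation of an exponential group-privacy factor.
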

For any suboptimal arm $a$, $\mu^\star > \mu_a$ and $\mathrm{d}_\epsilon(\mu_a, \mu^\star) = \inf_{\mu \in [\mu_a, \mu^\star]}\left \{\epsilon (\mu - \mu_a) + \mathrm{kl}(\mu, \mu^\star)\right\}$.

\paragraph{Implications of Theorem~\ref{thm:low_bound}.}\,

(a) Theorem~\ref{thm:low_bound} improves the lower bound of~\cite{azize2022privacy}.
Specifically, Theorem 3 in~\cite{azize2022privacy}, adapted to Bernoulli bandits, gives a lower bound of
\begin{equation}\label{eq:reg_low_22}
    \liminf _{T \rightarrow \infty}\  \frac{\mathrm{Reg}_{T}(\pi, \nu)}{\log (T)} \geq \sum_{a: \Delta_{a}>0} \frac{\Delta_a}{\min \{\mathrm{kl}(\mu_a, \mu^\star), 6 \epsilon \Delta_a \}}~.
\end{equation}
Theorem~\ref{thm:low_bound} is a strict improvement on the lower bound of~\cite{azize2022privacy} since $\textrm{d}_\epsilon \left(\mu_a, \mu^\star\right) \leq \min \{\mathrm{kl}(\mu_a, \mu^\star), \epsilon \Delta_a \} \leq  \min \{\mathrm{kl}(\mu_a, \mu^\star), 6 \epsilon \Delta_a \}$, for any $\epsilon, \mu_a$ and $\mu^\star$.

(b) Solving the constrained optimisation problem defining $\mathrm{d}_\epsilon$ for Bernoulli variables gives
\begin{equation}
\mathrm{d}_\epsilon(\mu_a,\mu^\star)=\left\{\begin{aligned}
    &\mathrm{kl}\left(\mu_a,\mu^\star\right) \quad  \text{if} \quad \epsilon \geq \log\frac{\mu^\star}{\mu_a} + \log \frac{1 - \mu_a}{1 - \mu^\star}\\
        &\mathrm{kl}\left(\frac{\mu^\star}{\mu^\star + (1 - \mu^\star)e^{\epsilon}}, \mu^\star\right)+ \epsilon \left( \frac{\mu^\star}{\mu^\star + (1 - \mu^\star)e^{\epsilon}}-\mu_a\right)   \quad \text{if not} 
    \end{aligned}\right.
\end{equation}
This suggests the existence of two privacy regimes: a low privacy regime when $\epsilon \geq \log\frac{\mu^\star}{\mu_a} + \log \frac{1 - \mu_a}{1 - \mu^\star}$, and a high privacy regime when  $\epsilon \leq \log\frac{\mu^\star}{\mu_a} + \log \frac{1 - \mu_a}{1 - \mu^\star}$. In the low privacy regime, $\mathrm{d}_\epsilon(\mu_a,\mu^\star)$ just reduces to the non-private $\mathrm{kl}\left(\mu_a,\mu^\star\right)$, and privacy can be achieved for \emph{free}. In the high privacy regime, $\mathrm{d}_\epsilon(\mu_a,\mu^\star)$ can be written as the sum of two terms, \ie, a KL term between Bernoullis with means $\frac{\mu^\star}{\mu^\star + (1 - \mu^\star)e^{\epsilon}}$ and $ \mu^\star$, and TV distance between Bernoullis with means $\frac{\mu^\star}{\mu^\star + (1 - \mu^\star)e^{\epsilon}}$ and $ \mu_a$. At the limit, we have that $d_\eps(\mu_a, \mu^\star) \sim_{\epsilon \rightarrow 0} \eps \times \Delta_a$.

(c) Theorem~\ref{thm:low_bound} can be generalised beyond Bernoulli bandits: for a class $\mathcal{E}$ of unstructured stochastic bandits, \ie,  $\mathcal{E} \defn \mathcal{M}_1 \times \dots \times \mathcal{M}_K$, the lower bound becomes
\begin{equation}
    \liminf _{T \rightarrow \infty}\  \frac{\mathrm{Reg}_{T}(\pi, \nu)}{\log (T)} \geq \sum_{a: \Delta_{a}>0} \frac{\Delta_a}{\textrm{d}_\textrm{inf}\left(P_{a}, \mu^\star, \mathcal{M}_a, \epsilon\right)},
\end{equation}
where $\textrm{d}_\textrm{inf}\left(P_{a}, \mu^\star, \mathcal{M}_a, \epsilon\right) \triangleq \inf_{P' \in \mathcal{M}_a} \left\{\textrm{d}_\epsilon^{\mathcal{M}_a}(P_a, P') :  \mu(P') > \mu^\star \right\},$ and 
\[
\textrm{d}_\epsilon^{\mathcal{M}_a}(P_a, P') \triangleq \inf_{Q \in \mathcal{M}_a} \{ \epsilon  \textrm{TV}(P_a, Q) + \textrm{KL}(Q, P'):  \mu(P_a) \leq \mu(Q) \leq \mu(P')\},
\]
for $P_a, P' \in \mathcal{M}_a$ such that $\mu(P_a) \leq \mu(P')$.


Before providing the proof, we introduce maximal couplings.
\begin{definition}[Maximal Couplings]\label{def:max_coup} Let $\mathbb{P}$ and $\mathbb{Q}$ be two probability distributions that share the same $\sigma$-algebra and $\Pi(\mathbb{P}, \mathbb{Q})$ be the set of all couplings between $\mathbb{P}$ and $\mathbb{Q}$. We denote by $c_\infty(\mathbb{P}, \mathbb{Q})$ the maximal coupling between $\mathbb{P}$ and $\mathbb{Q}$, \ie, the coupling that verifies for any measurable $A$, 
\begin{align*}
    \mathbb{P}_{(X,Y) \sim c_\infty(\mathbb{P}, \mathbb{Q})} [ X \in A] &= \mathbb{P}_{X \sim \mathbb{P}} [ X \in A], 
    \mathbb{P}_{(X,Y) \sim c_\infty(\mathbb{P}, \mathbb{Q})} [ Y \in A] = \mathbb{P}_{Y \sim \mathbb{Q}} [ Y \in A], \\
    \mathbb{P}_{(X,Y) \sim c_\infty(\mathbb{P}, \mathbb{Q})} [ X \neq Y] &= \inf_{c \in \Pi(\mathbb{P}, \mathbb{Q})} \mathbb{P}_{(X,Y) \sim c} [ X \neq Y] = \mathrm{TV}(\mathbb{P}, \mathbb{Q})\,.
\end{align*}
\end{definition}

\begin{proof}[Proof of Theorem~\ref{thm:low_bound}]
Without loss of generality, suppose that we have a 2-armed Bernoulli bandit instance $\nu = (P_1, P_2)$ with means $(\mu_1, \mu_2)$ where $\mu_1 \geq \mu_2$. Let $\pi$ be an $\epsilon$-global DP consistent policy. We also introduce \emph{two} other environments $\nu' = (P_1, P'_2)$ and $\nu'' = (P_1, P''_2)$ that only differ at the distribution of the second arm, where $\mu_2 \leq \mu_2' \leq \mu_1 \leq \mu''_2$, \ie, arm 1 is still optimal in environment $\nu'$ but is not optimal in environment $\nu''$.



The main idea is to control the probability of the event $\Omega  \triangleq \{ N_2(T) \leq n_2 \}$ in an augmented coupled history space, for some $n_2$ to be fine-tuned later (that may depend on the horizon $T$). 

\noindent\underline{Step 1: Building the coupled bandit environment $\gamma$.} We build a coupled bandit environment $\gamma$ of $\nu$ and $\nu'$. The policy $\pi$ interacts with the coupled environment $\gamma$ up to a given time horizon $T$ to produce an augmented history $\lbrace (a_t, r_t, r'_t)\rbrace_{t=1}^{T}$. The steps of this interaction process are: (a) The probability of choosing an action $a_t = a$ at time $t$ is dictated only by the policy $\pi_t$ and $a_1, r_1, a_2, r_2, \dots, a_{t-1}, r_{t-1}$, \ie, the policy ignores $\{r'_s\}_{s = 1}^{t - 1}$. (b) The distribution of pair of rewards $(r_t, r'_t)$ is $c_{a_t} \triangleq c_\infty(P_{a_t}, P'_{a_t})$ the maximal coupling of $(P_{a_t}, P'_{a_t})$ and is conditionally independent of the previous observed history $\lbrace (a_s, r_s, r'_s)\rbrace_{t=1}^{t - 1}$.
The distribution of the augmented history induced by the interaction of $\pi$ and the coupled environment can be defined as $p_{\gamma \pi}(a_1 , r_1 , r'_1 \dots , a_T , r_T, r'_T ) \triangleq \prod_{t=1}^T \pi_t(a_t \mid a_1 , r_1 , \dots , a_{t-1} , r_{t-1} ) c_{a_t} (r_t, r'_t)$.

Again, we introduce the notation $\textbf{a} \defn (a_1, \dots, a_T)$, $\textbf{r} \defn (r_1, \dots, r_T)$, and $\textbf{r'} \defn (r'_1, \dots, r'_T)$.

\noindent\underline{Step 2: Probability decomposition.} We introduce 
  $ L \triangleq \{ \mathrm{dham}(\textbf{r}, \textbf{r'}) \leq (1 + \alpha) n_2 \mathrm{TV}(P_2, P'_2) \}$,
and
$A \defn \left\{
\sum_{t=1}^{T} \log \frac{\dd P_{a_t}'(r'_t)}{\dd P_{a_t}''(r'_t)}
\leq (1 + \alpha)\mathrm{kl}(\mu'_2,\mu''_2)n_2
\right\}$ 
for some $\alpha > 0$, where $\mathrm{dham}(\textbf{r}, \textbf{r'}) \triangleq \sum_{t=1}^T \mathds{1}_{r_t \neq r'_t}$. Also, here for Bernoullis, we have $\mathrm{TV}(P_2, P'_2) = \mu_2' - \mu_2$.

Event $L$ will be used to do a change of measure from environment $\nu$ to $\nu'$ using the group privacy property of $\pi$, then event $A$ will be used to do a classic ``Lai-Robbins" change of measure using the KL from environment $\nu'$ to $\nu''$.

First, we start with the decomposition
\begin{align}
    \mathbb{P}_{\nu \pi}(N_2(T) \leq n_2 ) =   \mathbb{P}_{\gamma \pi} (\Omega \cap L \cap A) + \mathbb{P}_{\gamma \pi} (\Omega \cap L \cap A^c) + \mathbb{P}_{\gamma \pi} (\Omega \cap L^c)~.
\end{align}

\noindent\underline{Step 3: Controlling each probability.}
Using Lemma~\ref{lem:doub_env}, which formalises the ``double" change of environment idea, we get
\begin{align}
    \mathbb{P}_{\gamma \pi} \left(\Omega \cap L \cap A\right) \leq e^{(1 + \alpha) n_2 \left(\epsilon \mathrm{TV}(P_2, P'_2) + \mathrm{kl}(\mu_2',\mu''_2) \right) }  \frac{O(T^a)}{T - n_2},
\end{align}
for any $a>0$. Using Lemma~\ref{lem:p_omega_L_A_c} and Lemma~\ref{lem:p_omega_L_c}, we control the probabilities $\mathbb{P}_{\gamma \pi} (\Omega \cap L \cap A^c) = o_T(1) \quad \text{and} \quad \mathbb{P}_{\gamma \pi} \left(\Omega \cap L^c \right) = o_T(1),$ 
for any choice of $n_2 = n_2(T)$ as a function of $T$ such that $n_2(T) \rightarrow \infty$ when $T \rightarrow \infty$.

\noindent\underline{Step 4: Putting everything together and choosing $n_2$.}
First, we chose $n_2 = \frac{(1 - \alpha) \log(T)}{\epsilon \mathrm{TV}(P_2, P'_2) + \mathrm{kl}(\mu_2',\mu''_2)},$ and $a = \frac{\alpha^2}{2}$, to get $\exp\left((1 + \alpha) n_2 \left(\epsilon \mathrm{TV}(P_2, P'_2) + \mathrm{kl}(\mu_2',\mu''_2) \right) \right) \frac{O(T^a)}{T - n_2} = o_T(1)$.

With this choice of $n_2$, we have now that $\mathbb{P}_{\nu \pi}(N_2(T) \leq n_2) = o_T(1)$, 
and thus, using Markov inequality, we get, for any $\alpha>0$, and all $\mu_2 \leq \mu_2' \leq \mu_1 \leq \mu''_2$.
\begin{align*}
    \mathbb{E}_{\nu \pi}\left [N_2(T)\right] \geq n_2 \mathbb{P}_{\nu \pi} \left( N_2(T) > n_2 \right) = \frac{ (1 - \alpha) \log(T)}{\epsilon \mathrm{TV}(P_2, P'_2) + \mathrm{kl}(\mu_2',\mu''_2)} (1 - o(1))~.
\end{align*}
Finally, taking $\alpha \rightarrow 0$, and the supremum over all $\mu'_2 \in [\mu_2, \mu_1]$ and $\mu''_2 \rightarrow \mu_1$, we get the result.
\end{proof}



\paragraph{Key Changes in Proof.} The proof improves the lower bound of~\cite{azize2022privacy} by doing a ``double" change of environment. (a) The first change of environment uses the privacy property of the policy, and thus the TV transport. (b) The second change uses the classic ``Lai-Robbins" change of measure and thus the KL transport. By optimising for the ``in-between" environment, the double change always has smaller transport than any route led by purely KL or TV transport.
\section{Algorithm Design and Regret Analysis}\label{sec:alg_des}

In this section, we propose two algorithms, \dpklucb{} and \dpimed{}, presented in Algorithm \ref{alg_DP}. At the core of our algorithm design lies a new concentration bound for $\epsilon$-DP means of Bernoulli variables (Proposition~\ref{prop:conc}).  We analyse both the privacy and regret of our proposed algorithms, and show that their regret upper bound matches the lower bound up to a constant arbitrary close to 1.

First, we start with the concentration inequality for the private mean of IID Bernoullis.

\begin{proposition}[Concentration Bound of Private Mean]\label{prop:conc}
For $\mu\in(0,1)$ and $\epsilon>0$, let $
\tilde{S}_{n,m}=
\sum_{i=1}^{n} X_{i}+
\sum_{j=1}^m Y_{j}$, where 
$X_i \sim \mathrm{Ber}(\mu)$ and 
$Y_{j}\sim \mathrm{Lap}(1/\epsilon)$,
be the sum of $n$ independent Bernoulli random variables with mean $\mu$
and $m$ independent Laplace variables with scale $1/\epsilon$.
Let $x\in[0,1]$ and $\{n_m\}_{m\in\mathbb{N}}$ be a sequence such that $m/n_m=o(1)$.
Then, for any $a>0$ there exists a constant $\Aa>0$ such that for all $m\in \mathbb{N}$, 
{\small\begin{align*}
\Pr\left[
\frac{\tilde{S}_{n_m,m}}{n_m}
\le x
\right]
&\le
A_a
\e^{-n_m  (\mathrm{d}_\epsilon(x, \mu)-a)},\text{ for } x\le \mu;
~~
\Pr\left[
\frac{\tilde{S}_{n_m,m}}{n_m}
\ge x
\right]
\le
A_a
\e^{-n_m  (\mathrm{d}_\epsilon(x, \mu)-a)},\text{ for } x\ge \mu~.
\end{align*}}
We recall that $\mathrm{d}_\epsilon(x, y) \defn \inf_{z\in [x\wedge y,x\vee y]} \mathrm{kl}(z,y)+\epsilon|z-x|$.
\end{proposition}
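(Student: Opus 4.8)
The plan is to bound the moment generating function (MGF) of $\tilde S_{n_m,m}$ and apply a Chernoff argument, but with a key twist: because the Laplace MGF blows up for $|\lambda|\ge\epsilon$, I cannot simply optimise $\lambda$ over all of $\mathbb{R}$. First I would treat the case $x\le\mu$ (the case $x\ge\mu$ is symmetric by replacing $X_i$ with $1-X_i$ and $\mu$ with $1-\mu$). By independence, for any $\lambda>0$,
\[
\Pr\left[\tilde S_{n_m,m}\le n_m x\right]
\le \e^{\lambda n_m x}\,\E\!\left[\e^{-\lambda \tilde S_{n_m,m}}\right]
= \e^{\lambda n_m x}\bigl(\E[\e^{-\lambda X_1}]\bigr)^{n_m}\bigl(\E[\e^{-\lambda Y_1}]\bigr)^{m},
\]
where $\E[\e^{-\lambda X_1}] = 1-\mu+\mu\e^{-\lambda}$ and, provided $0<\lambda<\epsilon$, $\E[\e^{-\lambda Y_1}] = \frac{\epsilon^2}{\epsilon^2-\lambda^2} = (1-\lambda^2/\epsilon^2)^{-1}$. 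Taking logarithms and dividing by $n_m$, the exponent is
\[
\lambda x + \log\!\bigl(1-\mu+\mu\e^{-\lambda}\bigr) - \tfrac{m}{n_m}\log\!\bigl(1-\lambda^2/\epsilon^2\bigr).
\]
So I want to choose $\lambda\in(0,\epsilon)$ (one may as well restrict to a closed subinterval $[0,\epsilon-\eta]$ for small $\eta$, on which $-\log(1-\lambda^2/\epsilon^2)$ is bounded) minimising $\lambda x + \log(1-\mu+\mu\e^{-\lambda})$, and then absorb the Laplace correction term into the slack $a$ since $m/n_m = o(1)$.

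The second and main step is to identify $\inf_{0<\lambda<\epsilon}\{\lambda x + \log(1-\mu+\mu\e^{-\lambda})\}$ with $-\mathrm{d}_\epsilon(x,\mu)$. The unconstrained minimiser over $\lambda\in\mathbb{R}$ of $\lambda x + \log(1-\mu+\mu\e^{-\lambda})$ is the Legendre transform of the Bernoulli log-MGF, which equals $-\mathrm{kl}(x,\mu)$ and is attained at $\lambda^\star = \log\frac{\mu(1-x)}{x(1-\mu)}\ge 0$ (since $x\le\mu$). If $\lambda^\star<\epsilon$, the constraint is inactive and the infimum is $-\mathrm{kl}(x,\mu)$; one checks that $\mathrm{d}_\epsilon(x,\mu)=\mathrm{kl}(x,\mu)$ in exactly this regime (the "low privacy" case, where the $z=x$ endpoint is optimal in the definition of $\mathrm{d}_\epsilon$). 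If $\lambda^\star\ge\epsilon$, convexity forces the constrained optimum to the boundary $\lambda=\epsilon$, giving infimum $\epsilon x + \log(1-\mu+\mu\e^{-\epsilon})$. I then have to verify algebraically that this boundary value equals $-\mathrm{d}_\epsilon(x,\mu)$; the cleanest route is to solve the inner $\mathrm{kl}(z,\mu)+\epsilon|z-x| = \mathrm{kl}(z,\mu)+\epsilon(x-z)$ minimisation over $z\in[x,\mu]$: setting the derivative $\log\frac{z(1-\mu)}{\mu(1-z)} - \epsilon$ to zero yields $z^\star = \frac{\mu}{\mu+(1-\mu)\e^{\epsilon}}$ (matching the formula quoted after Theorem~\ref{thm:low_bound} with the roles of $\mu_a,\mu^\star$ suitably placed), and substituting back gives a Fenchel-type identity $\min_z\{\mathrm{kl}(z,\mu)+\epsilon(x-z)\} = -\bigl(\epsilon x + \log(1-\mu+\mu\e^{-\epsilon})\bigr)$, which is precisely the constrained Legendre value. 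This duality between the $\lambda$-optimisation (Chernoff side) and the $z$-optimisation (definition of $\mathrm{d}_\epsilon$) is the heart of the argument and, I expect, the main obstacle — it is where the interpolation structure of $\mathrm{d}_\epsilon$ between $\mathrm{kl}$ and $\epsilon\cdot\mathrm{TV}$ actually comes from.

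Finally, I would assemble the pieces: picking $\lambda$ to be the constrained minimiser above (or $\epsilon - \eta$ with $\eta$ small enough in the boundary case, to stay strictly inside the domain), the exponent is at most $-\mathrm{d}_\epsilon(x,\mu) + o(1) + \frac{m}{n_m}\cdot O(1)$, and since $m/n_m = o(1)$ there exists, for any $a>0$, an index $m_0$ beyond which the exponent is $\le -\mathrm{d}_\epsilon(x,\mu)+a$; for the finitely many $m<m_0$ the probability is trivially at most $1$, so one can choose the constant $\Aa$ large enough (depending on $a$, and on $m_0$, hence on the sequence $\{n_m\}$) to cover all $m\in\mathbb{N}$ simultaneously. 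A small technical point to handle carefully: if the boundary case needs $\lambda=\epsilon$ exactly, the Laplace MGF is infinite, so I must use $\lambda = \epsilon-\eta$ and let the resulting gap (both from $\lambda$ not being exactly optimal and from the $-\log(1-\lambda^2/\epsilon^2)$ term, now bounded by a constant times $m/n_m$) be swallowed into $a$ by taking $\eta$ small and $m$ large; this is routine but is the place where the "$o(1)$ number of Laplace noises" hypothesis is genuinely used.
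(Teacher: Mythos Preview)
Your proposal is correct, and it takes a genuinely different route from the paper. The paper proves a stronger, fully non-asymptotic bound (its Lemma~\ref{lem:PrivateSumUpper}) by writing $\Pr[\tilde S_{n,m}\ge x]=\int f_m(z)\Pr[S_n\ge x-z]\,\rd z$ as a convolution, applying the Bernoulli Chernoff bound to $\Pr[S_n\ge x-z]$, integrating by parts against the Laplace tail $F_m(z)$, and then reading off $\mathrm{d}_\epsilon$ from the resulting integral $\inf_{y\in[\mu,x/n]}\{\epsilon(x/n-y)+\mathrm{kl}(y,\mu)\}$; only afterwards does it take $m/n_m=o(1)$ to absorb the polynomial prefactor. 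Your argument instead applies Chernoff directly to the whole sum and recognises $\mathrm{d}_\epsilon$ as the Legendre transform of the Bernoulli cumulant generating function \emph{constrained to $|\lambda|<\epsilon$} (the constraint being forced by the Laplace MGF). This is more elementary---no convolution, no integration by parts---and it makes transparent why $\mathrm{d}_\epsilon$ is exactly the right exponent: the Laplace noise simply caps the tilting parameter at $\epsilon$, and Fenchel duality converts that cap into the $\epsilon|z-x|$ penalty in the primal. The price is that you get only the asymptotic statement of the proposition, whereas the paper's convolution route yields an explicit bound $A_\epsilon(m,n,x,\mu)\,\e^{-n\,\mathrm{d}_\epsilon(x/n,\mu)}$ valid for every $n,m$, which could be useful for finite-time refinements. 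Your handling of the boundary case $\lambda\uparrow\epsilon$ via $\lambda=\epsilon-\eta$, with $\eta$ chosen first (depending on $a$) and then $m$ taken large, is the correct order of quantifiers and closes the argument cleanly.
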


\paragraph{Discussions.} (a) This concentration bound can be seen as a private version of the Chernoff bound (Lemma~\ref{lem:ChernoffKL}), where $\mathrm{d}_\epsilon$ replaces the $\mathrm{kl}$ in the exponent. 
(b) As soon as the number of summed Laplace noises $m$ is negligible with respect to  the number of summed Bernoulli variables $n$, then the effect of $m$ on the dominant term is similar to when $m= 1$. 
(c) This concentration bound is a tighter version of Lemma 4 in~\citep{azize2022privacy} with $m =1$. Lemma 4 of~\cite{azize2022privacy} and other works in bandits under DP~\citep{Mishra2015NearlyOD, dpseOrSheffet, lazyUcb, hu2022near} deal with the concentration of the noise and random variables separately-- they use an inequality $\mathrm{Pr}(X + Y \geq a) \leq \mathrm{Pr}(X \geq a) + \mathrm{Pr}(Y \geq 0)$, followed by a classic non-private concentration bound for the first term and concentration bound of Laplace noise for the second term. We improve this loose analysis by a coupled treatment of noise and variables.


\paragraph{Proof Sketch.} Proposition~\ref{prop:conc} is a corollary of the general Lemma~\ref{lem:PrivateSumUpper} that holds for any $n$ and $m$. To prove Lemma~\ref{lem:PrivateSumUpper}, we express $\Pr\left[\tilde{S}_{n,m} \ge x \right]$ in the form of a convolution of the sums of Bernoulli rewards and Laplace noises.
Even though we still resort to the Chernoff bound for each of the sums, considering the convolution of sums significantly improves the bound compared with the na\"ive use of the Chernoff bounds for noise and variables in $\tilde{S}_{n,m}$. The complete proof is in Appendix~\ref{app:conc}.

\LinesNumbered
\SetAlgoVlined
\begin{algorithm}[t]
\SetAlgoLined
\SetKwInput{Input}{Input}
\SetKwInput{Parameter}{Parameter}
   
   \KwIn{$\epsilon$: privacy parameter, $K$: number of arms, $T$: horizon, $\{B_{m}\}_{m=0}^{\infty}$: batch sizes}
Pull each arm $B_0$ times and receive rewards $\{\{X_{i,n}\}_{n=1}^{B_0} \}_{i = 1}^K$\; 

Compute private reward sum $\tilde{S}_{i,0}=
\sum_{n=1}^{B_0}X_{i,n}+Y_{i,0}$ for $Y_{i,0} \sim \mathrm{Lap}(1/\epsilon)$\;

Compute private mean $\tilde{\mu}_{i,0}=\tilde{S}_{i,0}/B_0$\;

Set arm-dependent epoch $m_i:=0$ for each arm $i\in[K]$\;

Set cumulative pull number $n_{m_i}:=B_0$ for each arm $i\in[K]$\;

Set $t \gets K B_0 + 1$\;

 \While{
 	$t\le T$
 }{
\label{line_selection1}	(\dpklucb{}): compute $i(t)\in \argmax_i \ \bar{\mu}_{i}(t)$ maximising the DP-KLUCB index given by
\begin{align}
\bar{\mu}_{i}(t)=\max\left\{\mu:
\mathrm{d}_\epsilon\left([\tilde{\mu}_{i,m_i}]_{0}^1, \mu\right)\le \frac{\log t}{n_{m_i}}\right\}
\label{ucb_score}
\end{align}%
(\dpimed{}): compute $i(t)\in \argmin_i \ I_i(t)$ minimising the DP-IMED index given by
\begin{align}
I_i(t)=n_{m_i} \mathrm{d}_\epsilon\left([\tilde{\mu}_{i,m_i}]_0^1, [\tilde{\mu}^*(t)]_0^1\right)
+\log n_{m_i},
\label{imed_score}
\end{align}
where $\tilde{\mu}^*(t)=\max_j \tilde{\mu}_{j,m_j}$ and $[x]_0^1=\max\{0, \min\{x,1\}\}$ is the clipping of $x$ onto $[0,1]$\;
\label{line_compute_index}	

Pull arm $i(t)$ for $B_{m_{i(t)} + 1}$ times and receive rewards $\{X_{i(t),n}\}_{n=n_{m_{i(t)}}+1}^{n_{m_{i(t)} + B_{m_{i(t)} + 1}}}$\;

Update the noisy sum $\tilde{S}_{i(t),m_{i(t)}+1} \gets
\tilde{S}_{i(t),m_{i(t)}}
+\sum_{n=n_{m_{i(t)}} + 1}^{n_{m_{i(t)}+ B_{m_{i(t)} + 1}}} X_{{i(t)},n}+ Y_{i(t),m_{i(t)} + 1}$ 
 where $Y_{i(t),m_{i(t)}+1} \sim \mathrm{Lap}({1}/{\epsilon})$\;

Compute private mean $\tilde{\mu}_{i(t),m_{i(t)}+1}=\tilde{S}_{i(t),m_{i(t)}+1}/n_{m_{i(t)}+1}$\;

Update $m_{i(t)} \gets m_{i(t)}+1$,\, $n_{m_{i(t)}} \gets n_{m_{i(t)}}+B_{m_{i(t)}}$, $t \gets t+B_{m_i(t)}$\;
\label{line_selection2}


 }
\caption{\dpklucb{} and \dpimed{}}\label{alg_DP}
\end{algorithm}


\paragraph{Algorithm Design.} Based on Proposition~\ref{prop:conc}, we propose \dpklucb{} and \dpimed{} in Algorithm~\ref{alg_DP}. Both algorithms run in arm-dependent phases (Line 9 in Algorithm~\ref{alg_DP}), and add Laplace noise to achieve $\epsilon$-global DP (Line 10 in Algorithm~\ref{alg_DP}). This is similar to the algorithm design in~\cite{dpseOrSheffet, azize2022privacy, hu2022near}, with two modifications.

(a) \textit{Our algorithms do not forget rewards from previous phases.} In contrast, the algorithms in \citep{dpseOrSheffet, azize2022privacy, hu2022near} run in adaptive and ``non-overlapping" phases. The sums of rewards are computed over non-overlapping sequences, which means that rewards collected in the first phases are ``thrown away" in future phases. By running
non-overlapping phases, these algorithms avoid the use of sequential composition (Proposition~\ref{prp:compo}), and use instead the ``parallel composition" property (Lemma~\ref{lem:paral_compo})
of DP to add less noise. Specifically, if the rewards are in $[0,1]$, with forgetting, it is enough to add one $\mathrm{Lap}\left( 1/\epsilon\right)$ to each sum of rewards to make the simultaneous release of all the partial sums achieving DP. In our algorithms, we do \emph{not} forget previous private sums (Line 10 in Algorithm~\ref{alg_DP}). The price of not forgetting is adding multiple Laplace noises with scale $1/\epsilon$ to the non-private sum. Here, we use the insights from the concentration inequality of Proposition~\ref{prop:conc}, \ie, as long as the number of added Laplace noises is negligible with respect to the number of added Bernoulli variables, the effect of the added noise on the dominant term is similar to having one Laplace noise. This refined analysis allows us to completely remove forgetting.

(b) Our algorithms use new indexes, \ie 
~Eq.~\eqref{ucb_score} and Eq.~\eqref{imed_score}, inspired by Proposition~\ref{prop:conc}, and are based on the $\mathrm{d}_\eps$ quantity appearing in the lower bound. In addition, the index of \dpklucb{} is instantiated with an exploration bonus of $\log(t)/n_{m_i}$. This contrasts AdaP-KLUCB and Lazy-DP-TS, which need an exploration bonus of roughly $3\log(t)/n_{m_i}$ for their regret analysis.



Now, we present the privacy guarantee of our algorithms.
\begin{proposition}[Privacy analysis]\label{prop:priv}
    \dpklucb{} and \dpimed{} are $\epsilon$-global DP for rewards in $[0,1]$.
\end{proposition}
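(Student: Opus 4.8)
The plan is to verify $\epsilon$-global DP by combining the per-phase noise injection with the two composition properties of DP referenced in the text. I would argue via the View-DP formulation (Definition~\ref{def:table_view_dp}(b)): fix a policy among $\dpklucb{}$ / $\dpimed{}$, treat the reward stream $\mathbf{r} \in [0,1]^T$ as the input dataset, and show that the induced batch mechanism $\mathcal{V}^\pi$ is $\epsilon$-DP. The key structural observation is that for each arm $i$, the algorithm only ever touches that arm's rewards through the sequence of \emph{cumulative} noisy sums $\tilde{S}_{i,0}, \tilde{S}_{i,1}, \tilde{S}_{i,2}, \dots$, and each $\tilde{S}_{i,m}$ is the \emph{true} cumulative sum of the first $n_{m}$ pulls of arm $i$ plus a \emph{single fresh} Laplace$(1/\epsilon)$ draw $Y_{i,m}$ — the noises are not re-used across phases (Line~10 adds a brand-new $Y_{i(t),m_{i(t)}+1}$ each time). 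Since consecutive cumulative sums of rewards in $[0,1]$ have $\ell_1$-sensitivity $1$ with respect to changing one reward, each release $\tilde{S}_{i,m}$ is $\epsilon$-DP by the Laplace mechanism (Theorem~\ref{thm:laplace}).

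The main structural work is accounting for composition correctly. First, for a \emph{single} arm $i$: a user $u_t$ who was served arm $i$ at time $t$ contributes its reward $r_t$ to \emph{all} cumulative sums $\tilde{S}_{i,m}$ with $n_m \ge$ (that pull's index), i.e. to every phase from some point onward. So the per-arm release is a sequence of at most (number of phases for arm $i$) noisy prefix sums, and a change in one reward perturbs each of them by at most $1$ in $\ell_1$. Naively this would invoke sequential composition (Proposition~\ref{prp:compo}) and cost a factor equal to the number of phases — this is exactly the obstacle, and it is why prior work used forgetting. The resolution the paper is implicitly relying on is that one should \emph{not} think of the prefix sums as independently-noised queries but as the output of a single Laplace-noise \emph{stream} whose partial sums are released: here I would instead note that releasing $(\tilde{S}_{i,0}, \tilde{S}_{i,1}, \dots)$ is post-processing (Proposition~\ref{prp:post_proc}) of releasing the \emph{individual per-phase noisy increments} $\big(\sum_{n \in \text{phase }m} X_{i,n} + Y_{i,m}\big)_m$, and these increments are over \emph{disjoint} blocks of rewards, so parallel composition (Lemma~\ref{lem:paral_compo}) applies and the whole per-arm sequence is $\epsilon$-DP. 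Wait — $\tilde{S}_{i,m} = \sum_{m' \le m}(\text{increment}_{m'})$ with all the $Y$'s accumulated, which is exactly a deterministic function of the increments, hence post-processing; so the per-arm view is $\epsilon$-DP.

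Second, across arms: the per-arm reward blocks $\{r_t : a_t = i\}$ partition the dataset, so combining the $K$ per-arm $\epsilon$-DP releases is again \emph{parallel} composition (Lemma~\ref{lem:paral_compo}), costing no extra privacy, and the whole collection of noisy sums is $\epsilon$-DP. Finally, the action sequence $\mathbf{a} = (a_1,\dots,a_T)$ output by $\mathcal{V}^\pi$ is a (randomised, but independent of private data given the noisy sums) function of these noisy sums — indeed $i(t)$ in Eq.~\eqref{ucb_score}/\eqref{imed_score} depends on $\mathbf{r}$ only through the $\tilde{\mu}_{i,m_i} = \tilde{S}_{i,m_i}/n_{m_i}$ — so one more application of post-processing (Proposition~\ref{prp:post_proc}) gives that $\mathcal{V}^\pi$ is $\epsilon$-DP, i.e. the policy is $\epsilon$-View DP, hence $\epsilon$-global DP by Proposition~\ref{prop:tab_view}. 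The one subtlety to handle carefully — and the step I expect to need the most care — is the adaptivity: the \emph{set} of phases and their \emph{lengths} for arm $i$ depend on past actions, which depend on past noisy sums; I would address this by conditioning on the realised sequence of actions (equivalently, arguing inductively over phases that the view up to phase $m$ is $\epsilon$-DP and that the next phase boundary is determined by already-$\epsilon$-DP quantities), so that the partition-into-disjoint-blocks structure needed for parallel composition is preserved on every sample path. This is precisely the adaptive-continual-release subtlety the paper defers to Appendix~\ref{app:adap_cont}, so I would cite that equivalence to make the adaptive argument rigorous.
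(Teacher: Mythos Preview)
Your argument is essentially the paper's: release the per-phase noisy increments, invoke parallel composition (Lemma~\ref{lem:paral_compo}) because the increments sit on disjoint reward blocks, then post-process (Proposition~\ref{prp:post_proc}) to recover the cumulative sums $\tilde S_{i,m}$ and finally the actions. This is exactly the content of Lemma~\ref{lem:priv} and the proof sketch following Proposition~\ref{prop:priv}.

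Two points worth tightening. First, your opening sentence misdescribes $\tilde S_{i,m}$: it is \emph{not} the true cumulative sum plus a single fresh Laplace draw, but the true sum plus the \emph{accumulated} noise $\sum_{l=0}^m Y_{i,l}$ (as you yourself write two paragraphs later). The claim ``each release $\tilde S_{i,m}$ is $\epsilon$-DP by the Laplace mechanism'' is therefore not the right primitive; the right primitive is that each \emph{increment} is $\epsilon$-DP, and the cumulative sums are post-processing of the increments --- which is what your second paragraph correctly says.

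Second, on adaptivity: the paper's detailed proof in Appendix~\ref{app:priv} sidesteps the data-dependent-partition issue more cleanly than the inductive/continual-release route you sketch. Since View DP asks for $\mathcal{V}^\pi_{\mathbf{r}}(\mathbf{a})\le e^\epsilon \mathcal{V}^\pi_{\mathbf{r}'}(\mathbf{a})$ for every \emph{fixed} output sequence $\mathbf{a}$, one simply fixes $\mathbf{a}$; the phase boundaries are then a deterministic function of $\mathbf{a}$, the changed reward $r_{t^\star}$ lands in exactly one phase $k^\star$, and the whole ratio reduces to the Laplace likelihood ratio for that single increment. No appeal to adaptive continual release or an induction over phases is needed.
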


\paragraph{Proof Sketch.} First, given a sequence of rewards $\{r_1, \dots, r_T\} \in [0,1]^T$ and some time steps $1 = t_1< t_2 < \cdots < t_\ell = T + 1$, releasing the partial sums $\left\{ \left(\sum_{s = t_k}^{t_{k+1} - 1} r_s \right) + Y_k \right \}_{k = 1}^{\ell - 1}$ is $\epsilon$-DP, where $Y_k \sim \mathrm{Lap}(1/\epsilon)$. This is the main privacy lemma used to design algorithms of~\citet{dpseOrSheffet, azize2022privacy, hu2022near}. Now, by the post-processing property of DP, we also have that releasing the sums $\left\{ \left(\sum_{s = 1}^{t_{k+1} - 1} r_s \right) + \sum_{p = 1}^k Y_p \right \}_{k = 1}^{\ell - 1}$ is $\epsilon$-DP, by summing the outputs of the previous DP mechanism. Finally, \dpimed{} and \dpklucb{} are $\epsilon$-global DP by adaptive post-processing of the sum of rewards. The detailed proof is presented in Appendix~\ref{app:priv}.






To have a ``good" regret bound, Proposition~\ref{prop:conc} suggests using a batching strategy where the number of batches is sublinear in $T$. For simplicity, we chose the batch sizes $B_m$ in Algorithm~\ref{alg_DP} such that $B_m\approx n_0 \base^m$, \ie, a geometric sequence with initialisation $n_0 \in\mathbb{N}$ and ratio $\alpha > 1$. More formally, we choose 
\begin{align}
B_m
&=
\left\lceil n_0 \frac{\base^{m+1}-1}{\base-1}\right\rceil
-
\left\lceil n_0 \frac{\base^{m}-1}{\base-1}\right\rceil\,,
\label{def_batch}
\end{align}
where $\left \lceil x \right\rceil$ is the smallest integer no less than $x$. When $\alpha$ is an integer, $B_m = n_0 \alpha^m$.

\begin{theorem}[Regret upper bound of \dpimed{} and \dpklucb{}]\label{thm:upp_bound}
Assume $\mustar<1$.
Under the batch sizes given in \eqref{def_batch} with $\base>1$, and for any Bernoulli bandit $\nu$, we have
{\small\begin{align*}
    \reg_T(\dpimed{}, \nu) &\le \sum_{i \neq i^*} \frac{\base \Delta_i\log T}{\mathrm{d}_\epsilon(\mu_i, \mustar)} +o(\log T),~~
    \reg_T(\dpklucb{}, \nu) \le \sum_{i \neq i^*} \frac{\base \Delta_i\log T}{\mathrm{d}_\epsilon(\mu_i, \mustar)} +o(\log T)~.
\end{align*}}
\end{theorem}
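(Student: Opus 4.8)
The plan is to bound $\mathbb{E}[N_i(T)]$ for each suboptimal arm $i$ and conclude through the decomposition $\reg_T(\pi,\nu)=\sum_{i}\Delta_i\,\mathbb{E}[N_i(T)]$. The argument mirrors the classical asymptotic analyses of $\klucb{}$ and $\imed{}$, but run entirely at the granularity of arm-dependent batches, with every invocation of a Chernoff/Hoeffding bound on an empirical mean replaced by Proposition~\ref{prop:conc} applied to the privatised mean $\tilde\mu_{i,m}$. Two features of the batch schedule \eqref{def_batch} are used repeatedly: after $m$ batches arm $i$ has exactly $m+1$ Laplace noises added to its sum and cumulative count $n_m\asymp n_0\alpha^{m+1}/(\alpha-1)$, so $m/n_m\to0$ (hence Proposition~\ref{prop:conc} applies with the batch index playing the role of $m$), and $n_{m+1}\le\alpha\,n_m+O(1)$, which is the sole source of the multiplicative $\alpha$ in the bound.

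Fix a small $\delta>0$ and call a batch \emph{clean} if $\tilde\mu_{i,m}\le\mu_i+\delta$ (for a suboptimal arm) or $\tilde\mu_{i^*,m}\ge\mustar-\delta$ (for the optimal arm). \textbf{Main term.} Suppose at a batch-start round $t$ the suboptimal arm $i$ is selected and both its current batch and the optimal arm's current batch are clean. For $\dpklucb{}$, since $\mathrm{d}_\epsilon(x,x)=0$ the confidence set of the optimal arm always contains $[\tilde\mu_{i^*,m_{i^*}}]_0^1$, so $\bar\mu_{i^*}(t)\ge\mustar-\delta$; selection gives $\bar\mu_i(t)\ge\bar\mu_{i^*}(t)$, hence $\mathrm{d}_\epsilon([\tilde\mu_{i,m_i}]_0^1,\mustar-\delta)\le\log t/n_{m_i}$ by monotonicity of $\mathrm{d}_\epsilon(x,\cdot)$, and since $[\tilde\mu_{i,m_i}]_0^1\le\mu_i+\delta$ and $\mathrm{d}_\epsilon(\cdot,y)$ is non-increasing, $n_{m_i}\le\log T/\mathrm{d}_\epsilon(\mu_i+\delta,\mustar-\delta)$. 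For $\dpimed{}$ the arm attaining $\tilde\mu^*(t)=\max_j\tilde\mu_{j,m_j}$ has index exactly $\log n_{m_j}\le\log T$ (again $\mathrm{d}_\epsilon(x,x)=0$), so the selected arm's index is $\le\log T$, i.e. $n_{m_i}\,\mathrm{d}_\epsilon([\tilde\mu_{i,m_i}]_0^1,[\tilde\mu^*(t)]_0^1)\le\log T$, and on clean batches $[\tilde\mu^*(t)]_0^1\ge\mustar-\delta\ge\mu_i+\delta\ge[\tilde\mu_{i,m_i}]_0^1$ yields the same inequality. Thus, on clean batches arm $i$ is never selected once $n_{m_i}>\log T/\mathrm{d}_\epsilon(\mu_i+\delta,\mustar-\delta)$, so its final count is at most $\alpha\log T/\mathrm{d}_\epsilon(\mu_i+\delta,\mustar-\delta)+O(1)$ on the clean event.

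\textbf{Error terms.} It remains to show the pulls caused by non-clean batches are $o(\log T)$. A non-clean batch of arm $i$ itself has probability $\le A_a\,e^{-n_m(\mathrm{d}_\epsilon(\mu_i+\delta,\mu_i)-a)}$ by Proposition~\ref{prop:conc}; choosing $a<\mathrm{d}_\epsilon(\mu_i+\delta,\mu_i)$ and weighting by $B_m\asymp(\alpha-1)n_m$, the series over batches past the threshold is $o(1)$, being dominated by its first term, of size $\asymp(\log T)\,T^{-c}$ with $c>0$. The genuinely delicate contribution — and the step I expect to be the main obstacle — is a non-clean batch of the \emph{optimal} arm, $\tilde\mu_{i^*,m_{i^*}(t)}<\mustar-\delta$: a naive union bound over batches only gives a constant probability, which multiplied by the horizon would be linear. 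The resolution exploits the self-correcting structure of the index: since $\mathrm{d}_\epsilon(x,1)=\epsilon(1-x)\le\epsilon$, an optimal arm sampled fewer than $\log t/\epsilon$ times at round $t$ has confidence set $[\,\cdot\,,1]$ (resp. a small $\imed{}$ index), so it is re-selected immediately; hence whenever a suboptimal arm is over-pulled the optimal arm must already carry $n_{m_{i^*}}\gtrsim\log t/\epsilon$ samples, for which Proposition~\ref{prop:conc} (splitting according to whether the optimal arm has completed more or fewer than $\asymp\log\log T$ batches) makes the deviation probability small enough that the total damage is $o(\log T)$. It is precisely the batch structure that performs the ``peeling'' normally achieved in the $\klucb{}$ analysis by inflating the exploration bonus, which is why the plain $\log t/n_{m_i}$ bonus of \eqref{ucb_score} suffices.

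Putting the pieces together, for each fixed $\delta$ we obtain $\mathbb{E}[N_i(T)]\le\alpha\log T/\mathrm{d}_\epsilon(\mu_i+\delta,\mustar-\delta)+o(\log T)$; letting $\delta=\delta_T\to0$ slowly enough that the $O_\delta(1)$ error constants remain $o(\log T)$, and using continuity of $\mathrm{d}_\epsilon$ at $(\mu_i,\mustar)$ and $\mathrm{d}_\epsilon(\mustar-\delta,\mustar)>0$ (both requiring $\mustar<1$), gives $\mathbb{E}[N_i(T)]\le\alpha\log T/\mathrm{d}_\epsilon(\mu_i,\mustar)+o(\log T)$ for every suboptimal $i$; summing $\Delta_i\,\mathbb{E}[N_i(T)]$ over $i$ yields the claim for both algorithms.
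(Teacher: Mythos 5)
Your overall skeleton (bounding $\mathbb{E}[N_i(T)]$ per arm, a ``clean batch'' main term giving $n_{m_i}\le \log T/\mathrm{d}_\epsilon(\mu_i+\delta,\mustar-\delta)$ and hence the factor $\base$ from the geometric batches, plus concentration of the private mean via Proposition~\ref{prop:conc}) matches the paper's post-convergence analysis, and your treatment of the suboptimal arm's own deviations is sound. The genuine gap is exactly where you flag it: the contribution of rounds where the \emph{optimal} arm's private estimate is bad. Your proposed fix does not close it. The quantitative input you invoke is that once $n_{m_{i^*}}\gtrsim \log t/\epsilon$ the deviation probability is small; but Proposition~\ref{prop:conc} only gives $\Pr[\tilde\mu_{i^*,m}<\mustar-\delta]\lesssim e^{-n_{m}(\mathrm{d}_\epsilon(\mustar-\delta,\mustar)-a)}$, and since $\mathrm{d}_\epsilon(\mustar-\delta,\mustar)\le \epsilon\delta$, at the threshold $n_m\approx\log t/\epsilon$ this is only about $t^{-\delta}$, with $\delta\to 0$ at the end of the proof. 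Meanwhile the damage per such event is not bounded by $O(1)$ or even $O(\log T)$ without further argument: if $\tilde\mu_{i^*,m}$ falls close to (or below) $\mu_i$, arm $i$ keeps being preferred until $n_{m_i}\gtrsim \log t/\mathrm{d}_\epsilon(\mu_i,\bar\mu_{i^*}(t))$, which blows up as the optimal arm's index approaches $\mu_i$, and in the extreme is only bounded by $T$. So ``small probability times damage'' has not been shown to be $o(\log T)$. The $\log\log T$ batch split does not rescue this: for $\base$ close to $1$ the optimal arm's count after $\log\log T$ batches is only of order $(\log T)^{\log\base}$, so the available deviation probability decays slower than any polynomial in $T$, while the worst-case damage is polynomial in $T$. (Also, ``re-selected immediately'' when the optimal arm's index equals $1$ is not literally true because of ties, though those pulls can be absorbed into the main budget since $\mathrm{d}_\epsilon(\mu_i,\mustar)\le\epsilon\Delta_i$.)

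What is missing is the balancing step that the paper performs with the Honda--Takemura device. On the event that the optimal arm $j$ looks bad, the index inequalities themselves bound the damage \emph{by an exponential of the same quantity that controls the deviation probability}: for \dpimed{}, $\log n_{m_i}\le I_i(t)\le I_j(t)$ gives $n_{m_i}\le e^{\bar I_j}$ with $\bar I_j=\max_{m:\tilde\mu_{j,m}<\mustar-\delta}\{n_m\mathrm{d}_\epsilon([\tilde\mu_{j,m}]_0^1,\mustar-\delta)+\log n_m\}$, and for \dpklucb{} the event $\bar\mu_j(t)<\mustar-\delta$ forces $t<e^{n_{m_j}\mathrm{d}_\epsilon([\tilde\mu_{j,m_j}]_0^1,\mustar-\delta)}$. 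One then shows $\mathbb{E}\bigl[\mathds{1}\{\tilde\mu_{j,m}<\mustar-\delta\}\,n_m e^{n_m\mathrm{d}_\epsilon([\tilde\mu_{j,m}]_0^1,\mustar-\delta)}\bigr]\lesssim n_m e^{-n_m\delta'}+d_1 n_m^2 A_a e^{-n_m(\delta'-a)}$ by a layer-cake/integration-by-parts computation in which the exponential damage and the exponential tail from Proposition~\ref{prop:conc} cancel up to the margin $\delta'=\delta/(1-\mustar)$ (this is where $\mustar<1$ and Lemma~\ref{lem_deriv_second} enter); summing over $m$ gives an $O(1)$ pre-convergence contribution. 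Without this cancellation argument (or an equivalent one), your error-term bound is an assertion rather than a proof, so the proposal as written does not establish the theorem.
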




\noindent\textbf{Comments.} (a) The regret upper bounds of \dpimed{} and \dpklucb{} match asymptotically the lower bound of Theorem~\ref{thm:low_bound} up to the constant $\alpha > 1$, where $\alpha$ is the ratio of the georemetrically increasing batch sizes $B_m$. This parameter $\alpha > 1$ can be set arbitrarily close to 1 to match the dominant term in the asymptotic regret lower bound.
(b) Our algorithms strictly improve over the regret upper bounds of~\cite{azize2022privacy, hu2022near}. Also, our upper bounds are the first to show a dependence in the tighter quantity $\mathrm{d}_\eps$, compared to having $\min\{\Delta_a^2, \epsilon \Delta_a \}$ in the regrets for~\cite{azize2022privacy, hu2022near}. We provide additional comments that compare our regret upper bound to AdaP-KLUCB in Appendix~\ref{app:ub_proof}.



\noindent\textbf{Proof Sketch.} The proof uses similar steps as~\cite{honda2015non} for the IMED algorithm and the reduction technique for the KL-UCB algorithm by~\cite{honda2019note} with the new concentration inequality involving $\mathrm{d}_\eps$ (Proposition~\ref{prop:conc}). The main technical challenge is dealing with the adaptive batching strategy when the optimal arm has not yet converged well.
While it was sufficient to count the number of such rounds in \citep{honda2015non}, a suboptimal arm $i$ might be pulled $B_{m_i}\approx n_0 \alpha^{m_i}$ times under our batched algorithm once such an event occurs.
We control this effect by a regret decomposition that is tailored for batched pulls of arms while the property of IMED/KL-UCB index can still be naturally incorporated. 
The full proof is presented in Appendix~\ref{app:ub_proof}.

\section{Experimental Analysis}
In this section, we numerically compare the performance of our algorithms, \ie, \dpklucb{} and \dpimed{}, to $\epsilon$-global DP algorithms from the literature: \dpse{}~\citep{dpseOrSheffet}, AdaP-KLUCB~\citep{azize2022privacy} and Lazy-DP-TS~\citep{hu2022near}. As a non-private benchmark, we include the IMED algorithm~\citep{honda2015non}. Since both AdaP-KLUCB and Lazy-DP-TS explore each arm once, and use arm-dependent \emph{doubling}, we chose $n_0 = 1$ and $\alpha = 2$ for \dpklucb{} and \dpimed. Also, to comply with the regret analysis in~\citep{azize2022privacy, dpseOrSheffet}, we chose $\alpha = 3.1$ in AdaP-KLUCB, and $\beta = 1/T$ in \dpse.

As in~\citet{dpseOrSheffet, azize2022privacy, hu2022near}, we consider $4$ different $5$-arm Bernoulli environments, with specific arm-means choices. We run each algorithm $20$ times for $T= 10^6$. For $\epsilon = 0.25$, we plot the mean regret in Figure~\ref{fig:experiments_global} for $\mu_1 \defn [0.75, 0.7, 0.7, 0.7, 0.7] $ in the left and $\mu_2 \defn [0.75, 0.625, 0.5, 0.375, 0.25]$ in the right. In Appendix~\ref{app:extended_exps}, we present additional results for the other environments under different budgets.

\begin{figure}[t!]
    \centering
    \includegraphics[width=0.45\linewidth]{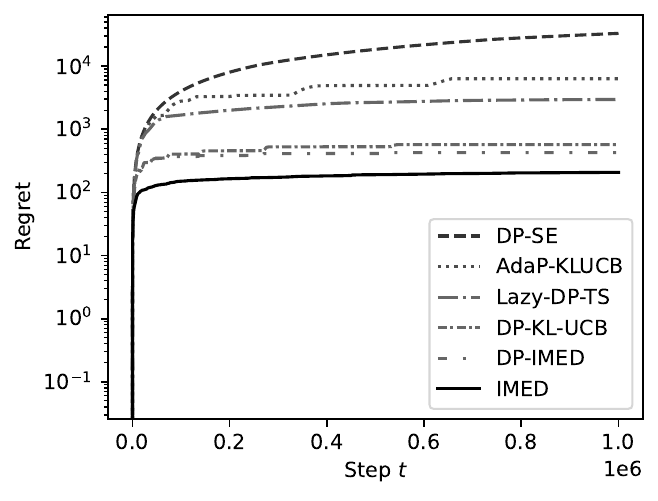}\hspace*{2em}
    \includegraphics[width=0.45\linewidth]{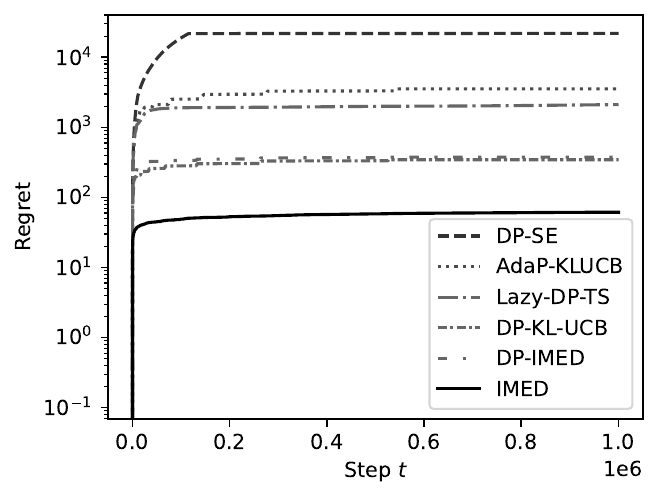}
    \caption{Evolution of the regret over time for \dpse, AdaP-KLUCB, Lazy-DP-TS, \dpklucb{}, and \dpimed{} for $\epsilon = 0.25$, and Bernoulli bandits $\mu_1$ (left) and $\mu_2$ (right).}
    \label{fig:experiments_global}~\vspace{-2em}
\end{figure}

\textbf{Results.} \textit{\dpklucb{} and \dpimed{} achieve lower regret in all Bernoulli environments and privacy budgets} under study (\textit{up to 10 times less on an average}). This is explained by the fact that \dpklucb{} and \dpimed{} do not forget half of the samples, and also thanks to the tighter indexes, where the optimism corresponds to a bound of $\frac{\log(t)}{N_a}$ than the $\frac{3 \log(t)}{N_a}$ needed by~\cite{azize2022privacy, hu2022near}.~\vspace*{-.5em}



\section{Discussions and Future Works}
We improve both regret lower bound (Theorem~\ref{thm:low_bound}) and upper bounds (Theorem~\ref{thm:upp_bound}) for Bernoulli bandits under $\epsilon$-global DP. We introduce a new information-theoretic quantity $\mathrm{d}_\eps$ (Equation~\eqref{eq:d_eps}) that tightly characterises the hardness of minimising regret under DP, and smoothly interpolates between the KL and the TV. Our proposed algorithms share ingredients with algorithms from the literature while alleviating the need to forget rewards as a design technique. This is thanks to a new tighter concentration inequality for private means of Bernoullis (Proposition~\ref{prop:conc}). Our results solve the open problem of having matching upper and lower bound up to the same constant posed by~\cite{azize2022privacy} and refute that forgetting is necessary for designing optimal DP bandit algorithms. 
An interesting future work would be to generalise our concentration inequality and, in turn, the regret upper bounds to general distribution families (e.g. sub-Gaussians, exponential families).




\bibliography{references}

\newpage
\appendix


\section{Outline}\label{app:outline}

The appendices are organised as follows:
\begin{itemize}
    \item In Appendix~\ref{app:adap_cont}, we extend the adaptive continual release model of~\cite{jain2023price} to bandits, and link it to $\epsilon$-global DP
    \item In Appendix~\ref{app:lb_proof}, we provide the proof of the three lemmas used to prove the regret lower bound of Theorem~\ref{thm:low_bound}
    \item In Appendix~\ref{app:conc}, we provide the complete proof of the concentration inequality of Proposition~\ref{prop:conc}
    \item  In Appendix~\ref{app:priv}, we provide the complete proof of the privacy guarantee of Proposition~\ref{prop:priv}
    \item In Appendix~\ref{app:ub_proof}, we provide the complete proof of the regret upper bounds of Theorem~\ref{thm:upp_bound}
    \item In Appendix~\ref{app:extended_exps}, we provide additional experimental results
    \item In Appendix~\ref{app:lemms}, we recall useful lemmas used throughout the paper
\end{itemize}

\section{Adaptive Continual Release Model for Bandits}
\label{app:adap_cont}

In this section, we extend the adaptive continual release model of~\cite{jain2023price} to bandits. In this model, the policy interacts with an adversary that chooses adaptively rewards based on previous outputs of the policy. 


In the following, we formalise the notion of an adaptive adversary from~\cite{jain2023price} and call it a ``reward-feeding" adversary. 


\begin{definition}[Reward-Feeding Adversary]
A reward-feeding adversary $\mathcal{A}$ is a sequence of functions $(\mathcal{A}_t)_{t = 1}^T$ such that, for $ t \in \{1, \dots, T\}$, 
\[
\mathcal{A}_t: a_1, \dots, a_t \rightarrow (r_t^L, r_t^R)~.
\]
\end{definition}

A ``reward-feeding" adversary $\mathcal{A}$ is a sequence of ``reward" functions that take as input the action-history and outputs a pair of rewards $(r_t^L, r_t^R)$. The reward-feeding adversary $\mathcal{A}$ has two channels: a left ``standard" channel $L$ and a right channel $R$. These channels are used to simulate ``neighbouring" rewards. 

Precisely, to simulate ``neighbouring" rewards, the interactive protocol between the policy $\pi$ and the reward-feeding adversary $\mathcal{A}$ has two hyper-parameters: (a) a specific ``challenge" time $t^\star \in \{1, T\}$, and (b) a binary $b \in \{L, R\}$. For steps $t \neq t^\star$, the policy observes a reward coming from the adversary's left ``standard" channel, i.e. $r_t = r_t^L$. Otherwise, when $t = t^\star$, the policy observes a reward from the channel corresponding to the secret binary $b$. 

In other words, if $b = L$, the policy $\pi$ always observes a reward from the left channel. When $b = R$, the policy observes the left channel reward for all steps, except at $t^\star$ where the policy observes a right channel reward. Thus, for any sequence of actions $(a_1, \dots, a_T)$ chosen by the policy $\pi$, and for any $t^\star$, the sequence of rewards observed by $\pi$ when $b = L$ is neighbouring to the sequence of rewards observed when $b = R$. In addition, these two sequences only differ at the reward observed at the challenge time $t^\star$, and the rewards have been adaptively chosen by the adversary.

Thus, we formalise the adaptive continual release interaction as follows:

\fbox{%
\begin{minipage}{0.9\columnwidth}
Let $b \in \{L, R\}$ and $t^\star \in \{1, \dots, T\}$\\
For $t=1,\ldots, \horizon$
\begin{enumerate}
    \item The policy $\pi$ selects an action $$a_t \sim \pi_t(\cdot \mid a_1, r_{1}, \dots, a_{t - 1}, r_{t-1}), \, a_t \in [K]$$
    \item The adversary $\mathcal{A}$ selects an adaptively chosen pair of rewards:
    $$ (r_t^L, r_t^R) = \mathcal{A}_t(a_1, \dots, a_t)$$
    \begin{itemize}
        \item If $t \neq t^\star$:
        $$ r_t = r_t^L $$
        \item If $t = t^\star$:
        $$ r_{t^\star} = r_{t^\star}^b$$
    \end{itemize}
    \item The policy $\pi$ observes the reward $r_t$
\end{enumerate}
\end{minipage}%
}

~ ~ \\
When this interaction is run with parameters $t^\star$ and $b$, we represent the interaction by $\pi \overset{b, t^\star}{\Leftrightarrow} \mathcal{A}$, and illustrate it in Figure~\ref{fig:int_adv_new}. The view of the adversary $\mathcal{A}$ in the interaction $\pi \overset{b, t^\star}{\Leftrightarrow} \mathcal{A}$ is the sequence of actions chosen by the policy $\pi$, \ie, 
\[
\View_{\mathcal{A}, \pi}^{b, t^\star} \deffn \View_\mathcal{A} (\pi \overset{b, t^\star}{\Leftrightarrow} \mathcal{A}) \deffn (a_1, \dots, a_\horizon)~.
\]

\begin{figure}[t!]
    \includegraphics[width=\linewidth]{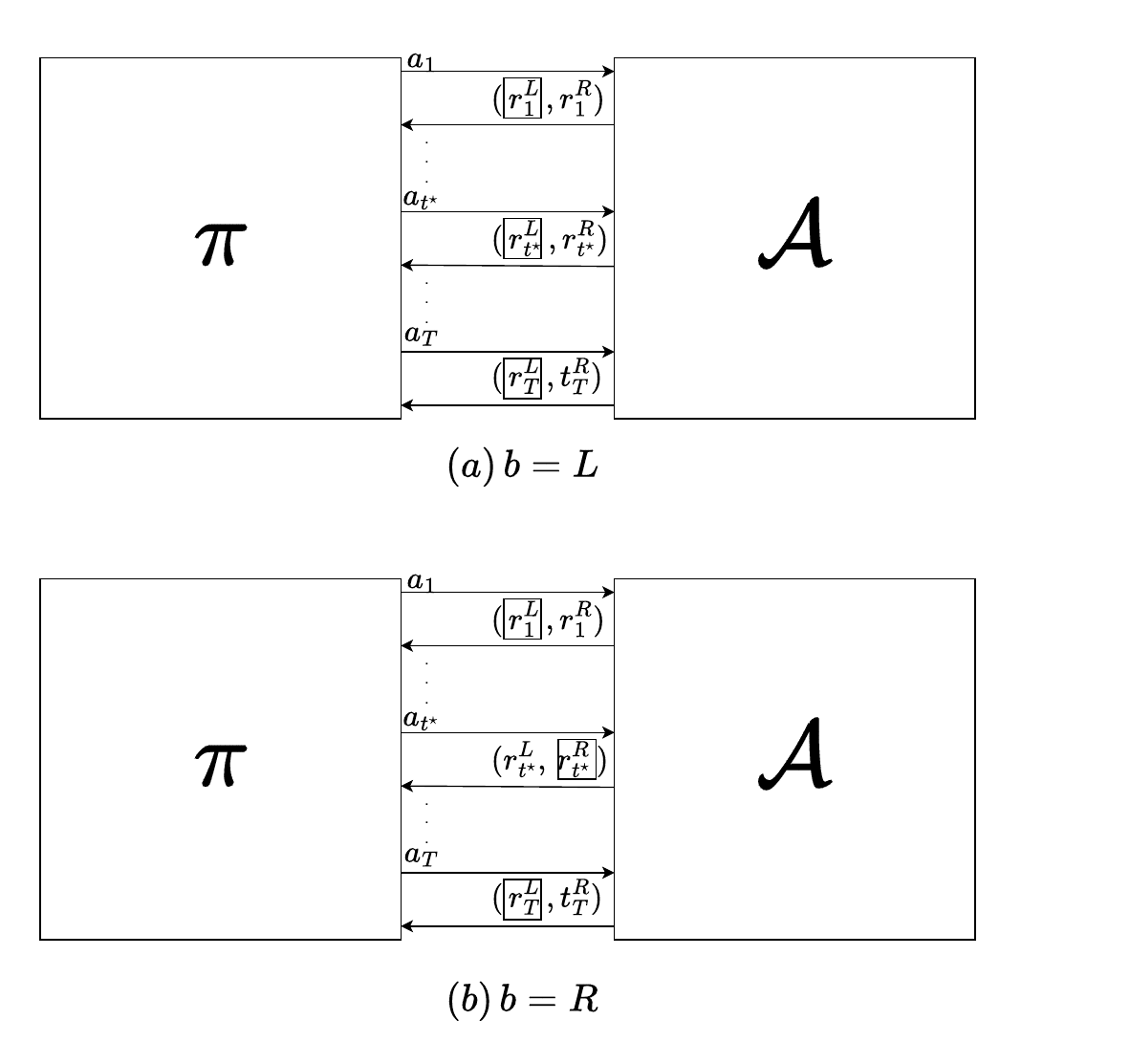}
    \caption{Interactive protocol in the adaptive continual release model between a policy $\pi$ and a reward-feeding adversary $\mathcal{A}$. The protocol in Figure (a) is run with $b=L$, while the protocol in Figure (b) is run with $b=L$. The framed part corresponds to the reward observed by the policy.}\label{fig:int_adv_new}
  \end{figure}

A policy is DP in the adaptive continual release model if the view of the adversary is indistinguishable when the interaction is run on $b = L$ and $b = R$ for any challenge step $t^\star$.

\begin{definition}[DP in the Adaptive Continual Release Model]\label{def:adap_cont_dp}{\ }
\begin{itemize}
        \item A policy $\pol $ is $(\epsilon, \delta)$-DP in the adaptive continual release model for a given $\epsilon\geq 0$ and $\delta\in [0,1)$, if for all reward-feeding adversaries $\mathcal{A}$, all subset of views $\mathcal{S} \subseteq [\arms]^\horizon$,
    \begin{align*}
 \sup_{t^\star \in \{1, \dots, T\}} \ \Pr[ \View_{\mathcal{A}, \pi}^{L, t^\star} \in \mathcal{S}] -  e^\epsilon \Pr[ \View_{\mathcal{A}, \pi}^{R, t^\star} \in \mathcal{S}] \leq \delta~.
    \end{align*}

    \item A policy $\pol $ is $\rho$-zCDP in the adaptive continual release model for a given $\rho \geq 0$, if for every $\alpha > 1$, and every reward-feeding adversary $\mathcal{A}$,   
    \begin{align*}
 \sup_{t^\star \in \{1, \dots, T\}} \ D_\alpha(\View_{\mathcal{A}, \pi}^{L, t^\star}  \| \View_{\mathcal{A}, \pi}^{R, t^\star})\leq \rho \alpha~.
    \end{align*}
\end{itemize}
\end{definition}


\begin{remark}\label{rem:int_cont}[Expanding the View of the Reward-feeding Adversary $\mathcal{A}$] For any reward-feeding adversary $\mathcal{A}$, any policy $\pi$ and any $t^\star \in \{1, \dots, T\}$, and any $(a_1, \dots, a_T) \in [K]^T$, we have for the left view:
    \begin{align*}
        \Pr[ \View_{\mathcal{A}, \pi}^{L, t^\star} = (a_1, \dots, a_T)] &= \pi_1(a_1) \pi_2(a_2 \mid a_1, \mathcal{A}^L_1(a_1)) \dots  \times\\
        &\pi_T(a_T \mid a_1, \mathcal{A}^L_1(a_1), \dots, a_{T - 1}, \mathcal{A}^L_{T-1}(a_1, \dots, a_{T-1}))~.
    \end{align*}
    On the other hand, for the right view:
    \begin{align*}
        \Pr[ \View_{\mathcal{A}, \pi}^{R, t^\star} = (a_1, \dots, a_T)] &= \pi_1(a_1) \pi_2(a_2 \mid a_1, \mathcal{A}^L_1(a_1)) \dots \times \\
        &\pi_{t^\star + 1}(a_{t^\star + 1} \mid a_1, \mathcal{A}^L_1(a_1), \dots, a_{t^\star}, \mathcal{A}^R_{t^\star}(a_1, \dots, a_{t^\star})) \dots \times \\
        &\pi_T(a_T \mid a_1, \mathcal{A}^L_t(a_1), \dots, a_{T - 1}, \mathcal{A}^L_{T-1}(a_1, \dots, a_{t-1}))~.
    \end{align*}
 
    Let us define $$\mathcal{A}^{L, t^\star}(a_1, \dots, a_T) \deffn (\mathcal{A}^L_1(a_1), \mathcal{A}^L_2(a_1, a_2), \dots,  \mathcal{A}^L_T(a_1, \dots, a_T))$$ to be the list of rewards that the policy observes when the protocol is run on the left channel. Also,
    $$\mathcal{A}^{R, t^\star}(a_1, \dots, a_T) \deffn (\mathcal{A}^L_1(a_1), \dots, \mathcal{A}^R_{t^\star}(a_1, \dots, a_{t^\star}) \dots \mathcal{A}^L_T(a_1, \dots, a_T))$$ is the list of rewards that the policy observes when the protocol is run on the right channel and $t^\star$. 
 
    We observe that, for any $(a_1, \dots, a_T) \in [K]^T$,
    \begin{itemize}
        \item[(a)] $\Pr[ \View_{\mathcal{A}, \pi}^{L, t^\star} = (a_1, \dots, a_T)] = \mathcal{V}^\pi ((a_1, \dots, a_T) \mid  \mathcal{A}^{L, t^\star}(a_1, \dots, a_T))$.
        \item[(b)] $\Pr[ \View_{\mathcal{A}, \pi}^{R, t^\star} = (a_1, \dots, a_T)] = \mathcal{V}^\pi ((a_1, \dots, a_T) \mid  \mathcal{A}^{R, t^\star}(a_1, \dots, a_T))$. 
        \item [(c)] $\mathcal{A}^{L, t^\star}(a_1, \dots, a_T)$ and $\mathcal{A}^{R, t^\star}(a_1, \dots, a_T)$ are neighbouring lists of rewards, and only differ at the $t^\star$-th element.
    \end{itemize}
    This remark will help connect the adaptive continual release model with View DP later.
\end{remark}
 
\begin{remark}\label{rem:tree_input}[Reward-feeding Adversary as a Tree Reward Input] A reward-feeding adversary can be represented by a tree of rewards.  Each node in the tree corresponds to a reward input. The tree has a depth of size $T$. At depth $t \in [T]$ of the tree reside all possible rewards the policy can observe at step $t$. Going from depth $t$ to depth $t + 1$ depends on the action $a_{t+1}$. Finally, the policy only observes the reward corresponding to its trajectory in the tree. An example of the tree is presented in Figure~\ref{fig:inp_rep}.c for $T=3$ and $K=2$.
 
 A policy $\pi$ is DP in the adaptive continual release model if and only if $\pi$ is DP when interacting with two neighbouring trees of rewards. Two trees of rewards are neighbouring if they only differ in rewards at one depth $t^\star \in [T]$.
     
\end{remark}

\begin{figure}[t!]
    \centering
    \begin{minipage}{0.4\textwidth}
    \centering
        \includegraphics[width=\linewidth]{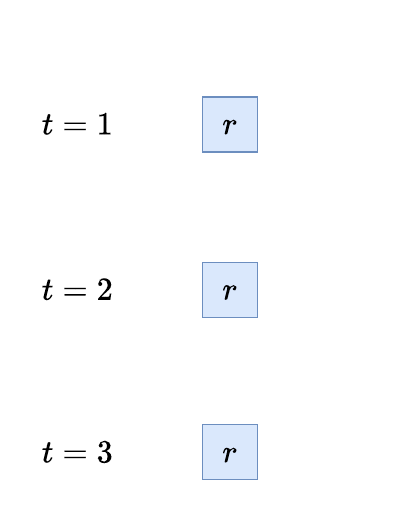}
        {(a) List of rewards}
    \end{minipage}\hfill
    \begin{minipage}{0.6\linewidth}
    \centering
        \includegraphics[width=\linewidth]{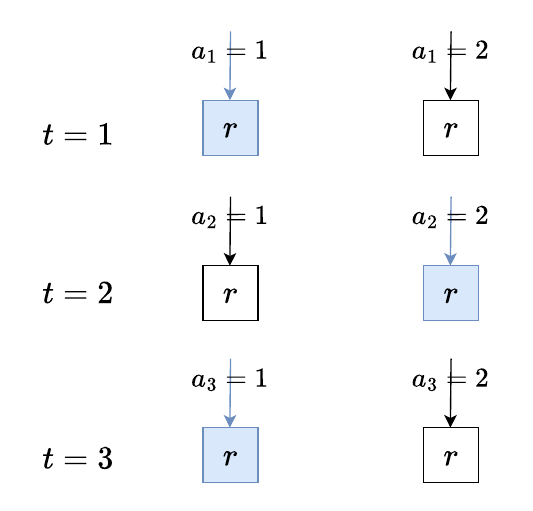}\\
        {(b) Table of rewards}
    \end{minipage}\vfill
    \begin{minipage}{\textwidth}
    \centering
        \includegraphics[width=\linewidth]{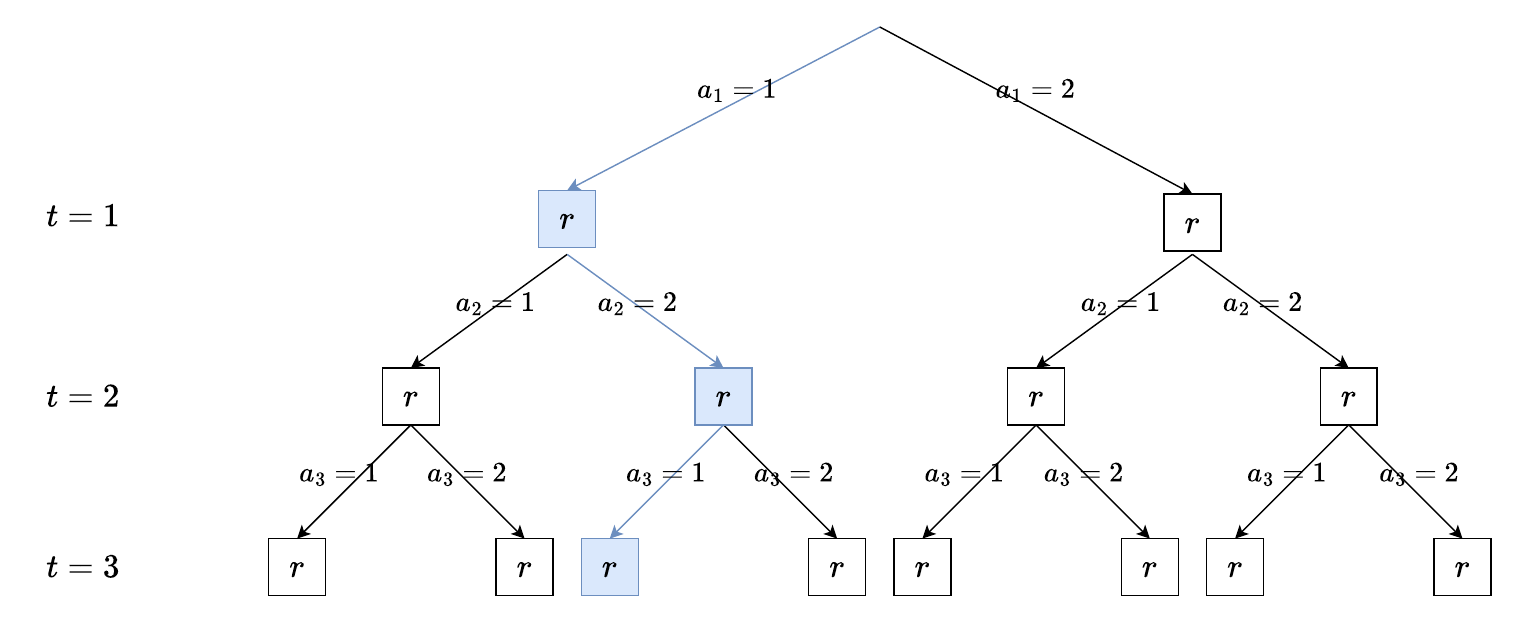}
        {(c) Tree of rewards}
    \end{minipage}
    \caption{Different reward representations for $T = 3$ and $K = 2$. The highlighted rewards are the rewards observed by the policy for the trajectory $(a_1, a_2, a_3) = (1,2,1)$}\label{fig:inp_rep}
\end{figure}

Now, we relate DP in the adaptive continual release model with View DP and Table DP.

\begin{proposition}[Link between the Adaptive Continual Release Model, View DP, and Table DP]\label{prop:adap_table_view}
    For any policy $\pi$, we have that
       \begin{itemize}
           \item[(a)] $\pi$ is DP in the adaptive continual release model $\Rightarrow$ $\pi$ is Table DP.
           \item[(b)] $\pi$ is $\epsilon$-DP in the adaptive continual release model $\Leftrightarrow$ $\pi$ is $\epsilon$-Table DP $\Leftrightarrow$ $\pi$ is $\epsilon$-View DP.
       \end{itemize}
   \end{proposition}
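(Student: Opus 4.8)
The plan is to prove part~(a) by exhibiting a table of rewards as a (non-adaptive) reward-feeding adversary, and then to obtain part~(b) by combining part~(a) with Proposition~\ref{prop:tab_view} and the pointwise reformulation of pure differential privacy on the finite output space $[K]^T$, using the identities recorded in Remark~\ref{rem:int_cont}.

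\emph{Part (a).} Fix two neighbouring reward tables $\mathbf{x} \sim \mathbf{x}'$; by definition they agree in every row except at most one, say row $t^\star$. Define the reward-feeding adversary $\mathcal{A}$ by $\mathcal{A}_t(a_1,\dots,a_t) \defn (x_{t,a_t}, x'_{t,a_t})$, which is non-adaptive in the sense that its output depends on the action history only through the current action. Running the protocol $\pi \overset{L,t^\star}{\Leftrightarrow} \mathcal{A}$, the policy observes $x_{t,a_t}$ at every step, so $\View^{L,t^\star}_{\mathcal{A},\pi}$ is exactly the distribution $\mathcal{M}^\pi_{\mathbf{x}}$ on $[K]^T$; running $\pi \overset{R,t^\star}{\Leftrightarrow} \mathcal{A}$, the policy observes $x_{t,a_t} = x'_{t,a_t}$ for $t \neq t^\star$ and $x'_{t^\star,a_{t^\star}}$ at $t = t^\star$, which is precisely the interaction of $\pi$ with $\mathbf{x}'$, i.e.\ $\View^{R,t^\star}_{\mathcal{A},\pi} = \mathcal{M}^\pi_{\mathbf{x}'}$. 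Applying the adaptive-continual-release guarantee of $\pi$ to this particular $\mathcal{A}$ at challenge time $t^\star$ then yields the required $(\epsilon,\delta)$-inequality between $\mathcal{M}^\pi_{\mathbf{x}}$ and $\mathcal{M}^\pi_{\mathbf{x}'}$ (the case $\mathbf{x}=\mathbf{x}'$ is trivial), which is exactly $\epsilon$-Table DP; the same reduction works verbatim for the $\rho$-zCDP variant since it identifies the relevant pairs of output distributions.

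\emph{Part (b).} The equivalence $\epsilon$-Table DP $\Leftrightarrow$ $\epsilon$-View DP is Proposition~\ref{prop:tab_view}, and $\epsilon$-adaptive DP $\Rightarrow$ $\epsilon$-Table DP is part~(a) with $\delta=0$. It remains to show $\epsilon$-View DP $\Rightarrow$ $\epsilon$-adaptive DP. Since the output space $[K]^T$ is finite, $\epsilon$-View DP is equivalent to its per-outcome form: for every $\mathbf{a}\in[K]^T$ and every pair of neighbouring reward lists $\mathbf{r}\sim\mathbf{r}'$, $\mathcal{V}^\pi(\mathbf{a}\mid\mathbf{r}) \le e^\epsilon \mathcal{V}^\pi(\mathbf{a}\mid\mathbf{r}')$. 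Now fix any reward-feeding adversary $\mathcal{A}$, any challenge time $t^\star$, and any $\mathbf{a}\in[K]^T$. By Remark~\ref{rem:int_cont}(a)--(c), $\Pr[\View^{L,t^\star}_{\mathcal{A},\pi}=\mathbf{a}] = \mathcal{V}^\pi(\mathbf{a}\mid \mathcal{A}^{L,t^\star}(\mathbf{a}))$, $\Pr[\View^{R,t^\star}_{\mathcal{A},\pi}=\mathbf{a}] = \mathcal{V}^\pi(\mathbf{a}\mid \mathcal{A}^{R,t^\star}(\mathbf{a}))$, and the two lists $\mathcal{A}^{L,t^\star}(\mathbf{a})$ and $\mathcal{A}^{R,t^\star}(\mathbf{a})$ are neighbouring (they differ only in the $t^\star$-th coordinate). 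Applying the per-outcome form of $\epsilon$-View DP to this neighbouring pair gives $\Pr[\View^{L,t^\star}_{\mathcal{A},\pi}=\mathbf{a}] \le e^\epsilon \Pr[\View^{R,t^\star}_{\mathcal{A},\pi}=\mathbf{a}]$; summing over $\mathbf{a}\in\mathcal{S}$ for any $\mathcal{S}\subseteq[K]^T$ and taking the supremum over $t^\star$ yields $\epsilon$-DP in the adaptive continual release model, and chaining the three implications closes the cycle.

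\textbf{Main obstacle.} The delicate step is the last one: the reward list actually fed to $\pi$ depends on the realised action sequence $\mathbf{a}$, so one cannot apply a single View DP inequality to a fixed neighbouring pair of reward lists and then sum. What rescues the argument is that for \emph{pure} DP the event-wise guarantee is equivalent to the per-outcome guarantee (on a discrete output space), which lets us invoke a \emph{different} neighbouring pair for each outcome $\mathbf{a}$ before summing. This is precisely why the equivalence is expected to hold only for $\epsilon$-DP (and $\rho$-zCDP, via the analogous per-outcome/divergence argument) and not verbatim for $(\epsilon,\delta)$-DP, matching the ``$\Rightarrow$'' rather than ``$\Leftrightarrow$'' in part~(a). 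In writing this up I would make the finiteness of $[K]^T$ and the per-outcome reformulation explicit, and verify that Remark~\ref{rem:int_cont} indeed supplies both the identities linking $\View^{b,t^\star}_{\mathcal{A},\pi}$ to $\mathcal{V}^\pi$ and the neighbouring relation between $\mathcal{A}^{L,t^\star}(\mathbf{a})$ and $\mathcal{A}^{R,t^\star}(\mathbf{a})$.
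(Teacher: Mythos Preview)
Your proposal is correct and follows essentially the same approach as the paper: part~(a) via the reduction of a pair of neighbouring tables to a non-adaptive reward-feeding adversary (the paper packages this as Reduction~\ref{red:table_to_adv}), and part~(b) by closing the cycle through Proposition~\ref{prop:tab_view} and the per-outcome argument of Remark~\ref{rem:int_cont}, using that pure DP on the finite space $[K]^T$ reduces to atomic events. Your ``Main obstacle'' paragraph also matches the paper's closing remark that the per-outcome trick breaks for $(\epsilon,\delta)$-DP because composite events must be handled directly.
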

   
Proposition~\ref{prop:adap_table_view} shows that the adaptive continual release model is stronger than Table DP. For pure $\epsilon$-DP, the adaptive continual release model, Table DP and View DP are all equivalent.

To prove this proposition, we use the following reduction.

\begin{reduction}[From table of rewards to ``reward-feeding" adversaries]\label{red:table_to_adv}
For a pair of reward tables $\textbf{x}, \textbf{x'} \in (\real^\arms)^\horizon$, we define $\mathcal{A}(\textbf{x}, \textbf{x'})$ to be the ``reward-feeding" adversary defined by
\[
\mathcal{A}(\textbf{x}, \textbf{x'})_t: a_1, \dots, a_t \rightarrow (x_{t, a_t}, x'_{t, a_t})~.
\]

In other words, at step $t$, the adversary $\mathcal{A}(\textbf{x}, \textbf{x'})$ only uses the last action $a_t$ and returns the $a_t$-th column from $x_t$ on the left channel, and the $a_t$-th column from $x'_t$ on the right channel.

For neighbouring tables $\textbf{x}$ and $ \textbf{x'}$ which only differ at some step $t^\star$, it is possible to show that, for every $S \in \real^T$, we have
    \begin{itemize}
        \item $\Pr[ \View_{\mathcal{A}(\textbf{x}, \textbf{x'}), \pi}^{L, t^\star} \in \mathcal{S}] = \mathcal{M}^\pi_\textbf{x}(S)$.
        \item $\Pr[ \View_{\mathcal{A}(\textbf{x}, \textbf{x'}), \pi}^{R, t^\star} \in \mathcal{S}] = \mathcal{M}^\pi_\textbf{x'}(S)$.
    \end{itemize}

    In other words, the batch mechanism $\mathcal{M}^\pi$ combined with neighbouring tables can be ``simulated" using a specific type of ``reward-feeding" adversaries that only care about the last action from the history. 
\end{reduction}



\begin{proof}
    (a) Suppose that $\pi$ is DP in the adaptive continual release model.

    Let $t^\star \in [T]$, and $x \sim x'$ be two tables of rewards in $(\real^\arms)^\horizon$ that only differ at step $t^\star$.
    Using Reduction~\ref{red:table_to_adv}, we build $\mathcal{A}(x, x')$.

    For this construction, we have that $\mathcal{M}^\pi_x = \View_{\mathcal{A}(\textbf{x}, \textbf{x'}), \pi}^{L, t^\star}$ and $\mathcal{M}^\pi_{x'} = \View_{\mathcal{A}(\textbf{x}, \textbf{x'}), \pi}^{R, t^\star}$.

    Since $\pi$ is DP in the adaptive continual release model, $\View_{\mathcal{A}(\textbf{x}, \textbf{x'}), \pi}^{L, t^\star}$ and $\View_{\mathcal{A}(\textbf{x}, \textbf{x'}), \pi}^{L, t^\star} $ are indistinguishable. Thus, $\mathcal{M}^\pi_x$ and $\mathcal{M}^\pi_{x'}$ are indistinguishable, \ie, $\mathcal{M}^\pi$ is DP and $\pi$ is Table DP. 
    
(b) To prove this part, it is enough to show that $\epsilon$-View DP implies $\epsilon$-DP in the adaptive continual release model.

Suppose that $\pi$ is $\epsilon$-View DP, \ie $\mathcal{V}^\pi$ is $\epsilon$-DP.
Let $\mathcal{A}$ be a ``reward-feeding" adversary, and $(a_1, \dots, a_T) \in [K]^T$ a sequence of arms.

Using Remark~\ref{rem:int_cont} and the notation defined there, we have
\begin{align*}
    \Pr[ \View_{\mathcal{A}, \pi}^{L, t^\star} = (a_1, \dots, a_T)] &= \mathcal{V}^\pi ((a_1, \dots, a_T) \mid  \mathcal{A}^{L, t^\star}(a_1, \dots, a_T))\\
    &\leq e^\epsilon \mathcal{V}^\pi ((a_1, \dots, a_T) \mid  \mathcal{A}^{R, t^\star}(a_1, \dots, a_T))\\
    &= e^\epsilon \Pr[ \View_{\mathcal{A}, \pi}^{L, t^\star} = (a_1, \dots, a_T)],
\end{align*}
where the inequality holds because $\mathcal{V}^\pi$ is DP, and~  $\mathcal{A}^{L, t^\star}(a_1, \dots, a_T)$ and $\mathcal{A}^{R, t^\star}(a_1, \dots, a_T)$ are neighbouring lists of rewards.

Finally, this means that $\pi$ is $\epsilon$-DP in the adaptive continual release model, since for pure DP, it is enough to check the atomic events $(a_1, \dots, a_T)$.

Note that the proof breaks if we consider composite events, which are necessary for approximate DP proofs.
\end{proof}

\noindent \textbf{Summary of the relationship between definitions.} We introduced three increasingly stronger input representations and their corresponding DP definitions: list of rewards with View DP, table of rewards with Table DP, and tree of rewards with DP in the adaptive continual release. These representations are summarised in Figure~\ref{fig:inp_rep} for $T = 3$ and $K=2$.

In general, DP in the adaptive continual release is stronger than Table DP, which is stronger than View DP. For $\epsilon$-pure DP, these three definitions are equivalent, with the same privacy budget $\epsilon$. More care is needed for other variants of DP, where going from one definition to another happens with a loss in the privacy budgets (Proposition 1 in~\cite{azize2022privacy}).

\section{Lower Bound Proof}\label{app:lb_proof}
In this section, we present the proof of the three main lemma used to prove Theorem~\ref{thm:low_bound}. We adopt the same notation introduced in the proof of Theorem~\ref{thm:low_bound}.

\begin{lemma}[Controlling $\mathbb{P}_{\gamma \pi} (\Omega \cap L \cap A)$, aka Double change of environment]\label{lem:doub_env}
    We show that
    \begin{align}
    \mathbb{P}_{\gamma \pi} \left(\Omega \cap L \cap A\right) \leq e^{(1 + \alpha) n_2 \left(\epsilon \mathrm{TV}(P_2, P'_2) + \mathrm{kl}(\mu_2',\mu''_2) \right) }  \frac{O(T^a)}{T - n_2},
    \end{align}
    for any $a>0$.
\end{lemma}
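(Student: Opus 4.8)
The quantity $\mathbb{P}_{\gamma \pi}(\Omega \cap L \cap A)$ needs to be controlled by performing two successive changes of measure, recorded through the events $L$ and $A$, and then invoking the consistency of $\pi$ in environment $\nu''$. The plan is to start from the coupled environment $\gamma$ of $\nu = (P_1, P_2)$ and $\nu' = (P_1, P_2')$, and proceed in three stages.

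\emph{Stage 1 (from $\gamma$ to $\nu'$ via group privacy).} On the event $L$, the two reward sequences $\textbf{r}$ and $\textbf{r}'$ satisfy $\dham(\textbf{r}, \textbf{r}') \le (1 + \alpha) n_2 \mathrm{TV}(P_2, P_2')$. Since $\pi$ is $\epsilon$-global DP (in the View DP sense), the conditional law of the action sequence $\textbf{a}$ given $\textbf{r}$ and given $\textbf{r}'$ differ by a multiplicative factor at most $e^{\epsilon \dham(\textbf{r}, \textbf{r}')}$ by the group privacy property (Proposition~\ref{prp:grp_priv}). Crucially, in the coupled environment, conditioning on the event $\Omega = \{N_2(T) \le n_2\}$ and $L$ — both measurable with respect to $(\textbf{a}, \textbf{r}, \textbf{r}')$ — one can rewrite $\mathbb{P}_{\gamma\pi}(\Omega \cap L \cap A)$ as an integral over $\textbf{r}'$ of $\mathcal{V}^\pi_{\textbf{r}'}(\cdot)$-mass times $c_{a_t}$-densities, and on $L$ exchange $\mathcal{V}^\pi_{\textbf{r}}$ for $\mathcal{V}^\pi_{\textbf{r}'}$ at the cost of $e^{(1+\alpha)\epsilon n_2 \mathrm{TV}(P_2, P_2')}$. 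This yields $\mathbb{P}_{\gamma\pi}(\Omega \cap L \cap A) \le e^{(1+\alpha)\epsilon n_2 \mathrm{TV}(P_2, P_2')}\, \mathbb{P}_{\nu' \pi}(\Omega \cap A)$, where now $\pi$ interacts with $\nu'$ directly (the $\textbf{r}'$-marginal of $\gamma$ is exactly $\nu'$).

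\emph{Stage 2 (from $\nu'$ to $\nu''$ via the KL/Lai-Robbins change of measure).} On the event $A$, the log-likelihood ratio $\sum_{t=1}^T \log \frac{\mathrm{d}P_{a_t}'(r_t')}{\mathrm{d}P_{a_t}''(r_t')}$ is at most $(1+\alpha)\mathrm{kl}(\mu_2', \mu_2'') n_2$. Writing $\mathbb{P}_{\nu'\pi}(\Omega \cap A) = \mathbb{E}_{\nu''\pi}\left[\frac{\mathrm{d}\mathbb{P}_{\nu'\pi}}{\mathrm{d}\mathbb{P}_{\nu''\pi}} \mathds{1}_{\Omega \cap A}\right]$, and noting that the likelihood ratio factorizes over steps and equals $1$ whenever $a_t = 1$ (the two environments share $P_1$), we bound it on $A$ by $e^{(1+\alpha)\mathrm{kl}(\mu_2', \mu_2'') n_2}$, giving $\mathbb{P}_{\nu'\pi}(\Omega \cap A) \le e^{(1+\alpha)\mathrm{kl}(\mu_2', \mu_2'') n_2}\, \mathbb{P}_{\nu''\pi}(\Omega)$.

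\emph{Stage 3 (consistency in $\nu''$).} In $\nu''$ arm $1$ is suboptimal with gap $\mu_2'' - \mu_1 > 0$, so $\reg_T(\pi, \nu'') \ge (\mu_2'' - \mu_1)\, \mathbb{E}_{\nu''\pi}[N_1(T)] \ge (\mu_2'' - \mu_1)(T - n_2)\, \mathbb{P}_{\nu''\pi}(N_1(T) \ge T - n_2) = (\mu_2'' - \mu_1)(T - n_2)\, \mathbb{P}_{\nu''\pi}(\Omega)$, using $\Omega = \{N_2(T) \le n_2\} = \{N_1(T) \ge T - n_2\}$ for the two-armed bandit. Consistency of $\pi$ gives $\reg_T(\pi, \nu'') = O(T^a)$ for every $a > 0$, hence $\mathbb{P}_{\nu''\pi}(\Omega) \le \frac{O(T^a)}{(\mu_2'' - \mu_1)(T - n_2)} = \frac{O(T^a)}{T - n_2}$ (absorbing the constant $\mu_2'' - \mu_1$). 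Combining the three stages and recalling $\mathrm{TV}(P_2, P_2') = \mu_2' - \mu_2$ closes the argument.

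\textbf{Main obstacle.} The delicate point is Stage 1: making the group-privacy change of measure rigorous \emph{inside} the coupled augmented space, i.e., checking that $\Omega$, $L$, and $A$ depend on the augmented history in a way compatible with the View-DP reduction — $\Omega$ and $L$ depend on $(\textbf{a}, \textbf{r}, \textbf{r}')$ while $A$ depends only on $(\textbf{a}, \textbf{r}')$ — so that after conditioning on $\textbf{r}'$ we really are comparing $\mathcal{V}^\pi_{\textbf{r}}$ against $\mathcal{V}^\pi_{\textbf{r}'}$ with Hamming distance controlled by $L$. One must also be careful that the maximal coupling $c_{a_t}$ is used only to guarantee $\mathbb{P}[r_t \ne r_t' \mid a_t] = \mathrm{TV}(P_{a_t}, P_{a_t}')$, and that for arm $1$ we have $P_1 = P_1$ so $r_t = r_t'$ almost surely there, making $\dham(\textbf{r},\textbf{r}')$ effectively a sum over pulls of arm $2$ only — which is why the bound on $\dham$ scales with $n_2$ rather than $T$ on the event $\Omega \cap L$.
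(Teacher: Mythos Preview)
Your proposal is correct and follows essentially the same three-stage route as the paper: first a group-privacy change of measure from the coupled environment $\gamma$ to $\nu'$ on the event $L$ (using that the $\textbf{r}'$-marginal of $\gamma$ is exactly $\nu'$ once the policy has been switched to condition on $\textbf{r}'$), then a Lai--Robbins change of measure from $\nu'$ to $\nu''$ on the event $A$, and finally Markov plus consistency in $\nu''$ to bound $\mathbb{P}_{\nu''\pi}(\Omega)$. The paper executes this as a single chain of integral inequalities rather than stating the intermediate bound $\mathbb{P}_{\gamma\pi}(\Omega\cap L\cap A)\le e^{(1+\alpha)\epsilon n_2\mathrm{TV}(P_2,P_2')}\,\mathbb{P}_{\nu'\pi}(\Omega\cap A)$ explicitly, and it invokes Markov directly on $N_1(T)$ rather than routing through the regret, but these are purely cosmetic differences.
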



\begin{proof}
We have
\begin{align*}
    &\mathbb{P}_{\gamma \pi} \left(\Omega \cap L \cap A\right)\\
    &= \sum_{\textbf{a}} \int_{\textbf{r}} \int_{\textbf{r'}} \mathds{1}(\Omega \cap L \cap A) \prod_{t=1}^T \pi_t(a_t \mid a_1 , r_1 , \dots , a_{t-1} , r_{t-1} ) c_{a_t} (r_t, r'_t) \dd r_{t} \dd r'_{t} \\
    &\stackrel{(a)}{\leq} \sum_{\textbf{a}} \int_{\textbf{r}} \int_{\textbf{r'}} \mathds{1}(\Omega \cap L \cap A) e^{\epsilon  \mathrm{dham}(r, r')} \prod_{t=1}^T \pi_t(a_t \mid a_1 , r'_1 , \dots , a_{t-1} , r'_{t-1} ) c_{a_t} (r_t, r'_t) \dd r_{t} \dd r'_{t}\\
    &\stackrel{(b)}{\leq} e^{\epsilon (1 + \alpha) n_2 \mathrm{TV}(P_2, P'_2)} \sum_{\textbf{a}} \int_{\textbf{r}} \int_{\textbf{r'}} \mathds{1}(\Omega \cap L \cap A) \prod_{t=1}^T \pi_t(a_t \mid a_1 , r'_1 , \dots , a_{t-1} , r'_{t-1} ) c_{a_t} (r_t, r'_t) \dd r_{t} \dd r'_{t}\\
    &\stackrel{(c)}{\leq} e^{\epsilon (1 + \alpha) n_2 \mathrm{TV}(P_2, P'_2)} \sum_{\textbf{a}} \int_{\textbf{r}} \int_{\textbf{r'}} \mathds{1}(\Omega \cap A) \prod_{t=1}^T \pi_t(a_t \mid a_1 , r'_1 , \dots , a_{t-1} , r'_{t-1} ) c_{a_t} (r_t, r'_t) \dd r_{t} \dd r'_{t}\\
    &\stackrel{(d)}{=}  e^{\epsilon (1 + \alpha) n_2 \mathrm{TV}(P_2, P'_2)} \sum_{\textbf{a}} \int_{\textbf{r'}} \mathds{1}(\Omega \cap A) \prod_{t=1}^T \pi_t(a_t \mid a_1 , r'_1 , \dots , a_{t-1} , r'_{t-1} ) p'_{a_t} (r'_t)  \dd r'_{t}\\
    &= e^{\epsilon (1 + \alpha) n_2 \mathrm{TV}(P_2, P'_2)} \sum_{\textbf{a}} \int_{\textbf{r'}} \mathds{1}(\Omega \cap A) e^{\sum_{t=1}^{T} \log \frac{\dd P_{a_t}'(r'_t)}{\dd P_{a_t}''(r'_t)}} \prod_{t=1}^T \pi_t(a_t \mid a_1 , r'_1 , \dots , a_{t-1} , r'_{t-1} ) p''_{a_t} (r'_t) \dd r'_{t}\\
    &\stackrel{(e)}{\leq} e^{\epsilon (1 + \alpha) n_2 \mathrm{TV}(P_2, P'_2)} e^{(1 + \alpha)\mathrm{kl}(\mu_2',\mu''_2)n_2} \sum_{\textbf{a}} \int_{\textbf{r'}} \mathds{1}(\Omega)  \prod_{t=1}^T \pi_t(a_t \mid a_1 , r'_1 , \dots , a_{t-1} , r'_{t-1} ) p''_{a_t} (r'_t)  \dd r'_{t} \\
    &= e^{(1 + \alpha) n_2 \left(\epsilon \mathrm{TV}(P_2, P'_2) + \mathrm{kl}(\mu_2',\mu''_2) \right) } \mathbb{P}_{\nu'' \pi} \left(N_2(T) \leq n_2 \right),
\end{align*}
where:

(a) is because $\pi$ is $\epsilon$-DP;

(b) is by definition of $L$;

(c) is because $\mathds{1}(\Omega \cap L \cap A) \leq \mathds{1}(\Omega \cap A)$;

(d) by definition of the coupling, and because $\Omega \cap A$ doesn't depend on $(r_t)_{t=1}^T$;

(e) by definition of $A$.

Then, using Markov inequality and the consistency of $\pi$, we get
\begin{align*}
    \mathbb{P}_{\nu'' \pi} \left(N_2(T) \leq n_2 \right) &= \mathbb{P}_{\nu'' \pi} \left( T - N_2(T) \geq T - n_2 \right)\\
    &= \mathbb{P}_{\nu'' \pi} \left(N_1(T) \geq T - n_2 \right)\\
    &\leq \frac{\mathbb{E}_{\nu'' \pi}(N_1(T))}{T - n_2} = \frac{O(T^\alpha)}{T - n_2},
\end{align*}
for any $a > 0$, since arm 1 is sub-optimal in environment $\nu''$ and $\pi$ is consistent.

All in all, we have that, for any $a>0$,
\begin{align*}
    \mathbb{P}_{\gamma \pi} \left(\Omega \cap L \cap A\right) \leq e^{(1 + \alpha) n_2 \left(\epsilon \mathrm{TV}(P_2, P'_2) + \mathrm{kl}(\mu_2',\mu''_2) \right) }  \frac{O(T^a)}{T - n_2}~.
\end{align*}
\end{proof}

\begin{lemma}[Controlling $\mathbb{P}_{\gamma \pi} (\Omega \cap L \cap A^c) $]\label{lem:p_omega_L_A_c}
    Choosing $n_2 = n_2(T)$ a function such that $n_2(T) \rightarrow \infty$ when $T \rightarrow \infty$, then 
\[
    \mathbb{P}_{\gamma \pi} (\Omega \cap L \cap A^c) = o_T(1),
\]
asymptotically in $T$.
\end{lemma}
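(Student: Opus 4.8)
The plan is to drop the event $L$ — since $\Omega\cap L\cap A^c\subseteq\Omega\cap A^c$, it suffices to show $\mathbb{P}_{\gamma\pi}(\Omega\cap A^c)=o_T(1)$ — and then reduce $A^c$ to a large‑deviation event for an i.i.d.\ sum. The starting observation is that $\nu'$ and $\nu''$ agree on arm $1$ (so $P_1'=P_1=P_1''$), hence the maximal coupling $c_1=c_\infty(P_1,P_1')$ is the diagonal coupling ($r_t'=r_t$ on arm‑$1$ steps) and every arm‑$1$ term of the log‑likelihood‑ratio sum appearing in the definition of $A$ vanishes identically. Writing $r'_{2,i}$ for the value $r_t'$ produced at the $i$‑th pull of arm $2$, and $W_i\defn\log\frac{\dd P'_2(r'_{2,i})}{\dd P''_2(r'_{2,i})}$, we thus have $\sum_{t=1}^T\log\frac{\dd P'_{a_t}(r'_t)}{\dd P''_{a_t}(r'_t)}=\sum_{i=1}^{N_2(T)}W_i$.

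Next I would show that under $\mathbb{P}_{\gamma\pi}$ the $W_i$ are i.i.d.\ copies of $\log\frac{\dd P'_2}{\dd P''_2}(X)$ with $X\sim P'_2$, and in particular (for Bernoulli arms) bounded with mean $\mathrm{kl}(\mu_2',\mu_2'')\defn d$. Indeed, the coupled environment $\gamma$ feeds, at every step $t$, a fresh draw $(r_t,r_t')\sim c_{a_t}$ independent of the past, so on an arm‑$2$ step $r_t'$ is a fresh $P'_2$ sample independent of everything observed so far; iterating this over the random arm‑$2$ pull times (and, say, padding with extra independent $P'_2$ draws for $i>N_2(T)$ so that $W_i$ is defined for all $i$) yields the i.i.d.\ claim. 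After discarding the degenerate case $P'_2=P''_2$ (where $W_i\equiv 0$, $A^c=\emptyset$, and the lemma is trivial), we may assume $0<d<\infty$.

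Then I would finish by a law‑of‑large‑numbers argument. On $\Omega\cap A^c$ we have $N_2(T)\le n_2$ together with $\sum_{i=1}^{N_2(T)}W_i>(1+\alpha)\,d\,n_2$, so $\max_{1\le n\le n_2}\sum_{i=1}^n W_i>(1+\alpha)\,d\,n_2$. The strong law gives $\tfrac1n\sum_{i=1}^n W_i\to d$ a.s.; splitting an arbitrary prefix into a fixed‑length head of bounded total mass and a tail on which the partial averages lie within $\eta$ of $d$ upgrades this to $\tfrac1{n_2}\max_{1\le n\le n_2}\sum_{i=1}^n W_i\to d$ a.s., hence in probability. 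Therefore $\mathbb{P}_{\gamma\pi}(\Omega\cap A^c)\le\mathbb{P}_{\gamma\pi}\!\big(\tfrac1{n_2}\max_{1\le n\le n_2}\sum_{i=1}^n W_i>(1+\alpha)d\big)\to 0$ as $n_2\to\infty$; since $n_2=n_2(T)\to\infty$, this is $o_T(1)$. (Equivalently, a Hoeffding bound for the bounded $W_i$ together with a union bound over $n\le n_2$ gives $\mathbb{P}_{\gamma\pi}(\Omega\cap A^c)\le n_2\,e^{-c\alpha^2 d^2 n_2}$ for an explicit $c>0$, which one may prefer for an explicit rate.)

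The step I expect to be most delicate is the second one: rigorously justifying that the adaptively‑indexed rewards $(W_i)_i$ are genuinely i.i.d.\ rather than merely a martingale‑difference sequence. The cleanest way around this is to realize all of $\gamma$'s coupling randomness in advance — an independent array of $c_a$‑draws per arm, the $i$‑th arm‑$2$ draw being read off at the $i$‑th arm‑$2$ pull — which makes the i.i.d.\ property immediate and lets the SLLN (or the Hoeffding bound) apply with no martingale machinery.
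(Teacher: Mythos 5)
Your proposal is correct and follows essentially the same route as the paper: drop $L$, rewrite the log-likelihood-ratio sum as $\sum_{s=1}^{N_2(T)} W_s$ over arm-2 pulls with the $W_s$ i.i.d., bounded, and of mean $\mathrm{kl}(\mu_2',\mu_2'')$ under the coupling, reduce $\Omega\cap A^c$ to the maximal partial-sum event $\max_{n\le n_2}\sum_{s\le n}W_s>(1+\alpha)\mathrm{kl}(\mu_2',\mu_2'')\,n_2$, and let $n_2(T)\to\infty$. The only (harmless) differences are that you prove the maximal-sum step yourself via the SLLN or a union-bounded Hoeffding estimate, where the paper invokes its asymptotic maximal Hoeffding lemma (Lemma~\ref{lem:asum_hoeff}), and that you spell out the i.i.d.\ justification (pre-realizing the coupling draws) that the paper states in one line.
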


\begin{proof}
First, we have
\begin{align*}
    \mathbb{P}_{\gamma \pi} \left(\Omega \cap L \cap A^c\right) &\leq \mathbb{P}_{\gamma \pi} \left(\Omega \cap A^c\right)~.
\end{align*}

Let us introduce the notation $r'_{a, s} \triangleq r'_{\tau_{a,s}}$ where $\tau_{a,s} \triangleq \min \{t \in \mathbb{N}: N_a(t) = s\}$. Then, 
$$\sum_{t=1}^{T} \log \frac{\dd P_{a_t}'(r'_t)}{\dd P_{a_t}''(r'_t)} = \sum_{s=1}^{N_2(T)} \log \frac{\dd P_{2}'(r'_{2,s})}{\dd P_{2}''(r'_{2,s})} = \sum_{s=1}^{N_2(T)} W_s,$$
where $W_s \triangleq  \log \frac{\dd P_{2}'(r'_{2,s})}{\dd P_{2}''(r'_{2,s})} $ are i.i.d bounded random variables, with positive mean $\mathbb{E}_{\gamma \pi}[W_s] = \mathrm{kl}(\mu'_2, \mu''_2)$. This is true since under the coupling $\gamma$, the marginal of $r'_{2,s}$ is $P'_2$.

Then, we get
\begin{align*}
    \mathbb{P}_{\gamma \pi} \left(\Omega \cap A^c \right) &\leq \mathbb{P}_{\gamma \pi} \left(\exists m \leq n_2: \sum_{s=1}^{m} W_s >  (1 + \alpha) \mathrm{kl}(\mu_2', \mu''_2) n_2\right)\\
    &\leq \mathbb{P}_{\gamma \pi} \left(\frac{\max_{m \leq n_2} \sum_{s=1}^{m} W_s}{n_2} >  (1 + \alpha) \mathrm{kl}(\mu_2', \mu''_2)\right)~.
\end{align*}

Using Asymptotic maximal Hoeffding inequality (Lemma~\ref{lem:asum_hoeff}), we have that 
\[
\lim_{n \rightarrow \infty} \mathbb{P}_{\gamma \pi} \left(\frac{\max_{m \leq n} \sum_{s=1}^{m} W_s}{n} >  (1 + \alpha) \mathrm{kl}(\mu_2', \mu''_2) \right) = 0~.
\]

Thus, by choosing $n_2 = n_2(T)$ a function such that $n_2(T) \rightarrow \infty$ when $T \rightarrow \infty$, then 
\begin{align*}
    \mathbb{P}_{\gamma \pi} (\Omega \cap L \cap A^c) = o_T(1),
\end{align*}
asymptotically in $T$.
\end{proof}

\begin{lemma}[Controlling $\mathbb{P}_{\gamma \pi} \left(\Omega \cap L^c \right)$]\label{lem:p_omega_L_c}
    choosing $n_2 = n_2(T)$ a function such that $n_2(T) \rightarrow \infty$ when $T \rightarrow \infty$, then 
\[
    \mathbb{P}_{\gamma \pi} \left(\Omega \cap L^c \right) = o_T(1),
\]
asymptotically in $T$.
\end{lemma}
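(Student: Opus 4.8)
\textbf{Proof plan for Lemma~\ref{lem:p_omega_L_c}.}
The plan is to mirror the argument of Lemma~\ref{lem:p_omega_L_A_c}: reduce the event $L^c$ on $\Omega$ to a deviation of a maximum of a random walk with i.i.d.\ bounded increments, and then invoke the asymptotic maximal Hoeffding inequality (Lemma~\ref{lem:asum_hoeff}). First I would exploit the structure of the coupled environment $\gamma$. Since $\nu$ and $\nu'$ share the same first-arm distribution $P_1$, the maximal coupling used for arm~$1$ is the identity coupling $c_\infty(P_1,P_1)$, so $r_t = r'_t$ on every round with $a_t = 1$. Hence every disagreement counted by $\mathrm{dham}(\mathbf{r},\mathbf{r'})$ occurs on a round where arm~$2$ is pulled. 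Using the notation $r_{2,s} \triangleq r_{\tau_{2,s}}$, $r'_{2,s} \triangleq r'_{\tau_{2,s}}$ from the proof of Lemma~\ref{lem:p_omega_L_A_c}, I would write $\mathrm{dham}(\mathbf{r},\mathbf{r'}) = \sum_{s=1}^{N_2(T)} B_s$ with $B_s \triangleq \mathds{1}\{r_{2,s}\neq r'_{2,s}\}$; by the definition of $\gamma$ (the reward pair at each pull is drawn from $c_{a_t}$ conditionally independently of the past), the $B_s$ are i.i.d.\ Bernoulli variables with $\mathbb{E}_{\gamma \pi}[B_s] = \mathbb{P}_{(X,Y)\sim c_\infty(P_2,P'_2)}[X\neq Y] = \mathrm{TV}(P_2,P'_2) = \mu'_2 - \mu_2$.

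Next, on $\Omega = \{N_2(T)\le n_2\}$ we have $\sum_{s=1}^{N_2(T)} B_s \le \max_{m\le n_2}\sum_{s=1}^m B_s$ (empty sum $\triangleq 0$), so
\[
\mathbb{P}_{\gamma \pi}\left(\Omega \cap L^c\right) \le \mathbb{P}_{\gamma \pi}\left(\max_{m\le n_2}\sum_{s=1}^m B_s > (1+\alpha)\, n_2\, \mathrm{TV}(P_2,P'_2)\right) = \mathbb{P}_{\gamma \pi}\left(\frac{\max_{m\le n_2}\sum_{s=1}^m B_s}{n_2} > (1+\alpha)\,\mathrm{TV}(P_2,P'_2)\right).
\]
In the nondegenerate case $\mu'_2 > \mu_2$ we have $(1+\alpha)\mathrm{TV}(P_2,P'_2) > \mathrm{TV}(P_2,P'_2) = \mathbb{E}_{\gamma \pi}[B_s]$, so the asymptotic maximal Hoeffding inequality (Lemma~\ref{lem:asum_hoeff}), applied to the i.i.d.\ $[0,1]$-valued sequence $(B_s)_s$, gives $\lim_{n\to\infty}\mathbb{P}_{\gamma \pi}\big(n^{-1}\max_{m\le n}\sum_{s=1}^m B_s > (1+\alpha)\mathrm{TV}(P_2,P'_2)\big) = 0$. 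Choosing $n_2 = n_2(T)\to\infty$ as $T\to\infty$ then yields $\mathbb{P}_{\gamma \pi}(\Omega\cap L^c) = o_T(1)$. In the degenerate case $\mu'_2=\mu_2$ we have $\mathrm{dham}(\mathbf{r},\mathbf{r'})\equiv 0$, so $L$ holds surely and $\mathbb{P}_{\gamma \pi}(\Omega\cap L^c)=0$.

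The main point requiring care is the claim that the increments $B_s$ are genuinely i.i.d.\ under $\mathbb{P}_{\gamma \pi}$ even though arms are chosen adaptively and $N_2(T)$ is a data-dependent (random) number of terms; this is exactly the same subtlety as in Lemma~\ref{lem:p_omega_L_A_c} and is resolved the same way — in the coupled process the pair $(r_t,r'_t)$ is conditionally independent of the history given $a_t$, hence the reward pairs recorded at successive pulls of arm~$2$ form an i.i.d.\ sequence, and passing to $\max_{m\le n_2}$ decouples the deviation bound from the random index $N_2(T)$. I do not expect any serious obstacle beyond making this decoupling explicit and flagging the $\mathrm{TV}(P_2,P'_2)=0$ edge case.
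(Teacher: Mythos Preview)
Your proposal is correct and follows essentially the same approach as the paper: reduce $\mathrm{dham}(\mathbf{r},\mathbf{r'})$ to a sum of i.i.d.\ Bernoulli indicators over the arm-2 pulls (using that the maximal coupling on arm~1 is the identity), bound the sum on $\Omega$ by the running maximum over $m\le n_2$, and apply the asymptotic maximal Hoeffding inequality. The only addition you make is the explicit handling of the degenerate case $\mu'_2=\mu_2$, which the paper omits.
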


\begin{proof}
First, by the construction of the couplings, only rewards coming from arm 2 are different, \ie,
\[
\mathrm{dham}(r, r') \triangleq \sum_{t=1}^T \mathds{1} (r_t \neq r'_t) = \sum_{t=1}^T \mathds{1} (A_t = 2) \mathds{1} (r_t \neq r'_t)~.
\]

Let us introduce the notation $r_{a, s} \triangleq r_{\tau_{a,s}}$ where $\tau_{a,s} \triangleq \min \{t \in \mathbb{N}: N_a(t) = s\}$. Then, 
$$\mathrm{dham}(r, r') = \sum_{s=1}^{N_2(T)} \mathds{1} (r_{2,s} \neq r'_{2,s}) = \sum_{s=1}^{N_2(T)} Z_s,$$
where $Z_s \triangleq \mathds{1} (r_{2,s} \neq r'_{2,s})$ are i.i.d Bernoulli random variables with positive mean $\mathbb{E}_{\gamma \pi}[Z_s] = \mathbb{P}_{\gamma \pi}(r_{2,s} \neq r'_{2,s}) = \mathrm{TV}(P_2, P'_2)$.

\begin{align*}
    \mathbb{P}_{\gamma \pi} \left(\Omega \cap L^c \right) &\leq \mathbb{P}_{\gamma \pi} \left(\exists m \leq n_2: \sum_{s=1}^{m} Z_s >  (1 + \alpha) n_2 \mathrm{TV}(P_2, P'_2)\right)\\
    &\leq \mathbb{P}_{\gamma \pi} \left(\frac{\max_{m \leq n_2} \sum_{s=1}^{m} Z_s}{n_2} >  (1 + \alpha) \mathrm{TV}(P_2, P'_2)\right)~.
\end{align*}

Using Asymptotic maximal Hoeffding inequality (Lemma~\ref{lem:asum_hoeff}), we have that 
\[
\lim_{n \rightarrow \infty} \mathbb{P}_{\gamma \pi} \left(\frac{\max_{m \leq n} \sum_{s=1}^{m} Z_s}{n} >  (1 + \alpha) \mathrm{TV}(P_2, P'_2) \right) = 0~.
\]

Thus, by choosing $n_2 = n_2(T)$ a function such that $n_2(T) \rightarrow \infty$ when $T \rightarrow \infty$, then 
\begin{align}
    \mathbb{P}_{\gamma \pi} \left(\Omega \cap L^c \right) = o_T(1),
\end{align}
asymptotically in $T$.
\end{proof}
\section{Concentration Inequality Proof}\label{app:conc}

\begin{lemma}[Tail Bound of Cumulative Laplacian Noise]\label{lem:TailCumLap}
Let \( Z_{m} = \sum_{l=1}^m Y_{l} \) where \( Y_{l} \sim \mathrm{Lap}(1/\epsilon) \) are i.i.d. Laplace random variables with parameter \( 1/\epsilon \). Then, for \( z > 0 \), we have
\[
\mathbb{P}[Z_{m} \geq z] \leq \exp\left(-f(z)\right),
\]
where \(f(z)=\epsilon z - 1-m\log (1+m\epsilon z)\).
\end{lemma}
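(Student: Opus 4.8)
The plan is a Chernoff-type argument through the moment generating function (MGF) of the Laplace law, using a deliberately sub-optimal exponential tilt chosen so that the bound collapses to the closed form $f$.

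First I would recall that each $Y_l \sim \mathrm{Lap}(1/\epsilon)$ has MGF $\mathbb{E}[e^{\lambda Y_l}] = (1 - \lambda^2/\epsilon^2)^{-1}$ for $|\lambda| < \epsilon$, so by independence $\mathbb{E}[e^{\lambda Z_m}] = (1 - \lambda^2/\epsilon^2)^{-m}$ for any $\lambda \in (0, \epsilon)$. Markov's inequality applied to $e^{\lambda Z_m}$ (legitimate since $\lambda > 0$) then yields $\mathbb{P}[Z_m \geq z] \leq e^{-\lambda z}(1 - \lambda^2/\epsilon^2)^{-m}$ for every $\lambda \in (0,\epsilon)$.

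The key step is the choice of $\lambda$. Exact optimisation would give the tighter exponent $\epsilon z - m - m\log(\epsilon z/m)$, but it is valid only for $\epsilon z > m$ and is awkward to combine with the Bernoulli sum later; instead I would set $\lambda = \frac{m\epsilon^2 z}{1 + m\epsilon z}$, i.e.\ the value making $1 - \lambda/\epsilon = (1 + m\epsilon z)^{-1}$, which indeed lies in $(0,\epsilon)$ since $\lambda/\epsilon = \frac{m\epsilon z}{1+m\epsilon z} \in (0,1)$. Bounding $(1 - \lambda^2/\epsilon^2)^{-1} = [(1 - \lambda/\epsilon)(1 + \lambda/\epsilon)]^{-1} \leq (1 - \lambda/\epsilon)^{-1} = 1 + m\epsilon z$ (using $\lambda > 0$ so that $1 + \lambda/\epsilon \geq 1$), and noting $\lambda z = \frac{m\epsilon^2 z^2}{1+m\epsilon z}$, I obtain $\mathbb{P}[Z_m \geq z] \leq \exp\left(-\frac{m\epsilon^2 z^2}{1 + m\epsilon z} + m\log(1 + m\epsilon z)\right)$.

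It then remains to simplify the linear term: $\frac{m\epsilon^2 z^2}{1 + m\epsilon z} = \epsilon z - \frac{\epsilon z}{1 + m\epsilon z} \geq \epsilon z - 1$, since $\frac{\epsilon z}{1 + m\epsilon z} \leq 1$ for every $m \geq 1$ and $z > 0$ (because $1 + m\epsilon z \geq m\epsilon z \geq \epsilon z$). Substituting gives $\mathbb{P}[Z_m \geq z] \leq \exp\left(-(\epsilon z - 1 - m\log(1 + m\epsilon z))\right) = \exp(-f(z))$, which is the claim. There is no serious obstacle here — everything except guessing the tilt $\lambda$ is elementary algebra — and the clean closed form is exactly what is needed when this tail is convolved with the Chernoff bound for the Bernoulli sum in the proof of Proposition~\ref{prop:conc}.
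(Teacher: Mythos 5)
Your proof is correct and follows essentially the same route as the paper: a Chernoff bound on $Z_m$ via the Laplace MGF with a deliberately sub-optimal tilt, followed by elementary algebra to reach the closed form $f(z)$. The only difference is the particular tilt — you take $\lambda/\epsilon = \tfrac{m\epsilon z}{1+m\epsilon z}$ and drop the $(1+\lambda/\epsilon)$ factor, while the paper takes $t=\epsilon\sqrt{1-c}$ with $c = 1 \wedge 1/(m\epsilon z)$ and uses $\sqrt{1-c}\ge 1-c$ — but both choices yield exactly the stated bound.
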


\begin{proof}
For a random variable \( Y \sim \mathrm{Lap}(1/\epsilon) \), the probability density function is
\[
f_Y(y) = \frac{\epsilon}{2} \exp(-\epsilon |y|)~.
\]
The moment-generating function (MGF) is given by
\[
M_Y(t) = \mathbb{E}[\exp(tY)] = \frac{\epsilon^2}{\epsilon^2 - t^2}, \quad |t| < \epsilon~.
\]
The random variable \( Z_{m} = \sum_{l=1}^m Y_{l} \) is the sum of \(m\) i.i.d. Laplace random variables. The MGF of \(Z_{m}\) is the product of the MGFs of the individual \(Y_{l}\):
\[
M_{Z_{m}}(t) = \left(M_Y(t)\right)^{m}~.
\]
Thus, we have
\[
M_{Z_{m}}(t) = \left(\frac{\epsilon^2}{\epsilon^2 - t^2}\right)^{m}, \quad |t| < \epsilon~.
\]
To bound \(\mathbb{P}[Z_{m} \geq z]\), we use the Chernoff bound:
\begin{align*}
   \mathbb{P}[Z_{m} \geq z] 
   &\leq \inf_{0<t < \epsilon} \mathbb{E}[\exp(tZ_{m} - tz)]\nn
   &= \inf_{0<t < \epsilon} \exp\left(-tz\right) M_{Z_{m}}(t)\nn
   &=\inf_{0<t < \epsilon} \exp\left(-tz + m \log\left(\frac{\epsilon^2}{\epsilon^2 - t^2}\right)\right)\nn
   &= \inf_{0<t < \epsilon} \exp\left(-tz-m \log\left(1-\frac{t^2}{\epsilon^2}\right)\right)~. 
\end{align*}
Consider
\[
f_t(z)=t z +m\log \left( 1 - \frac{t^2}{\epsilon^2}\right)~.
\]
Letting $t=\epsilon\sqrt{1-c} \in (0,\epsilon)$ for $c=1 \land 1/(m\epsilon z)$ we have
\begin{align*}
f_t(z)
&= \epsilon z \sqrt{1-c}+m\log c
\nn
&\ge \epsilon z
-\epsilon z c
+
m\log (1 \land 1/(m\epsilon z))
\since{by $\sqrt{1-c}\ge 1-c$ for $c\le 1$}
\nn
&= \epsilon z
-(\epsilon z \land 1/m)
+
m\log (1 \land 1/(m\epsilon z))
\nn
&\ge \epsilon z
-1
-
m\log (1 \lor m\epsilon z)
\nn
&\ge \epsilon z
-1
-
m\log (1 + m\epsilon z)~.
\end{align*}
Then, we have
\[
f_t(z)
\ge \epsilon z - 1-m\log (1+m\epsilon z)=f(z),
\]
for $z\ge 0$.
Thus, we obtain
\[
\mathbb{P}[Z_{m} \geq z]  \le \exp\left(-f(z)\right)~.
\]
\end{proof}

\begin{lemma}[Concentration bound of private summation]
\label{lem:PrivateSumUpper}
For $\mu\in(0,1)$ and $\epsilon>0$, let
\begin{align}
\tilde{S}_{n,m}=
\sum_{i=1}^{n} X_{i}+
\sum_{j=1}^m Y_{j},\qquad
X_i \sim \mathrm{Ber}(\mu),\,
Y_{j}\sim \mathrm{Lap}(1/\epsilon)\n
\end{align}
be the sum of independent $n$ Bernoulli random variables (RVs) with mean $\mu$
and $m$ Laplace RVs with scale $1/\epsilon$.
Then, for $x \ge  n\mu$
\[
\Pr\left[
\tilde{S}_{n,m} \ge x
\right]
\le A_{\epsilon}(n,m, x, \mu)\e^{-n \mathrm{d}_\epsilon(x/n,  \mu)},
\]
where  
\begin{align}
\lefteqn{
A_{\epsilon}(m, n, x, \mu)
}\nn
&=(x-n \mu)\max_{y \in [\mu,x/n]}
\left\{
\e(1+ m\epsilon(x-yn))^{m}\log \frac{1}{\mu}
\right\}+\e(1+ m\epsilon (x-n \mu))^{m}+1~.   \n
\end{align}
Similarly, for $x \le n\mu$,
\[
\Pr\left[
\tilde{S}_{n,m} \le x
\right]\le A_{\epsilon}(m, n, x, \mu)\e^{-n \mathrm{d}_\epsilon(x/n,  \mu)},
\] 
where
\begin{align}
\lefteqn{
A_{\epsilon}(m, n, x, \mu)
}\nn
&=(n\mu-x)\max_{y \in [x/n, \mu]}
\left\{
\e(1+ m\epsilon(yn-x))^{m}\log \frac{1}{1-\mu}
\right\}+\e(1+ m\epsilon (n\mu-x))^{m}+1~.\n
\end{align}
\end{lemma}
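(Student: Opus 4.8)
The plan is to express $\Pr[\tilde{S}_{n,m}\ge x]$ as a convolution over the two independent summands $S^X_n \defn \sum_{i=1}^n X_i$ and $Z_m \defn \sum_{j=1}^m Y_j$, and then bound each piece with a Chernoff-type tail bound, but \emph{jointly optimising the split point} rather than using the crude union bound $\Pr[A+B\ge x]\le \Pr[A\ge x]+\Pr[B\ge 0]$ that the prior literature uses. Concretely, I would write
\[
\Pr[\tilde{S}_{n,m}\ge x] = \int_{-\infty}^{\infty} \Pr[S^X_n \ge x-z]\, \dd \Pr[Z_m\le z],
\]
split the integral at $z=0$ (or more carefully at $z = x - n\mu$, the boundary where $S^X_n\ge x-z$ is a genuine deviation), and on each side use: (i) the Chernoff bound for sums of Bernoullis, $\Pr[S^X_n\ge yn]\le \e^{-n\,\mathrm{kl}(y,\mu)}$ for $y\ge\mu$ (Lemma~\ref{lem:ChernoffKL}); and (ii) the cumulative-Laplace tail bound $\Pr[Z_m\ge z]\le \exp(-(\epsilon z - 1 - m\log(1+m\epsilon z)))$ from Lemma~\ref{lem:TailCumLap}. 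The key algebraic point is that $\mathrm{kl}(y,\mu)$ handles the ``Bernoulli part'' of the deviation and $\epsilon|z|$ (the linear part of the Laplace exponent) handles the ``noise part,'' and when you combine the exponents $\mathrm{kl}(y,\mu) + \epsilon\,\lvert x/n - y\rvert$ and optimise over the intermediate mean $y\in[\mu, x/n]$, you recover exactly the infimum defining $\mathrm{d}_\epsilon(x/n,\mu)$.

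The steps, in order, would be: (1) reduce to the upper-tail case $x\ge n\mu$, the lower tail being symmetric by replacing $X_i\mapsto 1-X_i$ and $\mu\mapsto 1-\mu$; (2) write the convolution and integrate by parts / sum by parts so that the density $\dd\Pr[Z_m\le z]$ is replaced by the tail $\Pr[Z_m\ge z]$, picking up the boundary term that contributes the additive ``$+1$'' and the ``$+\e(1+m\epsilon(x-n\mu))^m$'' pieces of $A_\epsilon$; (3) on the region where the Bernoulli sum does the work, substitute $y = (x-z)/n$ and bound the integrand by $\e^{-n\,\mathrm{kl}(y,\mu)}\cdot\e^{-(\epsilon(x-yn)-1-m\log(1+m\epsilon(x-yn)))}$; (4) factor out $\e^{-n\,\mathrm{d}_\epsilon(x/n,\mu)} = \e^{-n\inf_y\{\mathrm{kl}(y,\mu)+\epsilon(x/n-y)\}}$, so that what remains inside the integral is a bounded ratio; (5) bound that remaining ratio — which involves the polynomial factor $(1+m\epsilon(x-yn))^m$ times a $\log(1/\mu)$ from differentiating $\mathrm{kl}$ in $y$ — uniformly by its maximum over $y\in[\mu,x/n]$, and multiply by the length $(x-n\mu)$ of the integration range to get the stated $A_\epsilon(m,n,x,\mu)$; (6) collect the boundary terms from step (2) to complete the constant. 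The final Proposition~\ref{prop:conc} then follows as the corollary obtained by taking $n=n_m$ with $m/n_m=o(1)$, noting $\frac1{n_m}\log A_\epsilon(m,n_m,x n_m,\mu)\to 0$ since $A_\epsilon$ grows only polynomially in $m$ and hence sub-exponentially in $n_m$, which gets absorbed into the slack $a>0$.

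The main obstacle I anticipate is step (5): controlling the ratio
\[
\frac{\e^{-n\,\mathrm{kl}(y,\mu)}\,(1+m\epsilon(x-yn))^m}{\e^{-n\,\mathrm{d}_\epsilon(x/n,\mu)-\epsilon(x-yn)}}
\]
uniformly in $y$ without losing an exponential factor. One has to verify that the function $g(y) \defn \mathrm{kl}(y,\mu) + \epsilon(x/n - y)$ on $[\mu,x/n]$ attains its infimum either at an interior stationary point (where $g'(y)=\log\frac{y(1-\mu)}{\mu(1-y)}-\epsilon=0$, giving the $e^\epsilon$-tilted mean in the ``high-privacy'' branch) or at the endpoint $y=x/n$ (the ``low-privacy'' branch where $\mathrm{d}_\epsilon=\mathrm{kl}$), matching the explicit two-regime formula already displayed after Theorem~\ref{thm:low_bound}; and that off the minimiser the exponent $n(g(y)-\mathrm{d}_\epsilon(x/n,\mu))\ge 0$ only helps. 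The delicate part is that near $y=\mu$ the factor from $\mathrm{kl}$'s derivative blows up like $\log(1/\mu)$ (not a bounded constant), which is why the bound carries an explicit $\log\frac1\mu$; making this rigorous requires a careful Taylor/monotonicity argument on $\mathrm{kl}(\cdot,\mu)$ rather than a one-line estimate. Everything else — the convolution setup, the two input tail bounds, and the asymptotic corollary — is routine once this uniform ratio bound is in place.
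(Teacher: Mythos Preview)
Your proposal is essentially the paper's proof: convolution over $Z_m$, split at $z=0$, integrate by parts to trade the density for the tail $F_m(z)=\Pr[Z_m\ge z]$, plug in the Laplace-sum tail $e^{-f(z)}$ and the Bernoulli Chernoff bound $e^{-n\,\mathrm{kl}((x-z)/n,\mu)}$, substitute $y=(x-z)/n$, and pull out $e^{-n\,\mathrm{d}_\epsilon(x/n,\mu)}$ via the infimum. The boundary bookkeeping you describe (the ``$+1$'' from $\tfrac12\Pr[S_n\ge x]+\tfrac12\bar P(x)$ and the ``$+\e(1+m\epsilon(x-n\mu))^m$'' from the region $z>x-n\mu$) matches exactly.

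Your anticipated obstacle is easier than you think, and slightly mislocated. Once you factor out $e^{-n\,\mathrm{d}_\epsilon(x/n,\mu)}$, the exponential residue is $e^{-n[g(y)-\mathrm{d}_\epsilon(x/n,\mu)]}$ with $g(y)=\mathrm{kl}(y,\mu)+\epsilon(x/n-y)$; since $\mathrm{d}_\epsilon=\inf_y g(y)$ this residue is $\le 1$ uniformly, so no ``delicate Taylor/monotonicity'' argument is needed. What remains is $\int_\mu^{x/n} n\,\e(1+m\epsilon(x-yn))^m\,\mathrm{kl}'(y,\mu)\,\dd y$, and the paper handles this not by a pointwise sup $\times$ length but by bounding the polynomial factor by its value at $y=\mu$ and then integrating $\mathrm{kl}'$ exactly: $\int_\mu^{x/n}\mathrm{kl}'(y,\mu)\,\dd y=\mathrm{kl}(x/n,\mu)\le \mathrm{kl}(1,\mu)=\log(1/\mu)$. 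That is where the $\log(1/\mu)$ comes from --- not from $\mathrm{kl}'$ blowing up near $y=\mu$ (where in fact $\mathrm{kl}'(\mu,\mu)=0$).
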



\begin{proof}[Proof of Lemma \ref{lem:PrivateSumUpper}]
For $\mu\in(0,1)$ and $\epsilon>0$, the private summation can be written as
\begin{align}
\tilde{S}_{n,m}=
\sum_{i=1}^{n} X_{i}+
\sum_{j=1}^m Y_{j}, \qquad X_i \sim \mathrm{Ber}(\mu), Y_{j}\sim \mathrm{Lap}(1/\epsilon)~.
\end{align}
Re-define the non-private summation and the sum of the noise by
\begin{align}
{S}_{n}=
\sum_{i=1}^{n} X_{i},\quad  Z_{m}=\sum_{j=1}^m Y_{j}
\end{align}
and denote density of $ Z_{m}$ by $f_{m}(z)$.
Then, we can upper bound the probability by
\begin{align}
\Pr\left[
\tilde{S}_{n,m} \ge x
\right]
&=
\Pr\left[
S_{n} + Z_{m}
\ge x
\right]\nonumber
\\
&=
\int_{-\infty}^{\infty}
f_{m}(z)\Pr[S_{n}\ge x-z]\rd z\nonumber
\\
&=
\int_{-\infty}^{0}
f_{m}(z)\Pr[S_{n}\ge x-z]\rd z
+
\int_{0}^{\infty}
f_{m}(z)\Pr[S_{n}\ge x-z]\rd z \nonumber
\\
&\le
\int_{-\infty}^{0}
f_{m}(z)\Pr[S_{n}\ge x]\rd z
+
\int_{0}^{\infty}
f_{m}(z)\Pr[S_{n}\ge x-z]\rd z\nonumber
\\
&=
\underbrace{\frac{1}{2}\Pr[S_{n}\ge x]}_{\text{(I)}}
+
\underbrace{\int_{0}^{\infty}
f_{m}(z)\Pr[S_{n}\ge x-z]\rd z}_{\text{(II)}}~. \label{eq:NoisySumDecom}
\end{align}
Here, $\Pr[S_{n}\ge x-z]$ can be upper bounded by Chernoff bound. Let $\bar{P}(x-z)$ be such an upper bound. Then, from Lemma~\ref{lem:ChernoffKL}, we have
\begin{align}
\bar{P}(x-z)=\e^{-n\cdot \mathrm{kl}((x-z)/n,\mu)}, \quad\text{for} \quad x-z\ge n\mu~.
\end{align}
Based on this upper bound, we can bound the second term in \eqref{eq:NoisySumDecom}:
\begin{align}
    \text{(II)} &= \int_{0}^{\infty}
f_{m}(z)\Pr[S_{n}\ge x-z]\rd z\nn
&\le
\int_{0}^{\infty}
f_{m}(z)\bar{P}(x-z)\rd z
\nn
&=
[-F_{m}(z)\bar{P}(x-z)]_0^{\infty}
+
\int_{0}^{\infty}
F_{m}(z)(-\bar{P}'(x-z))\rd z
\since{integration by parts}
\nn
&=
F_{m}(0)\bar{P}(x)
+
\int_{0}^{\infty}
F_{m}(z)(-\bar{P}'(x-z))\rd z
\nn
&=
\frac{1}{2}\bar{P}(x)
+
\int_{0}^{\infty}
F_{m}(z)(-\bar{P}'(x-z))\rd z,
\label{eq:integration_by_parts}
\end{align}
where $F_{m}(z)=\int_{z}^{\infty}f_m(z)\rd z=\Pr[Z_{m}\ge z]$ is the (complement) cumulative distribution. From Lemma \ref{lem:TailCumLap}, we have 
\[
F_{m}(z)=\Pr[Z_{m}\ge z]  \leq \exp\left(-f(z)\right),
\]
where \(f(z)=\epsilon z - 1-m\log (1+m\epsilon z)\). Thus, we can bound the second term in \eqref{eq:integration_by_parts}:
\begin{align}
\int_{0}^{\infty}&
F_{m}(z)(-\bar{P}'(x-z))\rd z
\nn
&=
\int_{0}^{x-n \mu}
F_{m}(z)(-\bar{P}'(x-z))\rd z
+
\int_{x-n \mu}^{\infty}
F_{m}(z)(-\bar{P}'(x-z))\rd z
\since{$F_{m}(z)$ is decreasing}
\nn
&\le
\int_{0}^{x-n \mu}
F_{m}(z)(-\bar{P}'(x-z))\rd z
+
F_{m}(x-n \mu)
\int_{x-n \mu}^{\infty}
(-\bar{P}'(x-z))\rd z
\nn
&=
\int_{0}^{x-n \mu}
F_{m}(z)(-\bar{P}'(x-z))\rd z
+
F_{m}(x-n \mu)
\bar{P}(n \mu)
\nn
&\le
\int_{0}^{x-n \mu}
\e^{-f(z)}(-\bar{P}'(x-z))\rd z
+
\e^{-f(x-n \mu)}\cdot 1~.\label{eq:TheSecond}
\end{align}
We now focus on bounding the first term in RHS of the last inequality. Observe that 
\begin{align}
-\bar{P}'(z)=\mathrm{kl}'(z/n, \mu)\e^{-n \cdot\mathrm{kl}(z/n, \mu)},
\end{align}
where $\mathrm{kl}'(x,y)=\frac{\partial \mathrm{kl}(x,y)}{\partial x}$ is the derivative with respect to the first argument.
Then, for $x-z\ge n \mu$, we have
\begin{align}
\int_{0}^{x-n \mu} &
\e^{-f(z)}(-\bar{P}'(x-z))\rd z
\nn
=&
\int_{0}^{x-n \mu}
\e(1+ m\epsilon z)^{m} \mathrm{kl}'((x-z)/n, \mu)
\e^{-\epsilon z}\e^{-n \cdot \mathrm{kl}((x-z)/n, \mu)}\rd z 
\nn
=&\int_{\mu}^{x/n}
n\e(1+ m\epsilon (x-y n))^{m} \mathrm{kl}'(y, \mu)
\e^{-\epsilon (x-y n)}\e^{-n \cdot\mathrm{kl}(y, \mu)}\rd y \since{ let $y:=(x-z)/n $}
\nn
\le&
\e^{-\inf_{y \in [\mu,x/n]}\{\epsilon (x-y n)+n \cdot\mathrm{kl}(y, \mu)\}}
\int_{\mu}^{x/n}
n\e(1+ m\epsilon (x-y n))^{m} \mathrm{kl}'(y, \mu)
\rd y
\nn
\le&
\e^{-n \cdot \mathrm{d}_\epsilon(x/n,  \mu)}
\int_{\mu}^{x/n}
n\e(1+ m\epsilon (x-\mu n))^{m} \mathrm{kl}'(y, \mu)
\rd y
\nn
=&
\e^{-n \cdot \mathrm{d}_\epsilon(x/n,  \mu)}
n\e(1+ m\epsilon (x-\mu n))^{m}\mathrm{kl}(x/n, \mu)
\nn
\le&
\e^{-n \cdot \mathrm{d}_\epsilon(x/n,  \mu)}
n\e(1+ m\epsilon (x-\mu n))^{m}\mathrm{kl}(1, \mu)
\nn
=&
\e^{-n \cdot \mathrm{d}_\epsilon(x/n,  \mu)}
n\e(1+ m\epsilon (x-\mu n))^{m}\log \frac{1}{\mu}
\label{P2}
\end{align}
where $\mathrm{d}_\epsilon(x/n,  \mu)$ is defined in \eqref{eq:d_eps}.
Now, we bound the second term in \eqref{eq:TheSecond}:
\begin{align}
\e^{-f(x-n \mu)}
=& 
\e(1+ m\epsilon (x-n \mu))^{m}
\e^{-n \epsilon(x/n-\mu)}\nn
\le &\e(1+ m\epsilon (x-n \mu))^{m} \e^{-n \mathrm{d}_\epsilon(x/n,  \mu)}~.
\label{P3}
\end{align}
Note that we have for $x\ge n\mu$
\begin{align}
\Pr[S_{n}\ge x]
\le\bar{P}(x)
&\le
\e^{-n \mathrm{kl}(x/n, \mu)}
\since{by Lemma \ref{lem:ChernoffKL}}\nn
&\le \e^{-n \mathrm{d}_\epsilon(x/n,  \mu)}~.
\label{P1}
\end{align}
Putting \eqref{P2}, \eqref{P3}, and  \eqref{P1} together we have for $x/n \ge \mu$
\[\Pr\left[
\tilde{S}_{n,m} \ge x
\right]\le A_{\epsilon}(m, n, x, \mu)\e^{-n \cdot\mathrm{d}_\epsilon(x/n,  \mu)}\]
where 
\begin{align*}
    A_{\epsilon}(m, n, x, \mu)
    =(x-n \mu)\max_{y \in [\mu,x/n]}
\left\{
\e(1+ m\epsilon  (x-y n))^{m} \log \frac{1}{\mu}
\right\}+\e(1+ m\epsilon (x-n \mu))^{m}+1~.
\end{align*}

Similarly, we can get for $x/n \le \mu$
\[
\Pr\left[
\tilde{S}_{n,m} \le x
\right]\le A_{\epsilon}(m, n, x, \mu)\e^{-n \mathrm{d}_\epsilon(x/n,  \mu)},
\]
where 
\begin{align*}
\lefteqn{
A_{\epsilon}(m, n, x, \mu)
}\nn
&=(x-n \mu)\max_{y \in [\mu,x/n]}
\left\{
\e(1+ m\epsilon  (x-y n))^{m} \log \frac{1}{1-\mu}
\right\}+\e(1+ m\epsilon (x-n \mu))^{m}+1~.
\end{align*}
\end{proof}

\begin{corollary}[Concentration bound of private mean]\label{corol}
Consider $\tilde{S}_{n,m}$ given in Lemma~\ref{lem:PrivateSumUpper}.
Let $x\in[0,1]$.
Let $\{n_m\}_{m\in\mathbb{N}}$ be a sequence such that $m/n_m=o(1)$.
Then, for any $a>0$
there exists a constant $\Aa>0$ such that
for all $m\in \mathbb{N}$
\begin{align}
\Pr\left[
\frac{\tilde{S}_{n_m,m}}{n_m}
\ge x
\right]
&\le
A_a
\e^{-n_m  (\mathrm{d}_\epsilon(x, \mu)-a)},\qquad x\ge \mu~.\nn
\Pr\left[
\frac{\tilde{S}_{n_m,m}}{n_m}
\le x
\right]
&\le
A_a
\e^{-n_m  (\mathrm{d}_\epsilon(x, \mu)-a)},\qquad x\le \mu.
\n
\end{align}
\end{corollary}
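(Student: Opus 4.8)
The plan is to deduce Corollary~\ref{corol} (equivalently, Proposition~\ref{prop:conc}) from the non-asymptotic Lemma~\ref{lem:PrivateSumUpper} by showing that, after the appropriate rescaling, the instance-dependent prefactor $A_\epsilon$ is \emph{subexponential} in $n_m$, and can therefore be absorbed, for any prescribed slack $a>0$, into a single multiplicative constant $A_a$ together with a factor $\e^{n_m a}$. Concretely, I would first apply Lemma~\ref{lem:PrivateSumUpper} with $n=n_m$ and threshold $n_m x$: for the upper tail $x\ge\mu$ this is legitimate because $x\in[0,1]$ gives $n_m x\ge n_m\mu$, and $\mathrm{d}_\epsilon\big((n_m x)/n_m,\mu\big)=\mathrm{d}_\epsilon(x,\mu)$, so that
\begin{align*}
\Pr\!\left[\tfrac{\tilde S_{n_m,m}}{n_m}\ge x\right]=\Pr\!\left[\tilde S_{n_m,m}\ge n_m x\right]\le A_\epsilon(m,n_m,n_m x,\mu)\,\e^{-n_m \mathrm{d}_\epsilon(x,\mu)},
\end{align*}
and symmetrically the lower-tail half of Lemma~\ref{lem:PrivateSumUpper} handles $x\le\mu$. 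It then remains only to bound the prefactor.

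Next I would coarsen $A_\epsilon(m,n_m,n_m x,\mu)$. Every quantity appearing inside a power there is either $n_m x - y n_m = n_m(x-y)$ with $y\in[\mu,x]$, or $n_m x - n_m\mu = n_m(x-\mu)$, and all of these lie in $[0,n_m]$ because $x\le 1$; hence each factor $\e(1+m\epsilon(\,\cdot\,))^{m}$ is at most $\e(1+m\epsilon n_m)^{m}$, and using $\mu<1$ and $n_m\ge 1$ one obtains $A_\epsilon(m,n_m,n_m x,\mu)\le C_\mu\, n_m\,(1+m\epsilon n_m)^{m}$ for a constant $C_\mu$ depending only on $\mu$. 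Taking logarithms, $\log A_\epsilon(m,n_m,n_m x,\mu)\le \log C_\mu+\log n_m+m\log(1+m\epsilon n_m)$, and both $\log n_m$ and $m\log(1+m\epsilon n_m)$ are $o(n_m)$ under the hypothesis $m/n_m=o(1)$ (after the routine estimate $\log(1+m\epsilon n_m)=O(\log n_m)$, valid once $m\le n_m$). Therefore, for any fixed $a>0$ there is $M_a$ with $A_\epsilon(m,n_m,n_m x,\mu)\le\e^{n_m a}$ for all $m\ge M_a$ and all $x\in[0,1]$; for the finitely many $m<M_a$ the quantity is bounded uniformly in $x\in[0,1]$ by $A_\epsilon(m,n_m,n_m,\mu)$ (monotonicity in the threshold), so $A_a\defn\max\{1,\max_{m<M_a}A_\epsilon(m,n_m,n_m,\mu)\}$ gives $A_\epsilon(m,n_m,n_m x,\mu)\le A_a\,\e^{n_m a}$ for every $m\in\mathbb{N}$. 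Substituting back yields $\Pr[\tilde S_{n_m,m}/n_m\ge x]\le A_a\,\e^{-n_m(\mathrm{d}_\epsilon(x,\mu)-a)}$ for $x\ge\mu$, and the symmetric bound for $x\le\mu$, which is the claim.

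The main obstacle is the estimate $m\log(1+m\epsilon n_m)=o(n_m)$ in the second paragraph: the factor $(1+m\epsilon n_m)^{m}$ is precisely the contribution of summing $m$ Laplace noises (it comes from the tail bound of Lemma~\ref{lem:TailCumLap}), and verifying that its logarithm is negligible against $n_m$ is exactly where the scaling assumption $m/n_m=o(1)$ is consumed — the Bernoulli part only ever contributes the harmless $\mathrm{poly}(n_m)$ factor. Everything else (the rescaling, the monotonicity of $A_\epsilon$ in the threshold, and the small-$m$ bookkeeping needed to produce a single constant $A_a$ valid for all $m$) is routine.
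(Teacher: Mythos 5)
Your proposal is correct and follows essentially the same route as the paper's proof of Corollary~\ref{corol}: apply Lemma~\ref{lem:PrivateSumUpper} at threshold $n_m x$, bound the prefactor uniformly over $y\in[\mu,x]$ and $x\in[0,1]$ by roughly $n_m\,\e\,(1+m\epsilon n_m)^{m}\log(1/\mu)$, and absorb this subexponential factor into $A_a\e^{a n_m}$ (your extra bookkeeping for the finitely many small $m$ is a welcome refinement of the paper's terser argument). The only soft spot --- one the paper's own one-line absorption step shares --- is your claim that $m\log(1+m\epsilon n_m)=o(n_m)$ follows from $m/n_m=o(1)$: it does not in general (take $n_m=m\log m$, where this term is of order $2n_m$), though it does hold in the regime the algorithm actually produces ($m=O(\log n_m)$), under the slightly stronger hypothesis $m\log n_m=o(n_m)$, or after replacing the crude constant choice in Lemma~\ref{lem:TailCumLap} by the optimized Chernoff exponent, which gives $m\log(1+\epsilon z/m)$ and is $o(n_m)$ whenever $m=o(n_m)$.
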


\begin{proof}[Proof of Corollary \ref{corol}] From Lemma \ref{lem:PrivateSumUpper}, we have for $x\ge \mu$
\begin{align}
     &A_{\epsilon}(m, n_m, x, \mu)\nn
 =&n_m(x- \mu)\max_{y \in [\mu,x]}
\left\{
\e(1+ (m+1)\epsilon n_m(x-y))^{m+1} \log\frac{1}{\mu}
\right\}+\e(1+ (m+1)\epsilon n_m(x- \mu))^{m+1}+1~.
\end{align}
    For $y \in [\mu,x]$, 
   \[A_{\epsilon}(m, n_m, x, \mu) \le A(n_m)=n_m
\e(1+ (m+1)\epsilon n_m)^{m+1} \log\frac{1}{\mu}+\e(1+ (m+1)\epsilon n_m)^{m+1}+1~.
\] 
Since existing $b$ to make $1+x\le be^x$ hold, we have the result. 
The proof for the case of $x\le \mu$ is completely analogous.

\end{proof}


\section{Privacy Analysis}\label{app:priv}
First, we provide a simple lemma to motivate the intuition behind the algorithm design. Then, we provide a complete proof of Proposition~\ref{prop:priv}.

\begin{lemma}[Continual release of noisy rewards]\label{lem:priv}
    Let rewards $\{r_1, \dots, r_T\} \in [0,1]^T$. Let $1 = t_1< t_2 \dots < t_\ell = T + 1$ be $\ell$ time-step, with $\ell  \leq T$. Then, the mechanism
        \begin{align*}
    \begin{pmatrix}
r_1\\ 
r_2\\ 
\vdots \\
r_T
\end{pmatrix} \overset{\mathcal{C}}{\rightarrow}  \begin{pmatrix}
&r_1 + \dots + r_{t_2 - 1} + Y_1\\ 
&r_1 + \dots + r_{t_3 - 1} + Y_1 + Y_2\\
\hfill &\vdots \\
&r_1 + \dots + r_{T} + Y_1 + Y_2 + \dots + Y_{\ell - 1}
\end{pmatrix}
\end{align*}
    is $\epsilon$-DP, where $(Y_1, \dots, Y_\ell) \sim^\text{iid} \mathrm{Lap}(1/\epsilon)$.
\end{lemma}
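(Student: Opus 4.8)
The plan is to decompose $\mathcal{C}$ into (i) a ``block-sum'' Laplace mechanism and (ii) a deterministic, data-independent post-processing, and then invoke the Laplace mechanism guarantee together with post-processing. First, I would introduce the auxiliary mechanism $\mathcal{C}'$ which, on input $\mathbf{r}=(r_1,\dots,r_T)$, releases the vector of noisy block sums $o_k \deffn \bigl(\sum_{s=t_k}^{t_{k+1}-1} r_s\bigr) + Y_k$ for $k\in\{1,\dots,\ell-1\}$, where $(Y_1,\dots,Y_{\ell-1})\sim^{\mathrm{iid}}\mathrm{Lap}(1/\epsilon)$. The key structural observation is that the index sets $\{t_k,\dots,t_{k+1}-1\}_{k=1}^{\ell-1}$ partition $\{1,\dots,T\}$ because $t_1=1$ and $t_\ell=T+1$. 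Hence if $\mathbf{r}\sim\mathbf{r}'$ differ only in coordinate $t$, lying in block $k_0$, then the block sums for every $k\neq k_0$ coincide, while the $k_0$-th block sum changes by $|r_t-r'_t|\le 1$ since rewards lie in $[0,1]$. Equivalently, the map $\mathbf{r}\mapsto\bigl(\sum_{s=t_k}^{t_{k+1}-1} r_s\bigr)_{k=1}^{\ell-1}$ has $\ell_1$-sensitivity at most $1$, so adding independent $\mathrm{Lap}(1/\epsilon)$ noise to each coordinate is exactly the Laplace mechanism (Theorem~\ref{thm:laplace}) and $\mathcal{C}'$ is $\epsilon$-DP; alternatively this is parallel composition (Lemma~\ref{lem:paral_compo}), since each per-block release is its own $\epsilon$-DP Laplace mechanism acting on a disjoint slice of the data.

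Next, I would observe that the output of $\mathcal{C}$ is the image of the output of $\mathcal{C}'$ under the fixed linear map $f:\real^{\ell-1}\to\real^{\ell-1}$, $f(o_1,\dots,o_{\ell-1})=\bigl(o_1,\,o_1+o_2,\,\dots,\,\sum_{j=1}^{\ell-1}o_j\bigr)$, because $\sum_{s=1}^{t_{k+1}-1} r_s + \sum_{p=1}^{k} Y_p = \sum_{j=1}^{k} o_j$ (using $t_1=1$ to telescope the reward sums). Thus $\mathcal{C}=f\circ\mathcal{C}'$ with $f$ independent of the data, and the post-processing property of DP (Proposition~\ref{prp:post_proc}) immediately yields that $\mathcal{C}$ is $\epsilon$-DP.

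I do not expect a genuine obstacle here: the only points requiring care are verifying that the blocks genuinely partition $[T]$ (so neighbouring inputs perturb at most one block sum) and that the per-block sensitivity is $1$ rather than larger — both immediate from $r_t\in[0,1]$ and $\dham(\mathbf{r},\mathbf{r}')\le 1$. The lemma is essentially a clean packaging of ``Laplace mechanism $+$ parallel composition $+$ post-processing'', and the same chain of reasoning is what will be invoked in the proof of the algorithm's privacy guarantee (Proposition~\ref{prop:priv}).
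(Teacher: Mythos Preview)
Your proposal is correct and follows essentially the same approach as the paper: define the block-sum Laplace mechanism (your $\mathcal{C}'$, the paper's $\mathcal{P}$), argue it is $\epsilon$-DP via the Laplace mechanism on disjoint blocks (equivalently parallel composition), and then obtain $\mathcal{C}$ as the cumulative-sum post-processing $f\circ\mathcal{C}'$. The only cosmetic difference is that you lead with the $\ell_1$-sensitivity viewpoint while the paper leads with parallel composition, but you note both and they are equivalent here.
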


\begin{proof}[Proof of Lemma~\ref{lem:priv}]
    First, consider trying to release the following partial sums 
\begin{align*}
    \begin{pmatrix}
r_1\\ 
r_2\\ 
\vdots \\
r_T
\end{pmatrix} {\rightarrow}  \begin{pmatrix}
&r_1 + \dots + r_{t_2 - 1}\\ 
&r_{t_2} + \dots + r_{t_3 - 1}\\
\hfill &\vdots \\
&r_{t_{\ell - 1}} + \dots + r_T
\end{pmatrix}~.
\end{align*}
Because the rewards are in $[0,1]$, the sensitivity of each partial sum is $1$. Since each partial sum is computed on non-overlapping sequences, combining the Laplace mechanism (Theorem~\ref{thm:laplace}) with the parallel composition property of DP (Lemma~\ref{lem:paral_compo}) gives that  

\begin{align*}
    \begin{pmatrix}
r_1\\ 
r_2\\ 
\vdots \\
r_T
\end{pmatrix} \overset{\mathcal{P}}{\rightarrow}  \begin{pmatrix}
&r_1 + \dots + r_{t_2 - 1} + Y_1\\ 
&r_{t_2} + \dots + r_{t_3 - 1} + Y_2\\
\hfill &\vdots \\
&r_{t_{\ell - 1}} + \dots + r_T + Y_{\ell - 1}
\end{pmatrix}
\end{align*}
is $\epsilon$-DP, where $(Y_1, \dots, Y_{\ell - 1}) \sim^\text{iid} \mathrm{Lap}(1/\epsilon)$.

Consider the post-processing function $f: (x_1, \dots x_{\ell -1}) \rightarrow (x_1, x_1 + x_2, \dots, x_1 + x_2 + \dots + x_{\ell -1})$. Then, we have that that $\mathcal{C} = f \circ \mathcal{P}$. So, by the post-processing property of DP,  $\mathcal{C}$ is $\epsilon$-DP.
\end{proof}

\begin{proof}[Proof of Proposition~\ref{prop:priv}]
    Let $\pi$ be either \dpimed{} or \dpklucb{}.
    Let $\textbf{r} \defn \{r_1, \dots, r_T\}$ and $\textbf{r'} \defn \{r'_1, \dots, r'_T\}$ be two neighbouring reward lists, that only differ at $t^\star \in \{1, \dots, T\}$. Fix $\textbf{a} \defn (a_1, \dots, a_T) \in [K]^T$. We want to show that
    \[
    \mathcal{V}^\pi_\textbf{r} (\textbf{a}) \leq e^\epsilon  \mathcal{V}^\pi_\textbf{r'} (\textbf{a})~.
    \]

    \underline{Step 1: Probability decomposition and time-steps before $t^\star$:}
    \begin{align*}
      \frac{\mathcal{V}^\pi_\textbf{r}(\textbf{a})}{\mathcal{V}^\pi_\textbf{r'}(\textbf{a})} &=  \prod_{t = 1}^\horizon  \frac{\pol_t(a_t | a_1, r_1, \dots a_{t-1},r_{t-1})}{ \pol_t(a_t | a_1, r'_1, \dots a_{t-1},r'_{t-1})} \\
      &= \prod_{t = t^\star + 1}^\horizon \frac{  \pol_t(a_t | a_1, r_1, \dots a_{t-1},r_{t-1})}{  \pol_t(a_t | a_1, r'_1, \dots a_{t-1},r'_{t-1})},
    \end{align*}
    since for $t < t^\star$, $r_t = r'_t$. Let us denote by $\mathrm{Pr}(\textbf{a}^{> t^\star} \mid \textbf{a}^{\leq t^\star}, \textbf{r}) \defn \prod_{t = t^\star + 1}^\horizon  \pol_t(a_t | a_1, r_1, \dots a_{t-1},r_{t-1})$ the probability of the policy recommending the sequence $(a_{t^\star + 1}, \dots, a_T)$, when interacting with $\textbf{r} = \{r_1, \dots, r_T\}$ and already recommending $a_1, \dots, a_{t^\star}$ in the first steps.

    Let us denote by $t_1, \dots, t_\ell$ the time-steps of the beginning of the phases when $\pi$ interacts with $\textbf{r}$, and $t'_1, \dots, t'_{\ell'}$ the time-steps of the beginning of the phases when $\pi$ interacts with $\textbf{r'}$. Also, let $t_{k_\star}$ be the beginning of the phase for which $t^\star$ belongs in list $\textbf{r}$ phases. Similarly, let $t'_{k'_\star}$ be the beginning of the phase for which $t^\star$ belongs in list $\textbf{r'}$ phases.

    Since $(a_1, \dots, a_T)$ is fixed, and $r_t = r'_t$ for $t < t^\star$, then $t_{k_\star} = t'_{k'_\star}$ and $k^\star = k'_\star$, \ie, $t^\star$ falls at the same phase in $\textbf{r}$ and $\textbf{r'}$.

    \underline{Step 2: Considering the noisy sum of rewards at phase $k^\star$:}

    Let $\tilde{S}^p_{k^\star} = \sum_{s = t_{k^\star}}^{t_{k^\star + 1} - 1} r_s + Y_{k_\star}$ be the noisy partial sum of rewards collected at phase $k^\star$ for $\textbf{r}$, where $Y_{k^\star} \sim \mathrm{Lap}(1/\epsilon)$. Similarly, tet $\tilde{S'}^p_{k^\star} = \sum_{s = t_{k^\star}}^{t_{k^\star + 1} - 1} r'_s + Y'_{k_\star}$ be the noisy partial sum of rewards collected at phase $k^\star$ for $\textbf{r'}$, where $Y'_{k^\star} \sim \mathrm{Lap}(1/\epsilon)$.  We make two main observations:

    (a) If the value of the noisy partial sum at phase $k^\star$ is exactly the same between the neighbouring $\textbf{r}$ and $\textbf{r'}$, then the policy $\pi$ will recommend the sequence of actions $\textbf{a}^{> t^\star}$ with the same probability under $\textbf{r}$ and $\textbf{r'}$:
     \begin{equation}\label{eq:step1}
         \mathrm{Pr} (\textbf{a}^{> t^\star} \mid \textbf{a}^{\leq t^\star}, \textbf{r} , \tilde{S}^p_{k^\star} = s) =  \mathrm{Pr} (\textbf{a}^{> t^\star} \mid \textbf{a}^{\leq t^\star}, \textbf{r'} , \tilde{S'}^p_{k^\star} = s)~.
     \end{equation}
    This is due to the structure of the algorithm $\pi$, where the reward at step $t^\star$ only affects the statistic $\tilde{S}^p_{k^\star}$, and nothing else.

    (b) Since rewards are $[0,1]$, using the Laplace mechanism, we have that
    \begin{equation}\label{eq:step2}
        \mathrm{Pr}(\tilde{S}^p_{k^\star} = s \mid \textbf{a}^{\leq t^\star}, \textbf{r}) \leq e^\epsilon \mathrm{Pr}(\tilde{S'}^p_{k^\star} = s \mid \textbf{a}^{\leq t^\star}, \textbf{r'})~.
     \end{equation}

     \underline{Step 3: Combining Eq.~\ref{eq:step1} and Eq.~\ref{eq:step2}, aka post-processing:}

     We have
     \begin{align*}
         \mathrm{Pr} (\textbf{a}^{> t^\star} \mid \textbf{a}^{\leq t^\star}, \textbf{r}) &= \int_{s \in \real} \mathrm{Pr}(\tilde{S}^p_{k^\star} = s \mid \textbf{a}^{\leq t^\star}, \textbf{r}) \mathrm{Pr} (\textbf{a}^{> t^\star} \mid \textbf{a}^{\leq t^\star}, \textbf{r} , \tilde{S}^p_{k^\star} = s)\\
         &\leq \int_{s \in \real} e^\epsilon \mathrm{Pr}(\tilde{S'}^p_{k^\star} = s \mid \textbf{a}^{\leq t^\star}, \textbf{r'}) \mathrm{Pr} (\textbf{a}^{> t^\star} \mid \textbf{a}^{\leq t^\star}, \textbf{r'} , \tilde{S'}^p_{k^\star} = s)\\
         &= e^\epsilon \mathrm{Pr} (\textbf{a}^{> t^\star} \mid \textbf{a}^{\leq t^\star}, \textbf{r'})~.
     \end{align*}

     This concludes the proof:
     \begin{align*}
      \frac{\mathcal{V}^\pi_\textbf{r}(\textbf{a})}{\mathcal{V}^\pi_\textbf{r'}(\textbf{a})} = \frac{\mathrm{Pr} (\textbf{a}^{> t^\star} \mid \textbf{a}^{\leq t^\star}, \textbf{r})}{\mathrm{Pr} (\textbf{a}^{> t^\star} \mid \textbf{a}^{\leq t^\star}, \textbf{r'})} \leq e^\epsilon~.
    \end{align*}
 \end{proof}
\section{Regret Analysis Proof}\label{app:ub_proof}

\begin{lemma}[Explicit solution of $\dep$]
If $\mu,\mu' \in (0,1)$ and $\mu\le \mu'$, we have
    \begin{align}
\dep(\mu, \mu') \defn \inf_{z\in[\mu, \mu']}\left\{
\mathrm{kl}(z,\mu')+\epsilon (z-\mu)
\right\},
\end{align}
under Bernoulli cases, then \[
z^* = \max\left(\mu,  \frac{\mu' }{\mu'  + (1 - \mu')e^{\epsilon}}\right).
\]
solves the optimization problem. Thus, we have
\begin{equation}\label{eq:explicitSolu}
    \dep(\mu, \mu')=\left\{\begin{aligned}
        &\mathrm{kl}\left(\mu,\mu'\right), & \text{if} \quad \mu \ge  \frac{\mu' }{\mu' + (1 - \mu')e^{\epsilon}},\\
        &\mathrm{kl}\left(\frac{\mu' }{\mu' + (1 - \mu')e^{\epsilon}}, \mu'\right)+ \epsilon \left( \frac{\mu' }{\mu' + (1 - \mu')e^{\epsilon}}-\mu\right),   & \text{if} \quad \mu \le  \frac{\mu' }{\mu' + (1 - \mu')e^{\epsilon}}.
    \end{aligned}\right.
\end{equation}
For $\mu\ge \mu'$,
\[
z^* = \min\left(\frac{\mu' e^\epsilon}{\mu' e^\epsilon + (1-\mu')}, \mu\right).
\]
and
\begin{equation}\label{eq:expSolu2}
    \dep(\mu, \mu')=\left\{\begin{aligned}
        &\mathrm{kl}\left(\mu,\mu'\right), & \text{if} \quad \mu \le  \frac{\mu' e^\epsilon}{\mu' e^\epsilon + (1-\mu')},\\
        &\mathrm{kl}\left( \frac{\mu' e^\epsilon}{\mu' e^\epsilon + (1-\mu')}, \mu'\right)+ \epsilon \left(\mu- \frac{\mu' e^\epsilon}{\mu' e^\epsilon + (1-\mu')}\right),   & \text{if} \quad \mu \ge   \frac{\mu' e^\epsilon}{\mu' e^\epsilon + (1-\mu')}.
    \end{aligned}\right.
\end{equation}
\end{lemma}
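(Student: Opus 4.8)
The plan is to read the definition of $\dep(\mu,\mu')$ as a one–dimensional convex minimisation over the closed interval between $\mu$ and $\mu'$, and simply locate the minimiser by calculus. Take first the case $\mu\le\mu'$ and set $g(z)\defn \mathrm{kl}(z,\mu')+\epsilon(z-\mu)$ for $z\in[\mu,\mu']$. Differentiating the Bernoulli relative entropy in its first argument gives $\frac{\partial}{\partial z}\mathrm{kl}(z,\mu')=\log\frac{z}{1-z}-\log\frac{\mu'}{1-\mu'}$, so $g'(z)=\log\frac{z}{1-z}-\log\frac{\mu'}{1-\mu'}+\epsilon$. Since $z\mapsto\log\frac{z}{1-z}$ is strictly increasing on $(0,1)$, $g'$ is strictly increasing and $g$ is strictly convex; moreover $g'(z)\to-\infty$ as $z\to 0^+$ and $g'(z)\to+\infty$ as $z\to 1^-$, so $g$ has a unique unconstrained stationary point $z_0\in(0,1)$, found from $g'(z_0)=0$, i.e. $\frac{z_0}{1-z_0}=\frac{\mu'}{1-\mu'}e^{-\epsilon}$, which rearranges to $z_0=\frac{\mu'}{\mu'+(1-\mu')e^{\epsilon}}$.

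Next I would clamp $z_0$ to the feasible interval using convexity. Because $e^{\epsilon}\ge 1$, one always has $z_0\le\mu'$, so the unconstrained minimiser can leave $[\mu,\mu']$ only through the left endpoint: if $z_0\le\mu$ then $g'\ge 0$ on all of $[\mu,\mu']$ and the constrained minimiser is $\mu$, whereas if $\mu<z_0<\mu'$ the interior critical point $z_0$ is the minimiser. This is precisely $z^*=\max\{\mu,\,z_0\}$. Substituting back, when $z^*=\mu$ (equivalently $\mu\ge z_0$) the penalty term vanishes and $\dep(\mu,\mu')=\mathrm{kl}(\mu,\mu')$; when $z^*=z_0$ we obtain $\dep(\mu,\mu')=\mathrm{kl}(z_0,\mu')+\epsilon(z_0-\mu)$, which is exactly the piecewise expression~\eqref{eq:explicitSolu}.

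The case $\mu\ge\mu'$ is handled in the same way. Here the objective is $g(z)=\mathrm{kl}(z,\mu')+\epsilon(\mu-z)$ on $[\mu',\mu]$, with $g'(z)=\log\frac{z}{1-z}-\log\frac{\mu'}{1-\mu'}-\epsilon$, which is again strictly increasing; the unconstrained stationary point solves $\frac{z_1}{1-z_1}=\frac{\mu'}{1-\mu'}e^{\epsilon}$, giving $z_1=\frac{\mu'e^{\epsilon}}{\mu'e^{\epsilon}+(1-\mu')}\ge\mu'$. Since $g'(\mu')=-\epsilon<0$, $z_1$ can only fall outside $[\mu',\mu]$ on the right, so the constrained minimiser is $z^*=\min\{\mu,\,z_1\}$; substituting gives $\dep(\mu,\mu')=\mathrm{kl}(\mu,\mu')$ when $\mu\le z_1$ and $\dep(\mu,\mu')=\mathrm{kl}(z_1,\mu')+\epsilon(\mu-z_1)$ otherwise, matching~\eqref{eq:expSolu2}.

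There is no real conceptual obstacle here; the argument is elementary convex optimisation. The only steps that need a little care are the boundary bookkeeping — verifying $z_0\le\mu'$ (resp. $z_1\ge\mu'$) so that clamping is one–sided, checking that the threshold in the case split is exactly whether $\mu$ lies below or above $z_0$ (resp. $z_1$), and confirming the edge behaviour of $g'$ at $0$ and $1$ so that the unique stationary point genuinely lies in $(0,1)$. As a sanity check I would note that the two formulae agree at $\mu=\mu'$ (both equal $0$), that $\epsilon\to\infty$ forces $z_0\to 0$ hence $z^*=\mu$ and $\dep\to\mathrm{kl}(\mu,\mu')$, and that $\epsilon\to 0$ forces $z_0\to\mu'$ hence $\dep(\mu,\mu')\to\epsilon(\mu'-\mu)$, consistent with the two privacy regimes and the limit $d_\epsilon(\mu_a,\mu^\star)\sim\epsilon\Delta_a$ discussed after Theorem~\ref{thm:low_bound}.
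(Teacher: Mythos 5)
Your proposal is correct and follows essentially the same route as the paper's proof: differentiate $z\mapsto \mathrm{kl}(z,\mu')\pm\epsilon z$, solve the first-order condition to get the stationary point $\frac{\mu'}{\mu'+(1-\mu')e^{\epsilon}}$ (resp.\ $\frac{\mu'e^{\epsilon}}{\mu'e^{\epsilon}+1-\mu'}$), and project it onto the feasible interval, noting the projection is one-sided. Your explicit appeal to strict convexity and the boundary behaviour of $g'$ only makes the clamping step slightly more rigorous than the paper's version, which states the projection directly.
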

\begin{proof}
The Kullback-Leibler divergence between two Bernoulli random variables with means \(z\) and \(\mu'\) is given by
\[
\mathrm{kl}(z, \mu') = z \log \frac{z}{\mu'} + (1 -z) \log \frac{1 - z}{1 - \mu'}~.
\]
The optimization problem is
\[
\dep(\mu, \mu') = \inf_{z\in [\mu, \mu']} \left\{z \log \frac{z}{\mu'} + (1 -z) \log \frac{1 - z}{1 - \mu'} + \epsilon (z - \mu)\right\}~.
\]
To find the optimal \(z^*\), take the derivative of the objective function with respect to \(z\) and let it equal to $0$:
\[
\frac{\partial}{\partial z} \left[z \log \frac{z}{\mu'} + (1 -z) \log \frac{1 - z}{1 - \mu'} + \epsilon (z - \mu)\right] = 0~.
\]
By calculation, we have
\[
\log \frac{z (1 - \mu')}{\mu' (1 - z)} + \epsilon = 0~.
\]
Rearrange for \(z\), to obtain
\[
z = \frac{\mu' }{\mu'  + (1 - \mu')e^\epsilon}~.
\]
The optimal \(z^*\) must lie within the interval \([\mu, \mu']\), hence we have
\[
z^* = \max\left(\mu, \min\left(\mu', \frac{\mu' }{\mu'  + (1 - \mu')e^\epsilon}\right)\right)~.
\]
We always have $\frac{\mu' }{\mu'  + (1 - \mu')e^\epsilon}\le\mu'$, so we can remove the min part:
\[
z^* = \max\left(\mu,  \frac{\mu' }{\mu' + (1 - \mu')e^{\epsilon}}\right)~.
\]
Thus, we obtain 
\begin{equation*}
    \dep(\mu, \mu')=\left\{\begin{aligned}
        &\mathrm{kl}\left(\mu,\mu'\right) & \text{if} \quad \mu \ge  \frac{\mu' }{\mu' + (1 - \mu')e^{\epsilon}}\\
        &\mathrm{kl}\left(\frac{\mu' }{\mu' + (1 - \mu')e^{\epsilon}}, \mu'\right)+ \epsilon \left( \frac{\mu' }{\mu' + (1 - \mu')e^{\epsilon}}-\mu\right)   & \text{if} \quad \mu \le  \frac{\mu' }{\mu' + (1 - \mu')e^{\epsilon}}
    \end{aligned}\right.
\end{equation*}

Now, we consider $\mu\ge \mu'$, 
\[
\dep(\mu, \mu') = \inf_{z\in[\mu', \mu]}\left\{
\mathrm{kl}(z,\mu')+\epsilon (\mu-z)\right\}~.
\]
So, we need to minimise
\[
\dep(\mu, \mu') = \inf_{z\in [\mu', \mu]}\left\{z \log \frac{z}{\mu'} + (1-z) \log \frac{1-z}{1-\mu'} + \epsilon (\mu - z)\right\}
\]
over \( z \in [\mu', \mu] \).
Differentiating the objective function with respect to \( z \) and setting it equal to $0$, we have
\[
 \log \frac{z}{\mu'} - \log \frac{1-z}{1-\mu'} - \epsilon=0.
\]
Solving for \( z \), we get
\[
z^* = \frac{\mu' e^\epsilon}{\mu' e^\epsilon + (1-\mu')}\ge \mu'~.
\]
Projecting the solution to $[\mu',\mu]$, then we have that the optimal solution is 
\[
z^* = \min\left(\frac{\mu' e^\epsilon}{\mu' e^\epsilon + (1-\mu')}, \mu\right)~.
\]
Thus, the explicit solution is
\begin{equation*}
    \dep(\mu, \mu')=\left\{\begin{aligned}
        &\mathrm{kl}\left(\mu,\mu'\right), & \text{if} \quad \mu \le  \frac{\mu' e^\epsilon}{\mu' e^\epsilon + (1-\mu')},\\
        &\mathrm{kl}\left( \frac{\mu' e^\epsilon}{\mu' e^\epsilon + (1-\mu')}, \mu'\right)+ \epsilon \left(\mu- \frac{\mu' e^\epsilon}{\mu' e^\epsilon + (1-\mu')}\right),   & \text{if} \quad \mu \ge   \frac{\mu' e^\epsilon}{\mu' e^\epsilon + (1-\mu')}.
    \end{aligned}\right.
\end{equation*}
\end{proof}

\begin{lemma}\label{lem_deriv_first}
For any $\mu,\mu'\in[0,1]$,
\begin{align}
\left|
\frac{\mathrm{~d} \{\mathrm{d}_\epsilon\left(\mu, \mu'\right)\}}{\mathrm{d} \mu}
\right|
\le \epsilon.\n
\end{align}
\end{lemma}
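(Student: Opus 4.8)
The plan is to show that $\mu\mapsto\mathrm{d}_\epsilon(\mu,\mu')$ is $\epsilon$-Lipschitz, from which $\bigl|\tfrac{\mathrm{d}\{\mathrm{d}_\epsilon(\mu,\mu')\}}{\mathrm{d}\mu}\bigr|\le\epsilon$ follows at every point where the derivative exists.

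First I would drop the $\mu$-dependent feasibility constraint in \eqref{eq:d_eps}, showing
\[
\mathrm{d}_\epsilon(\mu,\mu')=\inf_{z\in[0,1]}\left\{\mathrm{kl}(z,\mu')+\epsilon\,\abs{z-\mu}\right\}.
\]
The inequality ``$\ge$'' is immediate since $[\mu\wedge\mu',\mu\vee\mu']\subseteq[0,1]$ and the infimum over a smaller set is larger. For ``$\le$'', recall that $z\mapsto\mathrm{kl}(z,\mu')$ is nonincreasing on $[0,\mu']$ and nondecreasing on $[\mu',1]$. Consider the case $\mu\le\mu'$ (the case $\mu\ge\mu'$ is analogous with the roles reversed). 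For $z\ge\mu'$ both $\mathrm{kl}(z,\mu')$ and $\abs{z-\mu}=z-\mu$ are nondecreasing in $z$, so any such $z$ is dominated by the feasible point $z=\mu'$. For $z\le\mu$ the objective equals $\mathrm{kl}(z,\mu')+\epsilon(\mu-z)$, whose $z$-derivative is $\log\tfrac{z(1-\mu')}{\mu'(1-z)}-\epsilon<0$, because $z\le\mu\le\mu'$ forces $\tfrac{z(1-\mu')}{\mu'(1-z)}\le 1$; hence the objective is decreasing on $[0,\mu]$ and is dominated by the feasible point $z=\mu$. Thus the infimum over $[0,1]$ is attained inside $[\mu\wedge\mu',\mu\vee\mu']$, which proves the identity.

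Second, I would invoke the Pasch--Hausdorff (Lipschitz-regularisation) property of the infimal convolution with $\epsilon\abs{\cdot}$. For any $\mu_1,\mu_2\in[0,1]$ and any $z\in[0,1]$, the triangle inequality gives $\epsilon\abs{z-\mu_1}\le\epsilon\abs{z-\mu_2}+\epsilon\abs{\mu_1-\mu_2}$, so $\mathrm{kl}(z,\mu')+\epsilon\abs{z-\mu_1}\le\mathrm{kl}(z,\mu')+\epsilon\abs{z-\mu_2}+\epsilon\abs{\mu_1-\mu_2}$; taking the infimum over $z\in[0,1]$ and using the identity above yields $\mathrm{d}_\epsilon(\mu_1,\mu')\le\mathrm{d}_\epsilon(\mu_2,\mu')+\epsilon\abs{\mu_1-\mu_2}$. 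By symmetry, $\abs{\mathrm{d}_\epsilon(\mu_1,\mu')-\mathrm{d}_\epsilon(\mu_2,\mu')}\le\epsilon\abs{\mu_1-\mu_2}$, i.e. $\mathrm{d}_\epsilon(\cdot,\mu')$ is $\epsilon$-Lipschitz. Consequently, wherever it is differentiable, its derivative is at most $\epsilon$ in absolute value, which is the claim; the singular cases $\mu'\in\{0,1\}$ follow from the limiting convention for $\mathrm{kl}$.

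The main obstacle is the first step: one must be careful that truncating to $z\in[0,1]$ is harmless for both orderings of $\mu$ and $\mu'$, using the monotonicity of $\mathrm{kl}(\cdot,\mu')$ and the sign of the objective's $z$-derivative on the truncated pieces. An alternative (messier) route differentiates the explicit piecewise formula for $\mathrm{d}_\epsilon$ from the previous lemma: on the ``$\mathrm{kl}$'' branch the derivative equals $\log\tfrac{\mu(1-\mu')}{\mu'(1-\mu)}$, which stays between $0$ and $\mp\epsilon$ across the branch, while on the ``clipped'' branch it is exactly $\mp\epsilon$; matching the two at the breakpoint gives the bound. The envelope/infimal-convolution argument is cleaner and avoids the case analysis on the closed-form expression.
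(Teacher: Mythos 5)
Your proof is correct, but it takes a genuinely different route from the paper's. The paper proves the lemma by differentiating the explicit piecewise formula for $\dep$ obtained in the preceding ``explicit solution'' lemma: on the clipped branch the derivative is exactly $\mp\epsilon$, and on the $\mathrm{kl}$ branch it equals $\log\frac{\mu(1-\mu')}{\mu'(1-\mu)}$, which the branch condition (e.g.\ $\mu\ge \frac{\mu'}{\mu'+(1-\mu')e^{\epsilon}}$ when $\mu\le\mu'$) pins between $-\epsilon$ and $0$ (and between $0$ and $\epsilon$ in the symmetric case). You instead first show the $\mu$-dependent feasibility set can be relaxed to $z\in[0,1]$ without changing the value — your monotonicity argument for $z\le\mu$ and $z\ge\mu'$ (and the mirrored case) is sound — and then apply the Pasch--Hausdorff/infimal-convolution argument to get $\epsilon$-Lipschitzness of $\mathrm{d}_\epsilon(\cdot,\mu')$, hence the derivative bound wherever the derivative exists. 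What each buys: the paper's computation yields the exact sign and value of the derivative on each branch (e.g.\ that it lies in $[-\epsilon,0]$ for $\mu\le\mu'$), information your argument does not give; your argument is shorter, does not rely on the Bernoulli closed form (so it would extend verbatim to the general $\mathrm{d}_\epsilon^{\mathcal{M}}$ quantities mentioned after Theorem~\ref{thm:low_bound}), handles the boundary cases $\mu'\in\{0,1\}$ and possible non-differentiability gracefully, and in fact delivers directly the Lipschitz/difference bound that is what the regret proof actually uses when it integrates this derivative bound (and since $\mathrm{d}_\epsilon(\cdot,\mu')$ is an inf-convolution of convex functions, it is convex and Lipschitz, so differentiable off a negligible set and the pointwise statement is recovered). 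One small caveat: your conclusion is stated ``wherever the derivative exists,'' which is marginally weaker than the paper's pointwise claim on the (everywhere differentiable) explicit formula, but this costs nothing for how the lemma is used.
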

\begin{proof}
For $\mu\le \mu'$, from \eqref{eq:explicitSolu}, we have the explicit solution. If $\mu \ge  \frac{\mu' }{\mu' + (1 - \mu')e^{\epsilon}}$,
\[
\mathrm{d}_\epsilon\left(\mu, \mu'\right)=\mathrm{kl}(\mu, \mu') = \mu \log \frac{\mu}{\mu'} + (1 - \mu) \log \frac{1 - \mu}{1 - \mu'}
\]
Its derivative with respect to \( \mu \) is
\[
\frac{\mathrm{~d} \{\mathrm{d}_\epsilon\left(\mu, \mu'\right)\}}{\mathrm{d} \mu}=\frac{\mathrm{d} }{\mathrm{d} \mu} \mathrm{kl}(\mu, \mu') =
\log \frac{\mu(1 - \mu')}{\mu'(1 - \mu)}~.
\]
We have the condition
\[
\mu' \geq \mu \geq \frac{\mu'}{\mu' + (1 - \mu') e^\epsilon}~.
\]
Since \( \mu' \geq \mu \), we note that
\[
\frac{\mathrm{~d} \{\mathrm{d}_\epsilon\left(\mu, \mu'\right)\}}{\mathrm{d} \mu}\leq 0~.
\]
Similarly, since \( \mu \geq \frac{\mu'}{\mu' + (1 - \mu')e^\epsilon} \), we substitute this into the derivative
\[
\frac{\mu(1 - \mu')}{\mu'(1 - \mu)}
\geq \frac{\left(\frac{\mu'}{\mu' + (1 - \mu') e^\epsilon} \right)(1 - \mu')}{\mu' \left(1 - \frac{\mu'}{\mu' + (1 - \mu') e^\epsilon}\right)}= \frac{1}{e^\epsilon}~.
\]
Thus,
\[
-\epsilon\le \log \frac{\mu(1 - \mu')}{\mu'(1 - \mu)} \le 0~.
\]
If $ \mu \le  \frac{\mu' }{\mu' + (1 - \mu')e^{\epsilon}}$, then
\[
\frac{\mathrm{~d} \{\mathrm{d}_\epsilon\left(\mu, \mu'\right)\}}{\mathrm{d} \mu}=-\epsilon~.
\]
Therefore, for  $\mu\le \mu'$,
\[
-\epsilon \le \frac{\mathrm{~d} \{\mathrm{d}_\epsilon\left(\mu, \mu'\right)\}}{\mathrm{d} \mu}\le 0~.
\]
Now, we consider the case of $\mu\ge \mu'$. From the explicit solution \eqref{eq:expSolu2}, when $\mu \ge  \frac{\mu' e^\epsilon}{\mu' e^\epsilon + (1-\mu')}$, $ \frac{\mathrm{~d} \{\mathrm{d}_\epsilon\left(\mu, \mu'\right)\}}{\mathrm{d} \mu}=\epsilon$ and the result holds. Let's consider $\mu \le  \frac{\mu' e^\epsilon}{\mu' e^\epsilon + (1-\mu')}$, similar to the above argument, we have 
\[
\frac{\mathrm{~d} \{\mathrm{d}_\epsilon\left(\mu, \mu'\right)\}}{\mathrm{d} \mu}=\frac{\mathrm{d} }{\mathrm{d} \mu} \mathrm{kl}(\mu, \mu') =
\log \frac{\mu(1 - \mu')}{\mu'(1 - \mu)}\in [0,\epsilon].
\]
Thus, we have the result in the lemma.
\end{proof}

\begin{lemma}\label{lem_deriv_second}
For any $0\le \mu\le \mu'<1$,
\[
\frac{\mathrm{~d} \{\mathrm{d}_\epsilon\left(\mu, \mu'\right)\}}{\mathrm{d} \mu'}
\le
\frac{1}{1-\mu'}~.
\]
\end{lemma}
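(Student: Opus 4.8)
The plan is to treat $\mathrm{d}_\epsilon(\mu,\mu')$ as the value of the parametrised minimisation $\min_{z\in[\mu,\mu']} g(z,\mu')$ with $g(z,\mu')\defn\mathrm{kl}(z,\mu')+\epsilon(z-\mu)$, and to differentiate in $\mu'$ via the envelope theorem so that only the explicit $\mu'$-dependence of $g$ survives. The observation that makes this work is that $\tfrac{\partial}{\partial\mu'}\mathrm{kl}(z,\mu')=-\tfrac{z}{\mu'}+\tfrac{1-z}{1-\mu'}$ is bounded above by $\tfrac{1}{1-\mu'}$ for \emph{every} $z\ge0$, hence in particular at the minimiser.

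First I would invoke the explicit-solution lemma to recall that the minimiser of the inner problem is $z^\star=\max\!\big(\mu,\ \tfrac{\mu'}{\mu'+(1-\mu')e^\epsilon}\big)$, so $z^\star\in[\mu,\mu']$ and, crucially, $z^\star\ge0$. Then I would differentiate in $\mu'$, splitting into the two regimes of $z^\star$. If $z^\star=\mu$, then $\mathrm{d}_\epsilon(\mu,\mu')=\mathrm{kl}(\mu,\mu')$ with $\mu$ independent of $\mu'$. If instead $z^\star=\tfrac{\mu'}{\mu'+(1-\mu')e^\epsilon}$ lies strictly inside $(\mu,\mu')$, it satisfies the first-order condition $\tfrac{\partial}{\partial z}g(z^\star,\mu')=0$ (equivalently $\log\tfrac{z^\star(1-\mu')}{\mu'(1-z^\star)}=-\epsilon$, already computed in that lemma), so the $\tfrac{\mathrm{d}z^\star}{\mathrm{d}\mu'}$ term cancels in the chain rule. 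Either way I obtain
\[
\frac{\mathrm{d}}{\mathrm{d}\mu'}\,\mathrm{d}_\epsilon(\mu,\mu')=\frac{\partial}{\partial\mu'}\mathrm{kl}(z,\mu')\Big|_{z=z^\star}=-\frac{z^\star}{\mu'}+\frac{1-z^\star}{1-\mu'},
\]
and the bound follows since $z^\star\ge0$ gives $-\tfrac{z^\star}{\mu'}\le0$ and $\tfrac{1-z^\star}{1-\mu'}\le\tfrac{1}{1-\mu'}$.

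The main obstacle --- indeed the only real subtlety --- is justifying that $\mu'\mapsto\mathrm{d}_\epsilon(\mu,\mu')$ is differentiable, including across the boundary between the two regimes where $z^\star$ switches from $\mu$ to the interior value. I would settle this directly from the explicit piecewise formula of the explicit-solution lemma: at the transition point $z^\star=\mu$, so the two branches and their $\mu'$-derivatives coincide there, making $\mathrm{d}_\epsilon(\mu,\cdot)$ of class $C^1$ on $(0,1)$ and legitimising the envelope step. The degenerate corner $\mu=\mu'=0$ (forced by $\mu\le\mu'$ when $\mu'=0$) is trivial, and everything else reduces to the one-line inequality above.
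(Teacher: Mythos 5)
Your proof is correct and takes essentially the same route as the paper: differentiate through the explicit minimiser of the inner problem (the first-order condition cancels the $\mathrm{d}z^\star/\mathrm{d}\mu'$ term), split into the two regimes according to whether $z^\star=\mu$ or $z^\star$ is interior, and bound the explicit partial derivative of $\mathrm{kl}$ in its second argument. The only cosmetic difference is that the paper passes through the intermediate bound $\frac{\mathrm{d}\{\mathrm{d}_\epsilon(\mu,\mu')\}}{\mathrm{d}\mu'}\le\frac{\mathrm{d}\{\mathrm{kl}(\mu,\mu')\}}{\mathrm{d}\mu'}$ using $z^\star\ge\mu$, whereas you bound $\frac{\partial\,\mathrm{kl}(z^\star,\mu')}{\partial\mu'}=\frac{1-z^\star}{1-\mu'}-\frac{z^\star}{\mu'}$ directly via $z^\star\ge 0$.
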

\begin{proof}
Considering the definition of $\dep$ in \eqref{eq:d_eps}, we have for  $0\le \mu\le \mu'<1$
\[\dep(\mu,\mu')=\inf_{z\in[\mu,\mu']}\mathrm{kl}(z,\mu')+\epsilon(z-\mu)~.\]
We first show 
\begin{align}
\frac{\mathrm{~d} \{\mathrm{d}_\epsilon\left(\mu, \mu'\right)\}}{\mathrm{d} \mu'}
&\le
\frac{\mathrm{~d} \{\mathrm{kl}\left(\mu, \mu'\right)\}}{\mathrm{d} \mu'}~.
\end{align}
From the explicit solution in \eqref{eq:explicitSolu}, we have if \(\mu \ge  \frac{\mu' }{\mu' + (1 - \mu')e^{\epsilon}}\), then $ \dep(\mu, \mu')=\mathrm{kl}\left(\mu,\mu'\right)$. So the inequality holds. If \( \mu \le  \frac{\mu' }{\mu' + (1 - \mu')e^{\epsilon}}\), let $f(\mu')= \frac{\mu' }{\mu' + (1 - \mu')e^{\epsilon}}$, then $f'(\mu')= \frac{e^\epsilon }{(\mu' + (1 - \mu')e^{\epsilon})^2}$. In this case, $ \dep(\mu, \mu')=\mathrm{kl}\left(f(\mu'), \mu'\right)+ \epsilon \left( f(\mu')-\mu\right)$ where $\mu\le f(\mu')\le \mu'$. By calculation, we have for this case,
\begin{align*}
    \frac{\mathrm{~d} \{\mathrm{d}_\epsilon\left(\mu, \mu'\right)\}}{\mathrm{d} \mu'}=f'(\mu')\left(\log\frac{f(\mu')}{\mu'}-\log\frac{1-f(\mu')}{1-\mu'}+\epsilon\right)+\frac{\mu'-f(\mu')}{\mu'(1-\mu')}~.
\end{align*}
Note that $\log\frac{f(\mu')}{\mu'}-\log\frac{1-f(\mu')}{1-\mu'}+\epsilon=0$ and $\mu\le f(\mu')$.
And we bound
\begin{align}
\frac{\mathrm{~d} \{\mathrm{kl}\left(\mu, \mu'\right)\}}{\mathrm{d} \mu'}
&=
\frac{1-\mu}{1-\mu'}-\frac{\mu}{\mu'}
\nn
&=
\frac{1}{1-\mu'}\frac{\mu'-\mu}{\mu'}
\nn
&\le
\frac{1}{1-\mu'}~.\n
\end{align}
Thus, we have the result.
\end{proof}

\begin{theorem}[Regret upper bound of \dpimed{}]\label{thm:dp_imed}
Assume $\mustar<1$.
Under the batch sizes given in \eqref{def_batch} with $\base>1$,
the regret bound of \dpimed{} for a Bernoulli bandit $\nu$ is 
\label{thm:regretUpper}
    \[
    \reg_T(\dpimed{}, \nu) \le \sum_{i \neq i^*} \frac{\base \Delta_i\log T}{\mathrm{d}_\epsilon(\mu_i, \mustar)} +o(\log T)~.
    \]
\end{theorem}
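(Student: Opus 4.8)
The target is $\E[N_i(T)]$ for each suboptimal arm $i$, which plugged into $\reg_T(\dpimed{},\nu)=\sum_{i\ne i^\star}\Delta_i\,\E[N_i(T)]$ yields the claim. Since \dpimed{} pulls arm $i$ in whole batches, write $N_i(T)=B_0+\sum_{m\ge 1}B_m\,\mathds{1}\{\text{arm }i\text{ reaches epoch }m\}$, where ``arm $i$ reaches epoch $m$'' means that at some round $t$ with $m_i(t)=m-1$ the index \eqref{imed_score} selected arm $i$. Fix a confidence level $\delta>0$; at the very end it is taken to be a slowly vanishing sequence $\delta=\delta_T\to 0$, which only inflates the bound by $o(\log T)$. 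The structural fact used throughout is that the arm realising $\tilde\mu^\star(t)=\max_j\tilde\mu_{j,m_j}$ has its $\mathrm{d}_\epsilon$ term in \eqref{imed_score} equal to zero, so its index is $\log$ of its own cumulative count $\le\log t\le\log T$; hence $\min_j I_j(t)\le\log T$, and whenever $a_t=i$,
\[ n_{m_i(t)}\,\mathrm{d}_\epsilon\bigl([\tilde\mu_{i,m_i(t)}]_0^1,[\tilde\mu^\star(t)]_0^1\bigr)+\log n_{m_i(t)}\le\log T. \]

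\textbf{Main term.} On the event that arm $i$ is not overestimated, $[\tilde\mu_{i,m_i(t)}]_0^1\le\mu_i+\delta$, and the optimal arm is not underestimated, $[\tilde\mu_{i^\star,m_{i^\star}(t)}]_0^1\ge\mu^\star-\delta$ (so $[\tilde\mu^\star(t)]_0^1\ge\mu^\star-\delta$), I combine the monotonicity of $\mathrm{d}_\epsilon$ in each argument (from its explicit Bernoulli solution: decreasing in the first argument, increasing in the second, for first $\le$ second) with the Lipschitz estimates $|\partial_1\mathrm{d}_\epsilon|\le\epsilon$ and $\partial_2\mathrm{d}_\epsilon\le 1/(1-\mu^\star)$ of Lemmas~\ref{lem_deriv_first}--\ref{lem_deriv_second} --- this is exactly where $\mu^\star<1$ enters --- to get $\mathrm{d}_\epsilon([\tilde\mu_{i,m_i(t)}]_0^1,[\tilde\mu^\star(t)]_0^1)\ge\mathrm{d}_\epsilon(\mu_i,\mu^\star)-C\delta$ for $C=\epsilon+1/(1-\mu^\star)$. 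The display above then forces $n_{m_i(t)}\le\log T/(\mathrm{d}_\epsilon(\mu_i,\mu^\star)-C\delta)$, i.e.\ arm $i$ reaches only epochs $m$ with $n_m$ below this threshold; since $n_{m+1}/n_m\to\base$ for the schedule \eqref{def_batch}, the number of pulls accumulated on this event is at most $\base\log T/(\mathrm{d}_\epsilon(\mu_i,\mu^\star)-C\delta)\,(1+o_T(1))$, which is $\base\log T/\mathrm{d}_\epsilon(\mu_i,\mu^\star)+o(\log T)$ after letting $\delta\to 0$.

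\textbf{Low-order terms.} Two complementary bad events remain, each of which must contribute $o(\log T)$. (a) \emph{Arm $i$ overestimated at epoch $m$}: by Proposition~\ref{prop:conc} with $x=\mu_i+\delta\ge\mu_i$, $n_m$ Bernoullis and $m+1=o(n_m)$ Laplace noises (the batch schedule makes $n_m$ geometric, hence the hypothesis $m/n_m=o(1)$ holds), $\Pr[\tilde\mu_{i,m}\ge\mu_i+\delta]\le A_a\,\e^{-n_m(\mathrm{d}_\epsilon(\mu_i+\delta,\mu_i)-a)}$; picking $a$ small makes the exponent a positive constant $c$, and since $B_{m+1}=\Theta(n_m)$ grows geometrically while $\e^{-cn_m}$ decays super-geometrically, $\sum_m B_{m+1}A_a\e^{-cn_m}<\infty$, so this costs only $O_\delta(1)$ pulls. (b) \emph{The optimal arm underestimated}, $[\tilde\mu^\star(t)]_0^1<\mu^\star-\delta$: here a single selection of arm $i$ can cost the whole current batch $B_{m_i+1}\approx(\base-1)n_{m_i}$, which is the delicate point. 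The plan is a batched counting argument indexed by the optimal arm's epoch $\ell$: the probability $\tilde\mu_{i^\star,\ell}<\mu^\star-\delta$ is at most $A_a\e^{-n_\ell(\mathrm{d}_\epsilon(\mu^\star-\delta,\mu^\star)-a)}$ by Proposition~\ref{prop:conc} with $x=\mu^\star-\delta\le\mu^\star$, and while the optimal arm sits at epoch $\ell$ and still realises the running maximum, $I_{i^\star}(t)=\log n_\ell$, so $a_t=i$ forces $\log n_{m_i(t)}\le\log n_\ell$, i.e.\ arm $i$'s count cannot pass $\base n_\ell(1+o(1))$ during that stretch; summing $\base n_\ell\cdot A_a\e^{-cn_\ell}$ over epochs $\ell$ converges, so this contributes $O(1)=o(\log T)$. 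The remaining configuration --- a suboptimal arm realising the maximum while the optimal arm is still below $\mu^\star-\delta$ --- keeps the optimal estimate below $\mu^\star-\delta$ and is therefore absorbed either into the same rare event for the optimal arm or, after the clipping $[\cdot]_0^1$ is accounted for, into a likewise summable overestimation event for that suboptimal arm.

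\textbf{Main obstacle.} The crux is item (b): in the non-batched IMED analysis a selection of a suboptimal arm that is ``wasted'' because the optimal arm has not yet converged costs a single round, whereas in the batched algorithm it can cost the entire current batch, of size $\Theta(n_{m_i})$; one must therefore show the per-arm batch counters cannot have grown large before the optimal arm's private estimate is corrected, while bookkeeping several simultaneously growing counters. I would port the regret decomposition of \citet{honda2015non} for IMED (and \citet{honda2019note} for the KL-UCB variant) into the batched setting, re-deriving the counting steps with the noisy means and substituting $\mathrm{d}_\epsilon$ and Proposition~\ref{prop:conc} everywhere $\mathrm{kl}$ and the Chernoff bound appeared; the residual work --- the monotonicity and continuity manipulations of $\mathrm{d}_\epsilon$, the geometric-ratio estimates on $n_m$, and the convergence of the bad-event series --- is routine given the lemmas already in place.
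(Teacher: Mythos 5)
Your main-term and overestimation analyses are sound and essentially reproduce the paper's post-convergence argument: the index inequality $n_{m_i}\mathrm{d}_\epsilon([\tilde\mu_{i,m_i}]_0^1,[\tilde\mu^\star(t)]_0^1)+\log n_{m_i}\le\log T$, the Lipschitz estimates of Lemmas~\ref{lem_deriv_first}--\ref{lem_deriv_second}, and Proposition~\ref{prop:conc} give the $\base\log T/\mathrm{d}_\epsilon(\mu_i,\mu^\star)$ term plus summable tails, exactly as in the paper. The genuine gap is in your item (b), which is the paper's pre-convergence term (A) and the real crux. Your counting only works in the sub-case where the optimal arm itself realises the running maximum (then indeed $I_{i^\star}(t)=\log n_\ell$ and the selected arm's count is capped by $n_\ell$), or where the maximiser is a suboptimal arm sitting above $\mu^\star-\delta$ (an overestimation event you can charge to that arm). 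But in the configuration where \emph{all} private estimates are below $\mu^\star-\delta$ and the maximum is realised by a suboptimal arm $k$ whose estimate is near its own mean $\mu_k$, there is no rare event for $k$ to absorb the cost into, and the only bound the index comparison gives is $\log n_{m_i}\le I_{i^\star}(t)=n_\ell\,\mathrm{d}_\epsilon([\tilde\mu_{i^\star,\ell}]_0^1,[\tilde\mu^\star(t)]_0^1)+\log n_\ell$, i.e.\ $n_{m_i}$ can be exponentially larger than $n_\ell$. Multiplying such a charge by your fixed-threshold probability $\Pr[\tilde\mu_{i^\star,\ell}<\mu^\star-\delta]\le A_a e^{-cn_\ell}$ does not yield a summable series, so the "absorption" sentence does not close the argument.

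What is missing is precisely the device the paper uses: bound the pulls accrued during the whole pre-convergence regime by $O(\base^2 e^{\bar I_j})$ with $\bar I_j$ as in \eqref{def_barI} (which retains the possibly large $n_\ell\,\mathrm{d}_\epsilon([\tilde\mu_{j,\ell}]_0^1,\mu^\star-\delta)$ term), and then control the \emph{exponential moment} $\mathbb{E}\bigl[\mathds{1}\{\tilde\mu_{j,m}<\mu^\star-\delta\}\,n_m e^{n_m\mathrm{d}_\epsilon([\tilde\mu_{j,m}]_0^1,\mu^\star-\delta)}\bigr]$ by peeling over the level of the deviation (integration by parts against $P(x)=\Pr[\mathrm{d}_\epsilon([\tilde\mu_{j,m}]_0^1,\mu^\star)\ge x,\ \tilde\mu_{j,m}<\mu^\star-\delta]\le A_a e^{n_m a}e^{-n_m x}$), so that the tilt $e^{n_m(x-\delta')}$ is exactly compensated by the $\mathrm{d}_\epsilon$-rate tail from Proposition~\ref{prop:conc}, leaving the slack $e^{-n_m\delta'}$ that makes the series converge (the paper's \eqref{imed_term1}--\eqref{imed_wait}). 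This step is not "routine porting" of \citet{honda2015non}: it is exactly where the batched cost $B_{m_i+1}=\Theta(n_{m_i})$ interacts with the underestimation event, and without it your bound on the underestimation term does not go through.
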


\begin{proof}[Proof of Theorem \ref{thm:regretUpper}]
Let $\calT$ be the set of rounds $t$ such that Lines \ref{line_selection1}--\ref{line_selection2} are run, that is,
the rounds such that the arm selection occurred.
For $t\in \calT$, we define $\tmu_i(t)$ as $\tmu_{i,n_m}$ when $N_i(t-1)=n_m$.
Let $j$ be any optimal arm, that is, $j$ such that $\Delta_j=0$.
By the batched structure of the algorithm, we have
\begin{align}
\operatorname{Regret}(T)
& =\sum_{i\neq i^*}\sum_{t=1}^T \left(\mustar-\mu_i\right)\onex{i(t)=i}
\nn
& \le n_0\sum_{i\neq i^*}\left(\mustar-\mu_i\right)
+\sum_{i\neq i^*}\left(\mustar-\mu_i\right)
\sum_{t=1}^T \sum_{m=0}^{\infty}B_{m+1}\onex{i(t)=i,\,N_i(t-1)=n_m,\, t\in\calT}
\nn
&
\le
n_0\sum_{i\neq i^*}\left(\mustar-\mu_i\right)
\nn
&\quad+
\sum_{i\neq i^*}\left(\mustar-\mu_i\right)
\underbrace{\sum_{t=1}^T \sum_{m=0}^{\infty}B_{m+1}\onex{i(t)=i,\,N_i(t-1)=n_m,\, t\in\calT,\, \tmu_j(t)<\mustar-\de}}_{(\mathrm{A})}
\nn
&\quad+\sum_{i\neq i^*}\left(\mustar-\mu_i\right)
\underbrace{\sum_{t=1}^T \sum_{m=0}^{\infty}B_{m+1}\onex{i(t)=i,\,N_i(t-1)=n_m,\, t\in\calT,\, \tmu_j(t)\ge\mustar-\de}}_{(\mathrm{B})},
\label{eq:decomp}
\end{align}
where $\delta>0$ is a small constant. 
(A) and (B) correspond to the regret before and after the convergence, respectively.

Note that we have
\begin{align}
n_m
=\left\lceil n_0 \frac{\base^{m+1}-1}{\base-1}\right\rceil
\le n_0 \frac{\base^{m+1}-1}{\base-1}+1,
\label{nm}
\end{align}
and
\begin{align}
B_m=n_{m}-n_{m-1}
\le n_0 \frac{\base^{m+1}-1}{\base-1} - n_0 \frac{\base^{m}-1}{\base-1}+1
\le 2n_0 \base^m~.
\label{bound_batch}
\end{align}


\paragraph{Pre-convergence Term.}
First consider (A).
Define
\begin{align}
\bar{I}_j=
\max_{m: \tilde{\mu}_{j,m}< \mustar-\de} \left\{n_m \dep([\tilde{\mu}_{j,m}]_0^1, \mustar-\de)+\log n_m
\right\},
\label{def_barI}
\end{align}
where we define $\bar{I}_j=-\infty$ if $\tilde{\mu}_{j,m}\ge \mustar-\de$ for all $m\in\mathbb{Z}_+$.
Then,
$\left\{i(t)=i,\,t\in\calT,\, \tmu_j(t)< \mustar-\de\right\}$ implies that
\begin{align}
I_i(t) = I^*(t)\le I_j(t)\le
N_j(t-1) \dep([\tilde{\mu}_{j}(t)]_0^1, \mustar-\de)+\log N_j(t)
\le\bar{I}_j,
\n
\end{align}
where $I^*(t)=\max_{i'}I_i(t)$ is the optimal arm obtained by Line~\ref{line_compute_index}
in Algorithm \ref{alg_DP}.
By this fact we have
\begin{align}
(\mathrm{A})
&\le
\sum_{t=1}^T \sum_{m=0}^{\infty}B_{m+1}\onex{i(t)=i,\,N_i(t-1)=n_m,\, t\in\calT,\,I_i(t)\le \bar{I}_j}
\nn
&\le
\sum_{t=1}^T \sum_{m=0}^{\infty}B_{m+1}\onex{i(t)=i,\,N_i(t-1)=n_m,\, t\in\calT,\,\log N_i(t-1)\le \bar{I}_j}
\nn
&\le
\sum_{t=1}^T \sum_{m=0}^{\infty}B_{m+1}\onex{i(t)=i,\,N_i(t-1)=n_m,\, t\in\calT,\,n_m \le \e^{\bar{I}_j}}
\nn
&\le
\sum_{m=0}^{\infty}B_{m+1}\sum_{t=1}^T \onex{i(t)=i,\,N_i(t-1)=n_m,\, t\in\calT,\, n_0\frac{\base^m-1}{\base-1} \le \e^{\bar{I}_j}}
\nn
&=
\sum_{m=0}^{\left\lfloor \log_{\base}((\base-1)\e^{\bar{I}_j}/n_0+1)\right\rfloor}
B_{m+1}
\sum_{t=1}^T 
\onex{i(t)=i,\,N_i(t-1)=n_m,\, t\in\calT}~.
\n
\end{align}
Since $\{i(t)=i,\,N_i(t-1)=n_m\}$ can occur at most once for each $m$, we have
\begin{align}
(\mathrm{A})
&\le
\sum_{m=0}^{\left\lfloor \log_{\base}((\base-1)\e^{\bar{I}_j}/n_0+1)\right\rfloor}
B_{m+1}
\nn
&=
n_{\lfloor \log_{\base}((\base-1)\e^{\bar{I}_j}/n_0+1)\rfloor+1}-n_0
\nn
&=
\left\lceil
n_0 
\frac{\base^{\left\lfloor \log_{\base}((\base-1)\e^{\bar{I}_j}/n_0+1)\right\rfloor+2}-1}{\base-1}
\right\rceil
-n_0
\nn
&\le
n_0 
\frac{\base^2 ((\base-1)\e^{\bar{I}_j}/n_0+1)-1}{\base-1}
-n_0+1
\nn
&=
\base^2 \e^{\bar{I}_j}+\base n_0+1
\nn
&=
\base^2\max_{m: \tilde{\mu}_{j,m}< \mustar-\de} \left\{n_m\e^{n_m \dep([\tilde{\mu}_{j,m}]_0^1, \mustar-\de)}
\right\}
+\base n_0+1
\nn
&\le
\base^2
\sum_{m=0}^{\infty}
\onex{\tilde{\mu}_{j,m}< \mustar-\de}
n_m\e^{n_m \dep([\tilde{\mu}_{j,m}]_0^1, \mustar-\de)}
+\base n_0+1~.
\label{imed_term1}
\end{align}

Now, let us consider the expectation of \eqref{imed_term1}.
When $\tilde{\mu}_{j, m}<\mustar-\de$
we have
\begin{align}
  \mathrm{d}_\epsilon([\tilde{\mu}_{j, m_j}]_0^1, \mustar-\delta)
&=\mathrm{d}_\epsilon([\tilde{\mu}_{j, m_j}]_0^1, \mustar)-
\left.\int_{\mustar-\delta}^{\mustar} \frac{\mathrm{~d} \{\mathrm{d}_\epsilon\left([\tilde{\mu}_{j, m_j}]_0^1, \mu\right)\}}{\mathrm{d} \mu}\right|_{\mu=u} \mathrm{~d} u\nn
&\ge
\mathrm{d}_\epsilon([\tilde{\mu}_{j, m_j}]_0^1, \mustar)-\frac{\de}{1-\mustar}
\since{by Lemma \ref{lem_deriv_second}}
\nn
  &=\mathrm{d}_\epsilon([\tilde{\mu}_{j, m_j}]_0^1, \mustar)-\delta',
\n
\end{align}
where we set $\delta'=\de/(1-\mustar)$.

Let \(P(x)=\Pr[\mathrm{d}_\epsilon([\tilde{\mu}_{j,m_j}]_0^1, \mustar) \ge x,\tmu_{j,m}<\mustar-\delta]\).
If $\tilde{\mu}_{j,m}< \mustar-\de$, then $0\le \mathrm{d}_\epsilon([\tilde{\mu}_{j,m_j}]_0^1, \mustar) \le d_1:= \mathrm{d}_\epsilon(0, \mustar)$. Hence, we have
\begin{align}
\mathbb{E}&\left[
\onex{\tilde{\mu}_{j,m}< \mustar-\de}
n_m\e^{n_m \dep([\tilde{\mu}_{j,m}]_0^1, \mustar-\de)}
\right]
\nn
&\le
\mathbb{E}\left[
\onex{\tilde{\mu}_{j,m}< \mustar-\de}
n_m\e^{n_m (\dep([\tilde{\mu}_{j,m}]_0^1, \mustar)-\de')}
\right]
\nn
&=\int_0^{d_1} n_m \e^{n_m (x-\delta')} \mathrm{~d}(-P( x))\nn
&=[n_m \e^{n_m (x-\delta')} (-P(x))]_{x=0}^{d_1}+\int_0^{d_1} n_m^2 \e^{n_m (x-\delta')}P(x) \mathrm{~d}x
\nn
&\le n_m \e^{-n_m \delta'}+\int_0^{d_1} n_m^2 \e^{n_m (x-\delta')}P(x) \mathrm{~d}x
\n~.
\end{align}

Let $c_x \in [0,\mustar]$ be such that $\mathrm{d}_\epsilon(c_x, \mustar)=x$.
Then
\begin{align}
\left\{\mathrm{d}_\epsilon([\tilde{\mu}_{j,m_j}]_0^1, \mustar) \ge x,\tmu_{j,m}<\mustar-\delta\right\}
&\Leftrightarrow
\left\{\tilde{\mu}_{j,m_j}<c_x,\tmu_{j,m}<\mustar-\delta\right\}~.
\n
\end{align}
Therefore,
\begin{align}
P(x)
=
\Pr[\tilde{\mu}_{j,m_j}<c_x,\tmu_{j,m}<\mustar-\delta]
\le \Aa\e^{n_m a}\e^{-n_{m}\mathrm{d}_\epsilon(c_x, \mustar)}
=\Aa\e^{n_m a}\e^{-n_m x},
\end{align}
for any $a>0$ by Corollary~\ref{corol}.
Thus, we have
\begin{align}
\mathbb{E}&\left[
\onex{\tilde{\mu}_{j,m}< \mustar-\de}
n_m\e^{n_m \dep([\tilde{\mu}_{j,m}]_0^1, \mustar-\de)}
\right]
\nn
&\le n_m \e^{-n_m \delta'}+
\int_0^{d_1} n_m^2 \e^{n_m (x-\delta')}\Aa\e^{n_m a}\e^{-n_m x} \mathrm{~d}x
\nn
&=
n_m \e^{-n_m \delta'}+
d_1 n_m^2 \Aa\e^{-n_m (\de'-a)}~.
\label{imed_wait}
\end{align}
By letting $a<\delta'$ and combining \eqref{imed_term1} with \eqref{imed_wait},
we obtain
\begin{align}
\mathbb{E}[(\mathrm{A})]
&\le
\base^2
\sum_{m=0}^{\infty}
\left(n_m \e^{-n_m \delta'}+
d_1 n_m^2 \Aa\e^{-n_m (\de'-a)}
\right)
+\base n_0+1
\nn
&\le
\base^2
\sum_{n=0}^{\infty}
\left(n \e^{-n \delta'}+
d_1 n^2 \Aa\e^{-n (\de'-a)}
\right)
+\base n_0+1
\nn
&=
\base^2
\left(
\frac{\e^{-(\de'-a)}}{(1-\e^{-(\de'-a)})^2}
+
d_1 \Aa
\frac{\e^{-(\de'-a)}(\e^{-(\de'-a)}+1)}{(1-\e^{-(\de'-a)})^3}
\right)
+\base n_0+1
\nn
&=
\base^2
\left(
\frac{\e^{-(\de'-a)}}{(1-\e^{-(\de'-a)})^2}
+
d_1 \Aa
\frac{\e^{-(\de'-a)}(\e^{-(\de'-a)}+1)}{(1-\e^{-(\de'-a)})^3}
\right)
+\base n_0+1
\nn
&=O(1)~.
\label{bound_pre}
\end{align}

\paragraph{Post-convergence Term}
Next we consider (B).
Since \(\mathrm{d}_\epsilon(\mu,\mu)=0\) for any $\mu\in[0,1]$, we have
\begin{align}
I^*(t) \leq \max _{i': \tilde{\mu}_{i'}(t)=\tilde{\mu}^*(t)} I_{i'}(t)=\max _{i': \tilde{\mu}_{i'}(t)=\tilde{\mu}^*(t)} \log N_{i'}(T) \leq \log T~.
\n
\end{align}
On the other hand,
$i(t)=i,\,N_i(t-1)=n_m,\, t\in\calT,\, \tmu_j(t)\ge\mustar-\de$ implies that
\begin{align*}
I^*(t)
=I_i(t)
\geq n_m {\mathrm{d}}_\epsilon\left([\tilde{\mu}_{i}(t)]_0^1, [\tilde{\mu}^*(t)]_0^1\right)
=n_m {\mathrm{d}}_\epsilon\left([\tilde{\mu}_{i}(t)]_0^1, \mustar-\de\right),
\end{align*}
from which we have
\begin{align}
\left\{i(t)=i,\,N_i(t-1)=n_m,\, t\in\calT,\, \tmu_j(t)\ge\mustar-\de\right\}
\subset
\left\{
n_m {\mathrm{d}}_\epsilon\left([\tilde{\mu}_{i}(t)]_0^1, \mustar-\de\right)\le\log T
\right\}~.
\n
\end{align}
So, we have
\begin{align}
(\mathrm{B})
&=
\sum_{t=1}^T \sum_{m=0}^{\infty}B_{m+1}\onex{i(t)=i,\,N_i(t-1)=n_m,\, t\in\calT,\, \tmu_j(t)\ge\mustar-\de}
\nn
&\le
\sum_{t=1}^T \sum_{m=0}^{\infty}B_{m+1}\onex{i(t)=i,\,N_i(t-1)=n_m,\,
n_m {\mathrm{d}}_\epsilon\left([\tilde{\mu}_{i}(t)]_0^1, \mustar-\de\right)\le\log T}
\nn
&=
\sum_{m=0}^{\infty}B_{m+1}
\onex{n_m {\mathrm{d}}_\epsilon\left([\tilde{\mu}_{i,n_m}]_0^1, \mustar-\de\right)\le\log T}
\sum_{t=1}^T
\onex{i(t)=i,\,N_i(t-1)=n_m}
\nn
&\le
\sum_{m=0}^{\infty}B_{m+1}
\onex{n_m {\mathrm{d}}_\epsilon\left([\tilde{\mu}_{i,n_m}]_0^1, \mustar-\de\right)\le\log T}
\label{post_intermediate}\\
&\le
\sum_{m=0}^{\infty}B_{m+1}
\onex{n_m ({\mathrm{d}}_\epsilon([\tilde{\mu}_{i,n_m}]_0^1, \mustar)-\delta')\le\log T},
\n
\end{align}
where recall that $\delta'=\delta/(1-\mustar)$ and the last inequality follows from Lemma~\ref{lem_deriv_second}.

Let
\begin{align}
H=\frac{\log T}{\mathrm{d}_\epsilon(\mu_i, \mustar)-2\de'}
\label{def_H}
\end{align}
and
\begin{align}
m^*=
\inf\{m \in \mathbb{N}:n_m \ge H\}~.\n
\end{align}
Then, by $n_{m^*-1}<H$ and \eqref{nm}
we have
\begin{align}
H>n_{m^*-1}=
\left\lceil n_0 \frac{\base^{m^*}-1}{\base-1}\right\rceil
\ge
n_0 \frac{\base^{m^*}-1}{\base-1},\n
\end{align}
which implies
\begin{align}
m^*\le
\log_{\base}\left(\frac{(\base-1)H}{n_0}+1\right)~.\label{bound_mstar}
\end{align}

Now, the post-convergence term can be bounded as follows:
\begin{align}
\mathbb{E}[(\mathrm{B})]
&\le
\sum_{m=0}^{\infty}B_{m+1}
\Pr\left[n_m ({\mathrm{d}}_\epsilon([\tilde{\mu}_{i,n_m}]_0^1, \mustar)-\delta')\le\log T\right]
\nn
&\le
\sum_{m=0}^{m^*-1}
B_{m+1}
+
\sum_{m=m^*}^{\infty}B_{m+1}
\Pr\left[n_m ({\mathrm{d}}_\epsilon([\tilde{\mu}_{i,n_m}]_0^1, \mustar)-\delta')\le\log T\right]
\nn
&\le
n_{m^*}-n_0
+
\sum_{m=m^*}^{\infty}B_{m+1}
\Pr\left[H \left(\mathrm{d}_\epsilon([\tilde{\mu}_{i, m}]_0^1, \mustar)-\de'\right)\le \log T\right]
\nn
&<
n_0 \frac{\base^{m^*+1}-1}{\base-1}
+1-n_0
+\sum_{m=m^*}^{\infty} B_{m+1}\Pr\left[H \left(\mathrm{d}_\epsilon([\tilde{\mu}_{i, m}]_0^1, \mustar)-\de'\right)\le \log T\right]\nn
&\le
n_0 \frac{\base \left(\frac{(\base-1)H}{n_0}+1\right)-1}{\base-1}
+1-n_0
+\sum_{m=m^*}^{\infty} B_{m+1} \Pr\left[\mathrm{d}_\epsilon([\tilde{\mu}_{i, m}]_0^1, \mustar)\le \mathrm{d}_\epsilon(\mu_{i}, \mustar) -\de'\right]
\nn
&\phantom{wwwwwwwwwwwwwwwwwwwwwwwwwwwwwwwwwwwww}\since{by \eqref{def_H} and \eqref{bound_mstar}}
\nn
&\le \base H+1-\frac{n_0 \base}{\base-1}
+\sum_{m=m^*}^{\infty} B_{m+1} \Pr\left[\tilde{\mu}_{i, m} \ge \mu_i+\de'/\epsilon\right]
\phantom{wwwwi}\since{by Lemma~\ref{lem_deriv_first}}\nn
& \le \base H+\sum_{m=m^*}^{\infty} B_{m+1} \Aap\e^{a'n_m}\e^{-n_m(\mathrm{d}_\epsilon(\mu_{i}+\de'/\eps, \mustar) )}
\phantom{wwwwwwwwa}\since{by Corollary~\ref{corol}}
\nn
& \le
\base H+
\sum_{m=m^*}^{\infty} 2n_0\base^{m+1} \Aap\e^{-n_0\frac{\base^{m+1}-1}{\base-1}(\mathrm{d}_\epsilon(\mu_{i}+\de'/\eps, \mustar)-a')}
\phantom{wwwwi}\since{by \eqref{bound_batch}}
\label{ref_Lambda1}
\\
&=
\base H+
\Aap\e^{\Lambda}
\sum_{m=m^*}^{\infty} 2n_0\base^{m+1}\e^{-\base ^{m+1}\Lambda}
\label{ref_Lambda2}
\\
&\le
\base H+
2n_0\Aap\e^{\Lambda}
\int_{m^*}^{\infty}\base^{x+1}\e^{-\base ^{x}\Lambda}\mathrm{~d} x
\nn
&=\base H+
\frac{2\base n_0 \Aap\e^{\Lambda}}{\mathrm{ln}(\base) \Lambda}
e^{-\base^{m^*}\Lambda}
\nn
&=
\frac{\base \log T}{\mathrm{d}_\epsilon(\mu_i, \mustar)-2\de'}+o(1)~.
\label{bound_post}
\end{align}
Here, in \eqref{ref_Lambda1} we took $a'<\mathrm{d}_\epsilon(\mu_{i}+\de'/\eps, \mustar)$
and
in \eqref{ref_Lambda2} we defined
\begin{align}
\Lambda=
\frac{n_0(\mathrm{d}_\epsilon(\mu_{i}+\de'/\eps, \mustar)-a')}{\base-1}~.
\n
\end{align}
We complete the proof
by combining \eqref{eq:decomp}, \eqref{bound_pre}, and \eqref{bound_post}, and letting $\delta'=\frac{\delta}{1-\mustar} \downarrow 0$.
\end{proof}

\begin{theorem}[Regret upper bound of \dpklucb{}]
\label{thm:regretUpper_klucb}
Assume $\mustar<1$.
Under the batch sizes given in \eqref{def_batch} with $\base>1$,
the regret bound of \dpklucb{} for a Bernoulli bandit $\nu$ is 
\[
\reg_T(\dpklucb{}, \nu)\le \sum_{i \neq i^*} \frac{\base \Delta_i\log T}{\mathrm{d}_\epsilon(\mu_i, \mustar)} +o(\log T)~.
\]
\end{theorem}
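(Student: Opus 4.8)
The plan is to run the proof of Theorem~\ref{thm:regretUpper} (the \dpimed{} bound) almost verbatim, using the classical reduction from KL-UCB-type indices to IMED-type indices due to~\cite{honda2019note}, adapted to our private batched setting.

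\textbf{Step 1 (monotonicity of $\dep$ and the reduction inequality).} First I would record that, directly from the variational form \eqref{eq:d_eps} (or the explicit solutions \eqref{eq:explicitSolu}--\eqref{eq:expSolu2}), the map $y\mapsto \dep(x,y)$ is continuous, vanishes at $y=x$, and is non-decreasing on $[x,1]$. Hence $\bar\mu_i(t)$ in \eqref{ucb_score} is well defined, with $\bar\mu_i(t)\ge[\tmu_{i,m_i}]_0^1$ and $\dep([\tmu_{i,m_i}]_0^1,\bar\mu_i(t))\le \log t/n_{m_i}$. When \dpklucb{} pulls arm $i$ at a selection round $t$, the rule $i(t)\in\argmax_{i'}\bar\mu_{i'}(t)$ gives $\bar\mu_i(t)\ge\bar\mu_j(t)$ for every arm $j$; in particular, for a fixed optimal arm $j$, whenever $[\tmu_{i,m_i}]_0^1\le\bar\mu_j(t)$ monotonicity yields
\begin{align}
n_{m_i}\,\dep\!\left([\tmu_{i,m_i}]_0^1,\ \bar\mu_j(t)\right)
\ \le\ n_{m_i}\,\dep\!\left([\tmu_{i,m_i}]_0^1,\ \bar\mu_i(t)\right)
\ \le\ \log t .
\label{eq:klucb_reduction}
\end{align}
This is the exact analogue of the IMED inequality $I_i(t)=I^*(t)\le I_j(t)$ used in Theorem~\ref{thm:regretUpper}, with $\bar\mu_j(t)$ replacing $[\tmu^*(t)]_0^1$; the absent additive $\log n_{m_i}$ term only strengthens the bound, and when it is genuinely needed I would reinstate it at the cost of a lower-order term, using that $\bar\mu_j(t)\le1$ so $\dep([\tmu_{i,m_i}]_0^1,\bar\mu_j(t))$ stays bounded.

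\textbf{Step 2 (decomposition and the post-convergence term).} I would then use the batched regret decomposition \eqref{eq:decomp} verbatim (with its leading $n_0\sum_{i\ne i^*}(\mustar-\mu_i)$ term for the initial uniform exploration), splitting the batched pulls of each suboptimal arm $i$ according to whether, at the selection round, $\tmu_j(t)<\mustar-\de$ (pre-convergence, $(\mathrm{A})$) or $\tmu_j(t)\ge\mustar-\de$ ($(\mathrm{B})$), for a fixed optimal arm $j$ and small $\de>0$. On $(\mathrm{B})$, since $\mustar<1$ we get $[\tmu_j(t)]_0^1\ge\mustar-\de$, hence $\bar\mu_j(t)\ge\mustar-\de$, hence $\bar\mu_i(t)\ge\mustar-\de$; when moreover $[\tmu_{i,m_i}]_0^1\le\mustar-\de$, \eqref{eq:klucb_reduction} and monotonicity give $n_{m_i}\dep([\tmu_{i,m_i}]_0^1,\mustar-\de)\le\log T$, which is precisely the event inclusion \eqref{post_intermediate} of the \dpimed{} proof (the sub-event $[\tmu_{i,m_i}]_0^1>\mustar-\de$ being absorbed by Corollary~\ref{corol}). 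Thus the estimate \eqref{bound_post} applies unchanged, giving $\mathbb{E}[(\mathrm{B})]\le \frac{\base\log T}{\dep(\mu_i,\mustar)-2\de'}+o(\log T)$ with $\de'=\de/(1-\mustar)$, where Lemma~\ref{lem_deriv_second} transfers the slack at $\mustar-\de$ to $\mustar$ and Lemma~\ref{lem_deriv_first} with Corollary~\ref{corol} controls the tail over epochs $m\ge m^*$.

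\textbf{Step 3 (pre-convergence term).} On $(\mathrm{A})$ the optimal arm's private mean is under-estimated. I would mirror \eqref{imed_term1}--\eqref{bound_pre}: for each epoch $m$ of arm $i$ the event $\{i(t)=i,\,N_i(t-1)=n_m\}$ occurs at most once, \eqref{eq:klucb_reduction} evaluated at $\bar\mu_j(t)$ (with $\bar\mu_j(t)$ itself constrained by $\dep([\tmu_j(t)]_0^1,\bar\mu_j(t))\le\log t/n_{m_j}$, i.e.\ by the optimal arm's under-estimation) bounds $n_m$ in terms of the optimal arm's private-mean history, and summing the batch sizes $B_{m+1}\le 2n_0\base^{m+1}$ and taking expectations with Corollary~\ref{corol} (exponent $a<\de'$) makes the double sum over $m$ and the optimal arm's epochs geometrically convergent, so $\mathbb{E}[(\mathrm{A})]=O(1)$ as in \eqref{bound_pre}. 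Combining the three steps and letting $\de'\downarrow0$ gives $\reg_T(\dpklucb{},\nu)\le\sum_{i\ne i^*}\frac{\base\Delta_i\log T}{\dep(\mu_i,\mustar)}+o(\log T)$.

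The main obstacle is the pre-convergence term under batching: since \dpklucb{} commits to a chosen arm for $B_{m_i}\approx n_0\base^{m_i}$ rounds before re-selecting, one under-estimation of the optimal arm can cause $\Theta(n_0\base^{m_i})$ spurious pulls of a suboptimal arm, so the naïve ``count the bad rounds'' bound fails; one must carry the $n_m\e^{n_m\dep(\cdot)}$ factor through the batched sum and rely on the exponential private-mean tail of Corollary~\ref{corol} to dominate the $\base^m$ growth of the batch sizes. A secondary subtlety is that the \dpklucb{} index lacks the $\log n_{m_i}$ term of the IMED index, which must be compensated by working directly with $\bar\mu_j(t)$ and the boundedness of $\dep$ on $[0,1]^2$.
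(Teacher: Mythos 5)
Your overall strategy (reduce \dpklucb{} to the \dpimed{} analysis via the reduction of \citet{honda2019note}, reuse the batched decomposition and Corollary~\ref{corol}) is indeed the paper's strategy, and your post-convergence step is essentially the paper's: from $i(t)=i$ and $\bar\mu_i(t)\ge\mustar-\de$ you land exactly on the event of \eqref{post_intermediate}, after which \eqref{bound_post} gives the $\base H$ main term. The genuine gap is in the pre-convergence term. You keep the IMED split of \eqref{eq:decomp}, defining $(\mathrm{A})$ by $\tmu_j(t)<\mustar-\de$ (the optimal arm's \emph{private empirical mean}), and then claim that your Step~1 inequality together with $\dep([\tmu_j(t)]_0^1,\bar\mu_j(t))\le\log t/n_{m_j}$ ``bounds $n_m$ in terms of the optimal arm's private-mean history''. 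It does not: under $\tmu_j(t)<\mustar-\de$ the index $\bar\mu_j(t)$ is typically still above $\mustar-\de$ (the exploration bonus compensates a small $n_{m_j}$), so neither $t$ nor $n_m=N_i(t-1)$ is capped by arm $j$'s history; the most your Step~1 inequality yields is $n_m\le\log t/\dep([\tmu_{i,m_i}]_0^1,\bar\mu_j(t))$, which is of the same order as $H$ in \eqref{def_H} (or larger when $\bar\mu_j(t)<\mustar-\de$), not a quantity with $O(1)$ expectation. In the IMED proof this step works only because of the $\log n_{m_i}$ term in the index ($I_i(t)\ge\log n_m$ combined with $I_i(t)\le I_j(t)\le\bar I_j$ gives $n_m\le \e^{\bar I_j}$); \dpklucb{} has no such term, so the IMED argument cannot be ``mirrored'' under your split --- as written your $(\mathrm{A})$ is either uncontrolled or, once you intersect with $\bar\mu_j(t)\ge\mustar-\de$, it merges with $(\mathrm{B})$ and would double the leading constant.

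The fix is what the paper does: decompose on the \emph{index}, $(\mathrm{A})=\{\bar{\mu}^{\star}(t)<\mustar-\de\}$ as in \eqref{eq:decomp2}, so that $\bar\mu_j(t)\le\bar\mu^{\star}(t)<\mustar-\de$, and then use the implication $\{\bar\mu_j(t)<\mustar-\de\}\Rightarrow\{t<\e^{N_j(t-1)\dep([\tmu_j(t)]_0^1,\mustar-\de)},\ \tmu_j(t)<\mustar-\de\}$; combined with $n_m=N_i(t-1)\le t-1$ this caps $n_m$ by $\e^{\bar I_j'}-1$, a quantity depending only on arm $j$'s private-mean history, after which the expectation is $O(1)$ exactly as in \eqref{imed_term1}--\eqref{bound_pre}. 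Correspondingly, $(\mathrm{B})=\{\bar\mu^{\star}(t)\ge\mustar-\de\}$ together with $i(t)=i$ gives $\bar\mu_i(t)=\bar\mu^{\star}(t)\ge\mustar-\de$ directly, and the post-convergence analysis you sketched then applies verbatim. With this change of decomposition (or, equivalently, a further split of your $(\mathrm{A})$ on $\bar\mu_j(t)$ versus $\mustar-\de$, routing the second case into $(\mathrm{B})$), your argument coincides with the paper's proof.
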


\begin{proof}[Proof of Theorem \ref{thm:regretUpper_klucb}]
By the same argument as the analysis for DP-IMED
we have
\begin{align}
\operatorname{Regret}(T)
&
\le
n_0\sum_{i\neq i^*}\left(\mustar-\mu_i\right)
\nn
&\quad+
\sum_{i\neq i^*}\left(\mustar-\mu_i\right)
\underbrace{\sum_{t=1}^T \sum_{m=0}^{\infty}B_{m+1}\onex{i(t)=i,\,N_i(t-1)=n_m,\, t\in\calT,\, \bar{\mu}^{\star}(t)<\mustar-\de}}_{(\mathrm{A})}
\nn
&\quad+\sum_{i\neq i^*}\left(\mustar-\mu_i\right)
\underbrace{\sum_{t=1}^T \sum_{m=0}^{\infty}B_{m+1}\onex{i(t)=i,\,N_i(t-1)=n_m,\, t\in\calT,\, \bar{\mu}^{\star}(t)\ge\mustar-\de}}_{(\mathrm{B})},
\label{eq:decomp2}
\end{align}
where $\bar{\mu}^{\star}(t)=\max_i \bar{\mu}_i(t)$.

We use a transformation of these terms that is similar to \citet{honda2019note} but more suitable for the batched algorithm.
First, we have
\begin{align}
(\mathrm{A})
&=
\sum_{t=1}^T \sum_{m=0}^{\infty}B_{m+1}\onex{i(t)=i,\,N_i(t-1)=n_m,\, t\in\calT,\, \bar{\mu}^{\star}(t)<\mustar-\de}
\nn
&\le
\sum_{t=1}^T \sum_{m=0}^{\infty}B_{m+1}\onex{i(t)=i,\,N_i(t-1)=n_m,\, t\in\calT,\, \bar{\mu}_j(t)<\mustar-\de}.
\n
\end{align}
Let
\begin{align}
\bar{I}_j'=
\max_{m: \tilde{\mu}_{j,m}< \mustar-\de} \left\{n_m \dep([\tilde{\mu}_{j,m}]_0^1, \mustar-\de)
\right\}.\n
\end{align}
Since
\begin{align}
\{\bar{\mu}_j(t)< \mustar-\de\}
&\Leftrightarrow
\left\{
\sup\left\{
\mu: \dep([\tmu_{j}(t)]_0^1, \mu)\le \frac{\log t}{N_j(t-1)}
\right\}
<\mustar-\de
\right\}
\nn
&\Rightarrow
\left\{
\dep([\tmu_{j}(t)]_0^1, \mustar-\de)> \frac{\log t}{N_j(t-1)}
,\,
\tmu_{j}(t)<\mustar-\de
\right\}
\nn
&\Leftrightarrow
\left\{
t< \e^{N_{j}(t-1)\dep([\tmu_{j}(t)]_0^1, \mustar-\de)},\,
\tmu_{j}(t)<\mustar-\de
\right\},
\n
\end{align}
we see that
\begin{align}
(\mathrm{A})
&\le
\sum_{t=1}^T \sum_{m=0}^{\infty}B_{m+1}\onex{i(t)=i,\,N_i(t-1)=n_m,\, t\in\calT,\,
t< \e^{N_j(t-1)\dep([\tmu_{j,m}]_0^1, \mustar-\de)},\,
\tmu_{j,m}<\mustar-\de
}
\nn
&\le
\sum_{t=1}^T \sum_{m=0}^{\infty}B_{m+1}\onex{i(t)=i,\,N_i(t-1)=n_m,\,
t< \e^{\bar{I}_j'}}
\nn
&\le
\sum_{t=1}^T \sum_{m=0}^{\infty}B_{m+1}\onex{i(t)=i,\,N_i(t-1)=n_m,\,
n_m< \e^{\bar{I}_j'}-1}
\since{by $N_i(t-1)\le t-1$}
\nn
&=
\sum_{m=0}^{\infty}B_{m+1}
\onex{n_m< \e^{\bar{I}_j'}-1}
\sum_{t=1}^T
\onex{i(t)=i,\,N_i(t-1)=n_m}
\nn
&\le
\sum_{m=0}^{\infty}B_{m+1}
\onex{n_m< \e^{\bar{I}_j'}-1}
\nn
&\le
(\base+1) \e^{\bar{I}_j'}
\phantom{www}\since{by $n_m=\sum_{i=0}^m B_{m}$ and $B_{m+1}\le \base n_m$}
\nn
&\le
(\base+1) \e^{\bar{I}_j},
\n
\end{align}
where $\bar{I}_j$ is defined in \eqref{def_barI}.
The evaluation of this expectation is the one same as \eqref{imed_term1}, which results in $\E[(\mathrm{A})]=O(1)$.

Now, we consider the second term.
Note that $i(t)=i$ implies $\bar{\mu}^{\star}(t)=\bar{\mu}_i(t)$
and
we also have
\begin{align}
\{\bar{\mu}_i(t)\ge \mustar-\de\}
&\Leftrightarrow
\left\{
\sup\left\{
\mu: \dep([\tmu_{j}(t)]_0^1, \mu)\le \frac{\log t}{N_{j}(t)}
\right\}
\ge \mustar-\de
\right\}
\nn
&\Rightarrow
\left\{
\dep([\tmu_{i}(t)]_0^1, \mustar-\de)\le \frac{\log t}{N_{i}(t)}
\right\}.
\n
\end{align}
Then, we have
\begin{align}
(\mathrm{B})
&=
\sum_{t=1}^T \sum_{m=0}^{\infty}B_{m+1}\onex{i(t)=i,\,N_i(t-1)=n_m,\, t\in\calT,\, \bar{\mu}^{\star}(t)\ge\mustar-\de}
\nn
&=
\sum_{t=1}^T \sum_{m=0}^{\infty}B_{m+1}\onex{i(t)=i,\,N_i(t-1)=n_m,\, t\in\calT,\, \bar{\mu}_i(t)\ge\mustar-\de}
\nn
&\le
\sum_{t=1}^T \sum_{m=0}^{\infty}B_{m+1}\onex{i(t)=i,\,N_i(t-1)=n_m,\, t\in\calT,\,
\dep([\tmu_{i,n_m}]_0^1, \mustar-\de)\le \frac{\log t}{n_m}
}
\nn
&\le
\sum_{m=0}^{\infty}B_{m+1}
\onex{
\dep([\tmu_{i,n_m}]_0^1, \mustar-\de)\le \frac{\log t}{n_m}
}
\sum_{t=1}^T
\onex{i(t)=i,\,N_i(t-1)=n_m}
\nn
&\le
\sum_{m=0}^{\infty}B_{m+1}
\onex{
\dep([\tmu_{i,n_m}]_0^1, \mustar-\de)\le \frac{\log t}{n_m}
},\n
\end{align}
whose expectation is analysed in \eqref{post_intermediate}.
\end{proof}

\paragraph{Comparison to the regret bound of AdaP-KLUCB in~\cite{azize2022privacy}}
Theorem 8 in~\cite{azize2022privacy} shows that for $\tau>3$,  AdaP-KLUCB yields a regret
\begin{equation}\label{eq:adap_klucb}
   \reg_{\horizon}(\adapklucb{}, \nu) \leq \sum_{a: \Delta_a > 0}\left ( \frac{C_1(\tau) \Delta_a }{\min\{ \mathrm{kl}(\mu_a, \mu^*) , C_2 \epsilon \Delta_a\}} \log(\horizon) + \frac{3 \tau}{\tau - 3} \right )\,, 
\end{equation}
where $C_1(\tau)$ and $C_2>0$ are defined as
$$\inf_{\beta \in \mathbf{B} } \max \left \{ \frac{(1+\beta)\alpha }{\mathrm{kl}(\mu_a, \mu^*)} , \frac{(1 + \tau) }{(c(\beta) - \gamma_{\episode, \horizon} ) \epsilon \Delta_a} \right \} \log(\horizon) \defn \frac{\frac{1}{4} C_1(\tau)}{\min \{\mathrm{kl}(\mu_a, \mu^*), C_2 \epsilon \Delta_a \}} \log(\horizon),$$ such that $\tau$ is a constant that controls the optimism in AdaP-KLUCB, $\mathbf{B} \defn \{\beta>0: c(\beta) > \gamma_{\episode, \horizon} \}$, for $\beta > 0$,  $c(\beta) \in [0,1]$ is defined such that:
$\mathrm{kl}(\mu_a + c(\beta) \Delta_a, \mu^* ) = \frac{d(\mu_a, \mu^*)}{1+\beta}$, and $\gamma_{\episode, \horizon}$ such that $\mathrm{kl}(\mu_a + \gamma_{\episode, \horizon} \Delta_a, \mu_a ) = \frac{ \log(\horizon)}{2^\episode}$ for $T$ the horizon and $\ell$ the phase. 

In general, $C_1$ and $C_2$ may depend on $\mu_a$ and $\mu^\star$, and thus are not ``constants". 
In contrast, our bound in Theorem~\ref{thm:upp_bound}
matches the asymptotic lower bound of Theorem~\ref{thm:low_bound} up to the exact constant $\alpha >1$ that controls the geometrically increasing batches and which can be chosen arbitrarily close to 1. In addition, our analysis only requires that the number of batches is sublinear in $T$ as can be seen from Proposition~\ref{prop:conc}. As a result, we can also use a polynomially increasing batch size instead of $B_m\approx \alpha^m$, which fully makes the regret \emph{asymptotically optimal}. We used a geometrically increasing batch size here just for simplicity.

\section{Extended Experiments}\label{app:extended_exps}
In this section, we present additional experiments comparing the algorithms in four bandit environments with Bernoulli distributions, as defined by~\cite{dpseOrSheffet}, namely
\begin{align*}
&\mu_1 = \{0.75,0.70,0.70,0.70,0.70  \},~~~\mu_2 = \{0.75,0.625,0.5,0.375,0.25  \},\\
&\mu_3 = \{0.75,0.53125,0.375,0.28125,0.25  \},~~~\mu_4 = \{0.75,0.71875,0.625,0.46875,0.25  \}.
\end{align*}
and four budgets $\epsilon \in \{0.01, 0.1, 0.5,1\}$. The results are presented in Figure~\ref{fig:exp1} for $\mu_1$, Figure~\ref{fig:exp2} for $\mu_2$, Figure~\ref{fig:exp3} for $\mu_3$ and Figure~\ref{fig:exp4} for $\mu_4$. 

For all the environments and privacy budgets tested, \dpimed{} and \dpklucb{} achieve the lowest regret.

We also plot the regret as a function of the privacy budget $\epsilon$ in Figure~\ref{fig:priv_reg}. The algorithm chosen is \dpimed{} with $\alpha = 1.1$, $T = 10^7$ and for bandit environment $\mu = [0.8, 0.1, 0.1, 0.1, 0.1]$. We discretise the $[0,1]$ interval into 100 values of $\epsilon$. For each $\epsilon$, we run the algorithm $20$ times and plot the mean and standard deviation of the regret in $[0,1]$. We also plot the asymptotic regret lower bound in Figure~\ref{fig:priv_reg} for $T= 10^7$ and $\mu$ as a function of $\epsilon$. 

The performance of our algorithm $\dpimed{}$ matches the regret lower bound. We also remark that the change between the high and the low privacy regimes happens smoothly.


\begin{figure}[thb]
    \centering
    \begin{minipage}{0.475\textwidth}
    \centering
        \includegraphics[width=\textwidth]{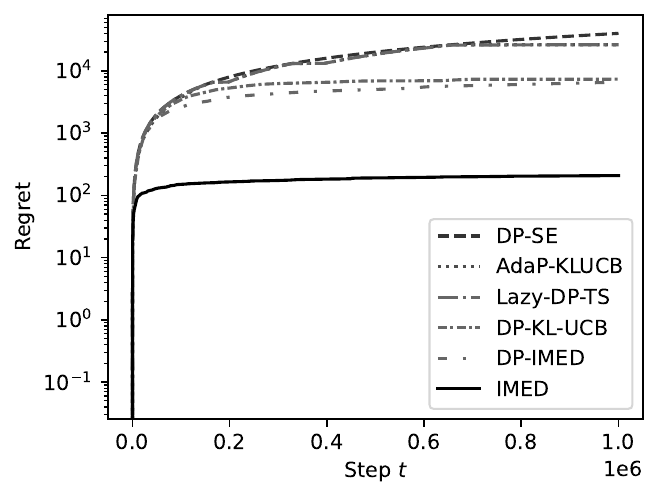}
        {(a) $\epsilon = 0.01$}    
    \end{minipage}
    \hfill
    \begin{minipage}{0.475\textwidth} 
    \centering
        \includegraphics[width=\textwidth]{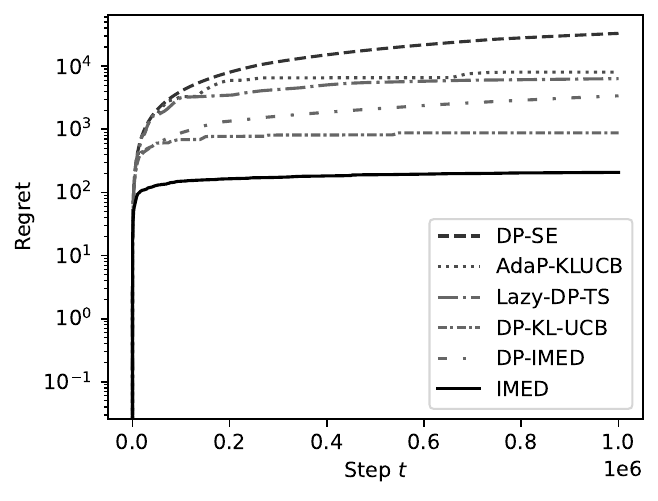}
        {(b) $\epsilon = 0.1$}   
    \end{minipage}
    \vskip\baselineskip
    \begin{minipage}{0.475\textwidth} 
    \centering
        \includegraphics[width=\textwidth]{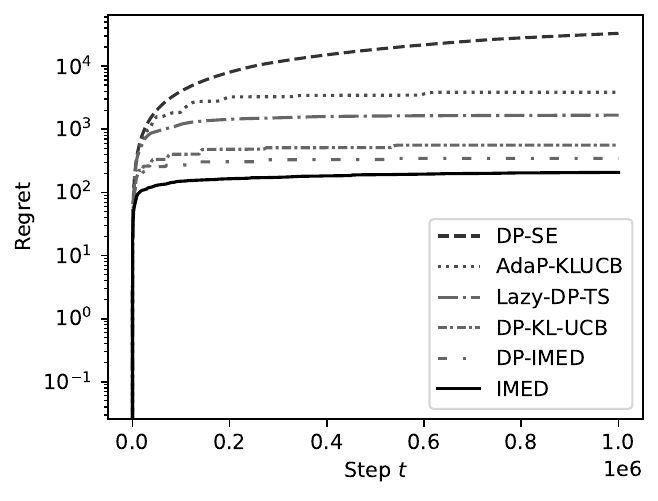}
        {(c) $\epsilon = 0.5$}    
    \end{minipage}
    \hfill
    \begin{minipage}{0.475\textwidth}  
    \centering
        \includegraphics[width=\textwidth]{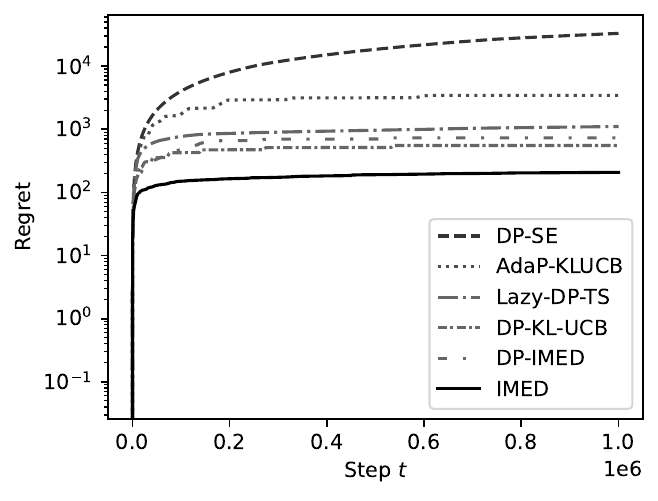}
        {(d) $\epsilon = 1$}   
        
    \end{minipage}
    \caption{Evolution of regret over time for $\mu_1$ for different budgets $\epsilon$.} 
    \label{fig:exp1}
\end{figure}

\begin{figure}[thb]
    \centering
    \begin{minipage}{0.475\textwidth}
    \centering
        \includegraphics[width=\textwidth]{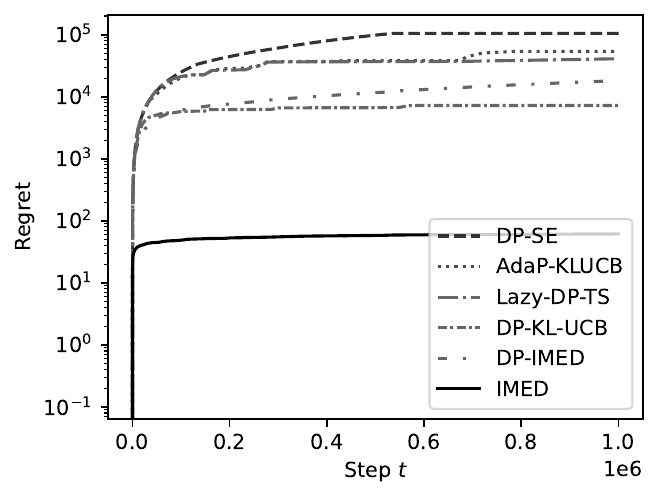}
        {(a) $\epsilon = 0.01$}    
    \end{minipage}
    \hfill
    \begin{minipage}{0.475\textwidth} 
    \centering
        \includegraphics[width=\textwidth]{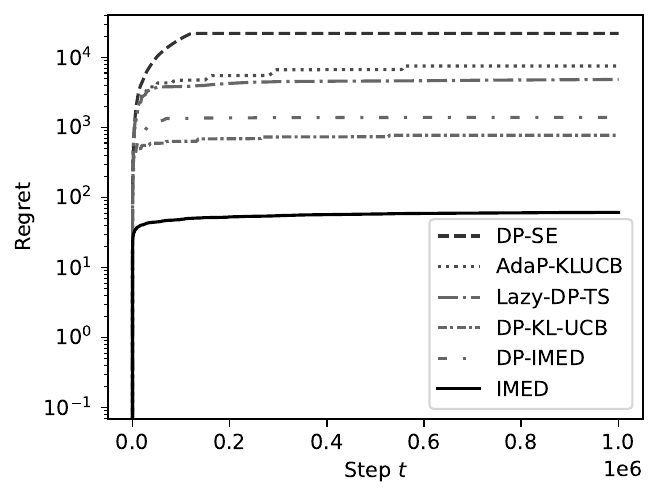}
        {(b) $\epsilon = 0.1$}   
    \end{minipage}
    \vskip\baselineskip
    \begin{minipage}{0.475\textwidth} 
    \centering
        \includegraphics[width=\textwidth]{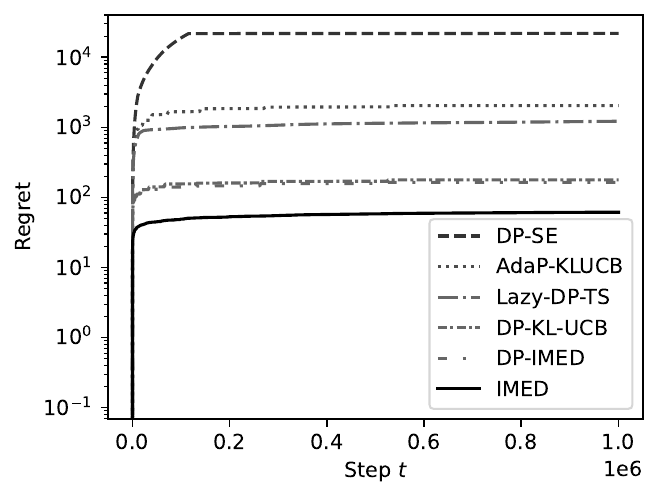}
        {(c) $\epsilon = 0.5$}    
    \end{minipage}
    \hfill
    \begin{minipage}{0.475\textwidth}  
    \centering
        \includegraphics[width=\textwidth]{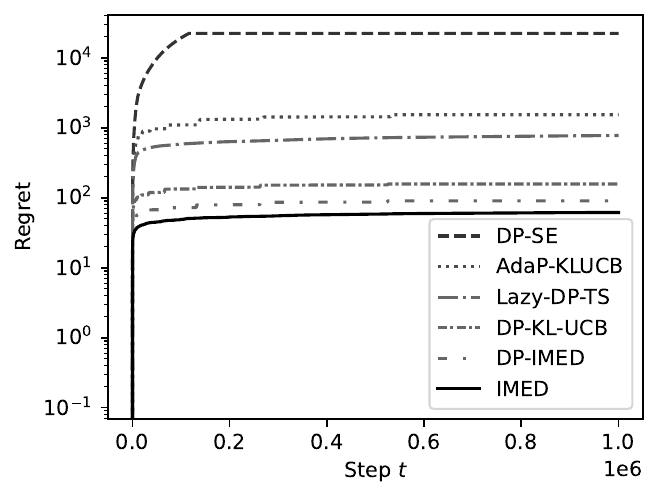}
        {(d) $\epsilon = 1$}   
        
    \end{minipage}
    \caption{Evolution of regret over time for $\mu_2$ for different budgets $\epsilon$.} 
    \label{fig:exp2}
\end{figure}

\begin{figure}[thb]
    \centering
    \begin{minipage}{0.475\textwidth}
    \centering
        \includegraphics[width=\textwidth]{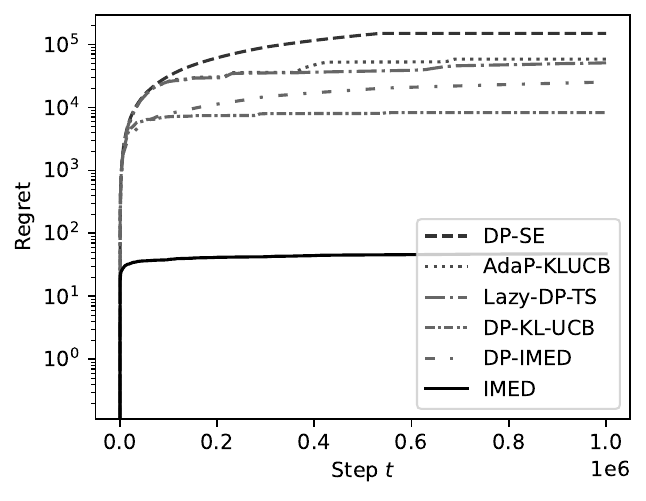}
        {(a) $\epsilon = 0.01$}    
    \end{minipage}
    \hfill
    \begin{minipage}{0.475\textwidth} 
    \centering
        \includegraphics[width=\textwidth]{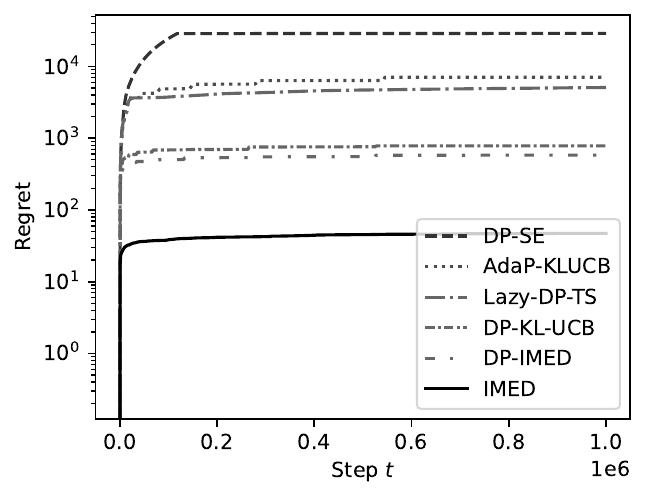}
        {(b) $\epsilon = 0.1$}   
    \end{minipage}
    \vskip\baselineskip
    \begin{minipage}{0.475\textwidth} 
    \centering
        \includegraphics[width=\textwidth]{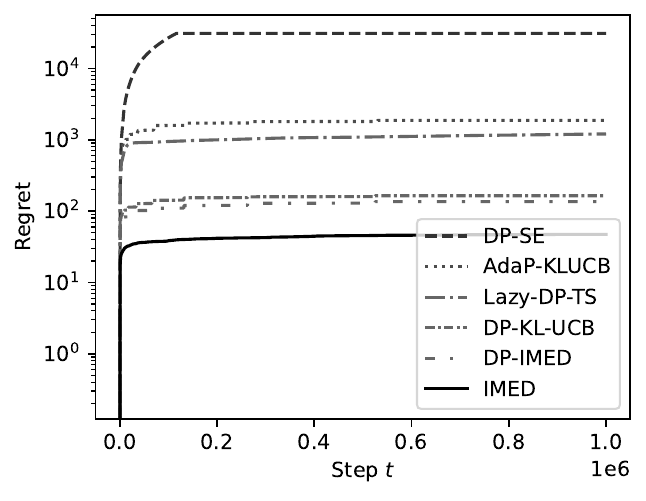}
        {(c) $\epsilon = 0.5$}    
    \end{minipage}
    \hfill
    \begin{minipage}{0.475\textwidth}  
    \centering
        \includegraphics[width=\textwidth]{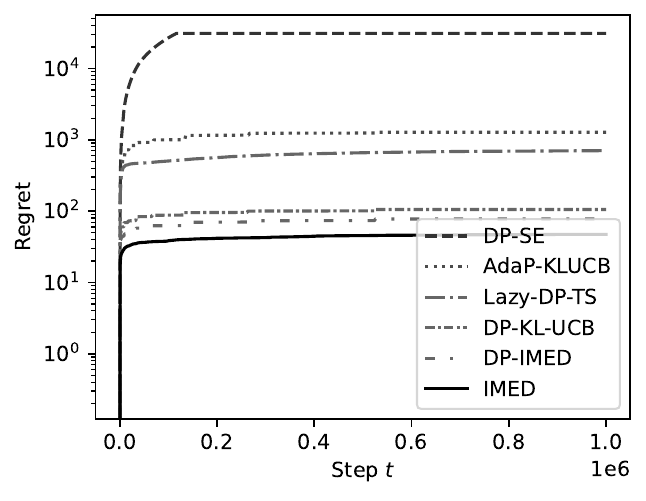}
        {(d) $\epsilon = 1$}   
    \end{minipage}
    \caption{Evolution of regret over time for $\mu_3$ for different budgets $\epsilon$.} 
    \label{fig:exp3}
\end{figure}

\begin{figure}[t!]
    \centering
    \begin{minipage}{0.475\textwidth}
    \centering
        \includegraphics[width=\textwidth]{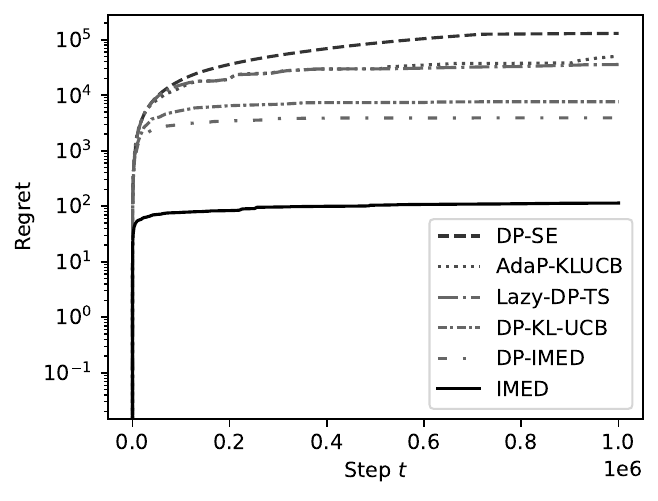}
        {(a) $\epsilon = 0.01$}    
    \end{minipage}
    \hfill
    \begin{minipage}{0.475\textwidth} 
    \centering
        \includegraphics[width=\textwidth]{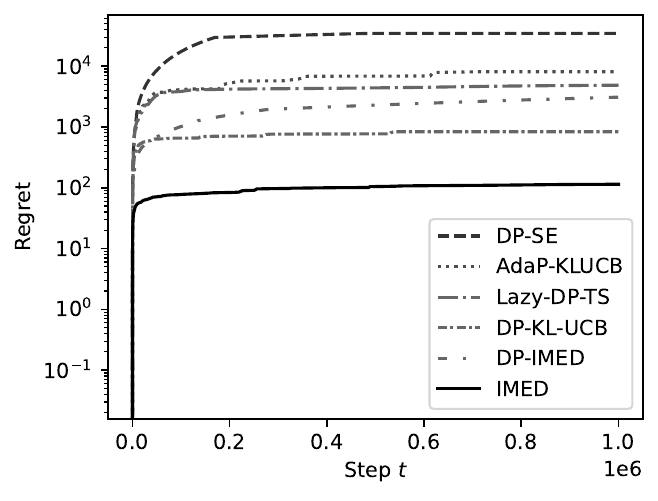}
        {(b) $\epsilon = 0.1$}   
    \end{minipage}
    \vskip\baselineskip
    \begin{minipage}{0.475\textwidth} 
    \centering
        \includegraphics[width=\textwidth]{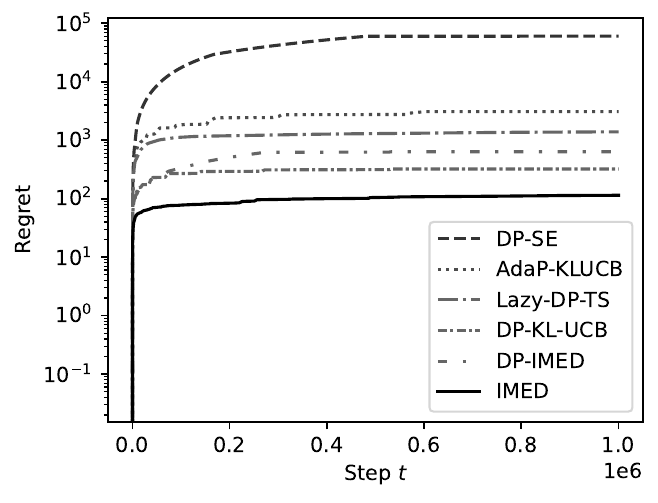}
        {(c) $\epsilon = 0.5$}    
    \end{minipage}
    \hfill
    \begin{minipage}{0.475\textwidth}  
    \centering
        \includegraphics[width=\textwidth]{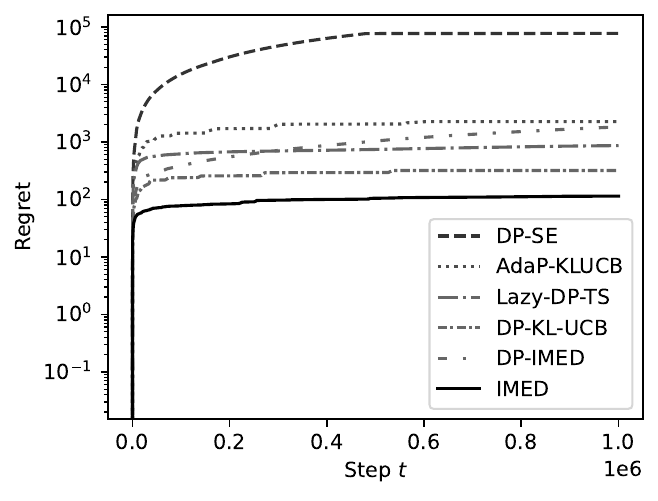}
        {(d) $\epsilon = 1$}   
    \end{minipage}
    \caption{Evolution of regret over time for $\mu_4$ for different budgets $\epsilon$.} 
    \label{fig:exp4}
\end{figure}

\begin{figure}
    \centering
    \includegraphics[width=0.8\linewidth]{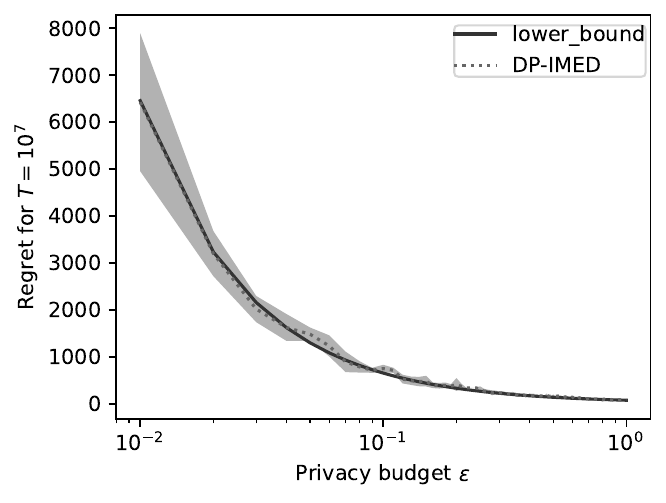}
    \caption{Evolution of the regret for $T = 10^7$ with respect to $\epsilon$ for \dpimed{} on $\mu \defn [0.8, 0.1, 0.1, 0.1, 0.1]$, compared to the asymptotic regret lower bound of Theorem~\ref{thm:low_bound}.}
    \label{fig:priv_reg}
\end{figure}

\section{Existing technical results and Definitions}\label{app:lemms}

\begin{proposition}[Post-processing~\citep{dwork2014algorithmic}]\label{prp:post_proc}
    Let $\mech$ be a mechanism and $f$ be an arbitrary randomised function defined on $\mech$'s output. If $\mech$ is $\epsilon$-DP, then $f\circ\mech$ is $\epsilon$-DP.
\end{proposition}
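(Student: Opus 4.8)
The plan is to reduce the randomised case to the deterministic one and dispatch the latter with a pre-image argument. First I would treat a \emph{deterministic} post-processing map $f : \mathcal{O} \to \mathcal{O}'$. Fix neighbouring datasets $D \sim D'$ and an arbitrary measurable event $E' \subseteq \mathcal{O}'$. Since $f$ is deterministic, $f \circ \mech$ outputs a point of $E'$ exactly when $\mech$ outputs a point of the pre-image $f^{-1}(E') \subseteq \mathcal{O}$, hence $(f\circ\mech)_D(E') = \mech_D(f^{-1}(E'))$. Measurability of $f$ guarantees that $f^{-1}(E')$ is itself an event in the output space of $\mech$, so applying Definition~\ref{Def_DP} to this event yields
\[
(f\circ\mech)_D(E') = \mech_D\bigl(f^{-1}(E')\bigr) \le e^\epsilon \, \mech_{D'}\bigl(f^{-1}(E')\bigr) = e^\epsilon \, (f\circ\mech)_{D'}(E')~.
\]
As $D\sim D'$ and $E'$ were arbitrary, $f\circ\mech$ is $\epsilon$-DP when $f$ is deterministic.

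Next I would extend to a \emph{randomised} $f$. The modelling point is that the internal randomness of $f$ is drawn independently of the dataset and of the coin flips of $\mech$: write $f(o) = g(o,\omega)$ with $\omega \sim \mathbb{P}_\omega$ a fixed auxiliary seed and $g(\cdot,\omega)$ deterministic for each $\omega$. Applying the deterministic bound above for each fixed $\omega$ and then integrating,
\[
(f\circ\mech)_D(E') = \int \bigl(g(\cdot,\omega)\circ\mech\bigr)_D(E') \, \dd\mathbb{P}_\omega(\omega) \le e^\epsilon \int \bigl(g(\cdot,\omega)\circ\mech\bigr)_{D'}(E') \, \dd\mathbb{P}_\omega(\omega) = e^\epsilon \, (f\circ\mech)_{D'}(E')~,
\]
where the inequality combines the pointwise deterministic bound with monotonicity and linearity of the integral. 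This proves the claim for any randomised $f$.

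I do not expect a genuine obstacle: the statement is essentially immediate from Definition~\ref{Def_DP}. The only step that needs care is the randomised case, where one must be explicit that the randomness of $f$ is independent of the data and of $\mech$ — this is precisely what licenses the seed decomposition $f = g(\cdot,\omega)$ and the interchange of integral and bound (Tonelli, applied to the nonnegative map $(o,\omega)\mapsto \ind{g(o,\omega)\in E'}$). An equivalent shortcut, if one prefers to avoid the seed decomposition, is to represent $f$ by its transition kernel $K(o,\cdot)$, note that $o\mapsto K(o,E')$ is a nonnegative bounded measurable function, and use the elementary fact that $\mech_D(\cdot)\le e^\epsilon\mech_{D'}(\cdot)$ holding on all events upgrades to $\int h\,\dd\mech_D \le e^\epsilon \int h\,\dd\mech_{D'}$ for every nonnegative measurable $h$ (by monotone approximation with simple functions), and apply it to $h = K(\cdot,E')$. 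Either route closes the proof in a few lines.
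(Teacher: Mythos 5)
Your proof is correct, and it is essentially the standard argument: the paper itself does not prove this proposition but imports it from \citet{dwork2014algorithmic}, whose proof is exactly your two-step route (pre-image argument for deterministic $f$, then writing a randomised $f$ as a mixture of deterministic maps with data-independent randomness and integrating the pointwise bound). The kernel/monotone-class shortcut you mention is an equally valid packaging of the same idea, so there is nothing to fix.
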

The post-processing property ensures that any quantity constructed only from a private output is still private, with the same privacy budget. This is a consequence of the data processing inequality.

\begin{proposition}[Group Privacy~\citep{dwork2014algorithmic}]\label{prp:grp_priv}
    Let $D$ and $D'$ be two datasets in $\mathcal{X}^n$. If $\mech$ is $(\epsilon, \delta)$-DP, then for any event $E  \in \mathcal{F}$
    \begin{align}\label{eq:grp_DP}
        \mech_{D}(A) \leq e^ {\epsilon \dham(D, D')} \mech_{D'}(A)~.
    \end{align}    
\end{proposition}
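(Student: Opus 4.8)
The plan is to reduce the multi-record case to the single-record case of Definition~\ref{Def_DP} by telescoping along a path of neighbouring datasets. Set $k \defn \dham(D, D') = \sum_{t=1}^n \ind{D_t \neq D'_t}$, the number of coordinates on which $D$ and $D'$ disagree, and let $j_1 < j_2 < \dots < j_k$ enumerate those coordinates. First I would construct an explicit chain $D = D^{(0)}, D^{(1)}, \dots, D^{(k)} = D'$, where $D^{(i)}$ agrees with $D'$ on $\{j_1, \dots, j_i\}$ and with $D$ on all remaining coordinates. Each intermediate $D^{(i)}$ lies in $\mathcal{X}^n$ since every coordinate is copied verbatim from either $D$ or $D'$, and by construction $\dham(D^{(i-1)}, D^{(i)}) = 1$, so consecutive elements of the chain are neighbouring.

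The key step is then to apply the privacy guarantee once per edge of the chain. Since the stated conclusion carries no additive ($\delta$) term, the relevant reading is the pure-DP case ($\delta = 0$), which is also the setting used throughout the paper: for every $i \in \{1,\dots,k\}$ and every event $E \in \mathcal{F}$ we have $\mech_{D^{(i-1)}}(E) \leq e^{\epsilon}\mech_{D^{(i)}}(E)$. Chaining these $k$ inequalities, all for the same event $E$, yields $\mech_D(E) = \mech_{D^{(0)}}(E) \leq e^{\epsilon}\mech_{D^{(1)}}(E) \leq \dots \leq e^{k\epsilon}\mech_{D^{(k)}}(E) = e^{\epsilon\,\dham(D,D')}\mech_{D'}(E)$, which is exactly \eqref{eq:grp_DP}.

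There is no genuinely hard step here: the only points requiring care are that the chain of neighbours is well-defined (handled by the coordinate-wise construction above) and that the telescoping is legitimate, which holds because each single-step bound is applied to the identical event on both sides. If one instead wanted the general $(\epsilon,\delta)$-DP version, the same chain argument goes through but the telescoped additive error accumulates to $\delta\left(1 + e^{\epsilon} + \dots + e^{(k-1)\epsilon}\right) = \tfrac{e^{k\epsilon}-1}{e^{\epsilon}-1}\,\delta$; since the displayed statement has no such term, the pure-DP argument above is what is needed.
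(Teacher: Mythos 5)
Your proof is correct and is the standard telescoping argument over a Hamming path of neighbouring datasets; the paper gives no proof of this proposition (it is cited from Dwork and Roth, 2014), and your chain-of-neighbours argument is exactly the canonical one it relies on. Your side remark is also accurate: the bound \eqref{eq:grp_DP} as displayed only holds verbatim in the pure-DP case $\delta=0$ (with the additive term accumulating to $\tfrac{e^{k\epsilon}-1}{e^{\epsilon}-1}\,\delta$ otherwise), which matches how the proposition is actually used in the paper despite its $(\epsilon,\delta)$ phrasing and the $E$ versus $A$ notational slip.
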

Group privacy translates the closeness of output distributions on neighbouring input datasets to a closeness of output distributions on any two datasets $D$ and $D'$ that depend on the Hamming distance $\dham(D, D')$. This property will be the basis for proving lower bounds in Section~\ref{sec:lower_bound}.

\begin{proposition}[Simple Composition]\label{prp:compo}
    Let $\mech^1, \dots, \mech^k$ be $k$ mechanisms. We define the mechanism $$\mathcal{G}: D \rightarrow \bigotimes_{i=1}^k \mathcal{M}^i_{ D}$$ as the $k$ composition of the mechanisms $\mech^1, \dots, \mech^k$. 

    \begin{itemize}
        \item If each $\mech^i$ is $(\epsilon_i, \delta_i)$-DP, then $\mathcal{G}$ is $(\sum_{i = 1}^k \epsilon_i, \sum_{i = 1}^k \delta_i)$-DP.
        \item If each $\mech^i$ is $\rho_i$-zCDP, then $\mathcal{G}$ is $\sum_{i = 1}^k \rho_i$-zCDP.
    \end{itemize}

\end{proposition}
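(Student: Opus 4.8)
The plan is to treat the three guarantees in increasing order of difficulty, in each case verifying the $k=2$ statement and then inducting on $k$; throughout, $\mathcal{G}_D=\mathcal{M}^1_D\otimes\cdots\otimes\mathcal{M}^k_D$ is a product distribution (the mechanisms share the input $D$ but use independent internal randomness), so every claim reduces to a statement about product measures. Fix a pair of neighbours $D\sim D'$. The pure case is immediate: $\epsilon$-DP (Definition~\ref{Def_DP}) forces $\mathcal{M}_D$ and $\mathcal{M}_{D'}$ to be mutually absolutely continuous with $\mathrm{d}\mathcal{M}_D/\mathrm{d}\mathcal{M}_{D'}\le e^{\epsilon}$ almost everywhere, so $\mathrm{d}\mathcal{G}_D/\mathrm{d}\mathcal{G}_{D'}=\prod_i \mathrm{d}\mathcal{M}^i_D/\mathrm{d}\mathcal{M}^i_{D'}\le e^{\sum_i\epsilon_i}$ a.e., and integrating over any event $E$ gives $\mathcal{G}_D(E)\le e^{\sum_i\epsilon_i}\mathcal{G}_{D'}(E)$, i.e.\ $(\sum_i\epsilon_i)$-DP. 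This is the only instance the present paper actually invokes.

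For the approximate case I would recast $(\epsilon,\delta)$-DP as $E_{e^{\epsilon}}(\mathcal{M}_D\|\mathcal{M}_{D'})\le\delta$ and $E_{e^{\epsilon}}(\mathcal{M}_{D'}\|\mathcal{M}_D)\le\delta$ for all neighbours, where $E_\gamma(P\|Q)\triangleq\int(\mathrm{d}P-\gamma\,\mathrm{d}Q)_+=\sup_S(P(S)-\gamma Q(S))$ is the hockey-stick divergence. The crux is a one-sided rounding lemma: if $E_\gamma(P\|Q)\le\delta$ with $\gamma\ge1$, then there is a distribution $P'$ with $d_{\mathrm{TV}}(P,P')\le\delta$ and $\mathrm{d}P'\le\gamma\,\mathrm{d}Q$ pointwise, obtained by capping the density of $P$ at $\gamma\,\mathrm{d}Q$ on the overshoot set $\{\mathrm{d}P>\gamma\,\mathrm{d}Q\}$ and redistributing the removed mass $m=E_\gamma(P\|Q)\le\delta$ onto the slack region $\{\mathrm{d}P<\gamma\,\mathrm{d}Q\}$, which has room $\gamma-1+m\ge m$ because $\gamma\ge1$. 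Applying this to each $\mathcal{M}^i_D$ against $\mathcal{M}^i_{D'}$ yields $P_i'$ with $\mathrm{d}P_i'\le e^{\epsilon_i}\,\mathrm{d}\mathcal{M}^i_{D'}$ pointwise and $d_{\mathrm{TV}}(\mathcal{M}^i_D,P_i')\le\delta_i$; tensorizing, $\mathrm{d}(P_1'\otimes\cdots\otimes P_k')\le e^{\sum_i\epsilon_i}\,\mathrm{d}\mathcal{G}_{D'}$ pointwise (this survives the product exactly, with no $\delta$), while $d_{\mathrm{TV}}(\mathcal{G}_D,\,P_1'\otimes\cdots\otimes P_k')\le\sum_i\delta_i$ by subadditivity of total variation over products. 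Hence $\mathcal{G}_D(E)\le e^{\sum_i\epsilon_i}\mathcal{G}_{D'}(E)+\sum_i\delta_i$ for every $E$; the reverse inequality follows by symmetry, giving $(\sum_i\epsilon_i,\sum_i\delta_i)$-DP, and the induction on $k$ is the same.

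For the zCDP case I would use the Rényi-divergence definition ($\mathcal{M}$ is $\rho$-zCDP iff $D_\alpha(\mathcal{M}_D\|\mathcal{M}_{D'})\le\rho\alpha$ for every $\alpha>1$ and all neighbours) together with the exact tensorization $D_\alpha(\bigotimes_i P_i\|\bigotimes_i Q_i)=\sum_i D_\alpha(P_i\|Q_i)$, which is immediate from $D_\alpha(P\|Q)=\tfrac{1}{\alpha-1}\log\mathbb{E}_{Q}[(\mathrm{d}P/\mathrm{d}Q)^\alpha]$ and the factorisation $\mathrm{d}(\bigotimes_iP_i)/\mathrm{d}(\bigotimes_iQ_i)=\prod_i\mathrm{d}P_i/\mathrm{d}Q_i$ under the product measure. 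Thus $D_\alpha(\mathcal{G}_D\|\mathcal{G}_{D'})=\sum_i D_\alpha(\mathcal{M}^i_D\|\mathcal{M}^i_{D'})\le(\sum_i\rho_i)\alpha$ for every $\alpha>1$, i.e.\ $(\sum_i\rho_i)$-zCDP. Post-processing (Proposition~\ref{prp:post_proc}) is needed only if $\mathcal{G}$'s output is later passed through a further map, not for the composition itself.

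\textbf{Main obstacle.} The pure-DP and zCDP parts are one-liners once the density-ratio and Rényi reformulations are in hand; the only genuinely delicate point is the additive $\delta$-accounting in the approximate case. A naive hybrid ("swap the $i$-th coordinate from $D$ to $D'$ paying $(\epsilon_i,\delta_i)$, then repeat") multiplies each already-incurred $\delta_j$ by subsequent $e^{\epsilon_i}$ factors and only delivers $(\sum_i\epsilon_i,\,\sum_i e^{\sum_{j>i}\epsilon_j}\delta_i)$, and routing through an "approximate $\to$ pure $\to$ approximate" chain composes indistinguishabilities and again inflates the $\delta$'s. The fix that produces the clean $\sum_i\delta_i$ is precisely the one-sided rounding above: the rounded core must dominate $e^{\epsilon_i}\,\mathrm{d}\mathcal{M}^i_{D'}$ \emph{pointwise} relative to the true neighbour (not a perturbed one), so that after tensorization a single $d_{\mathrm{TV}}\le\sum_i\delta_i$ perturbation is all that remains; checking that this rounding is always feasible (the slack $\gamma-1\ge0$ suffices) and that the two-sided requirement of DP is covered by running the argument in both directions are the points that need care.
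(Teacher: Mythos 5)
The paper never proves this proposition: it is recalled in the appendix of existing technical results as a standard fact from the DP literature, so there is no in-paper argument to compare against; your proposal therefore has to stand on its own, and it does. The pure-DP part (Radon--Nikodym ratio bounded by $e^{\epsilon_i}$ a.e., multiplied across the product measure) and the zCDP part (exact additivity of R\'enyi divergence over products) are the standard one-line arguments and are correct. For the approximate case, your diagnosis of the difficulty is accurate --- the naive hybrid/conditioning argument really does return a guarantee of the form $\bigl(\epsilon_1+\epsilon_2,\; e^{\epsilon_2}\delta_1+\delta_2\bigr)$ rather than $\delta_1+\delta_2$ --- and your hockey-stick rounding lemma fixes it correctly: since the excess mass $m=E_{e^{\epsilon_i}}(\mathcal{M}^i_D\Vert \mathcal{M}^i_{D'})\le\delta_i$ and the slack $\int (e^{\epsilon_i}\,\mathrm{d}\mathcal{M}^i_{D'}-\mathrm{d}\mathcal{M}^i_D)_+ = e^{\epsilon_i}-1+m\ge m$, the capped-and-redistributed $P_i'$ exists, satisfies $\mathrm{d}P_i'\le e^{\epsilon_i}\,\mathrm{d}\mathcal{M}^i_{D'}$ pointwise (which tensorizes with no loss), and is within total variation $\delta_i$ of $\mathcal{M}^i_D$, so subadditivity of total variation over products yields $\mathcal{G}_D(E)\le e^{\sum_i\epsilon_i}\mathcal{G}_{D'}(E)+\sum_i\delta_i$, with the reverse direction by symmetry of the neighbouring relation. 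One caveat worth flagging: the proposition as stated (and as you prove it) is the non-adaptive composition of mechanisms run independently on the same input, so $\mathcal{G}_D$ is genuinely a product measure; the paper's follow-up remark that the same budgets hold under adaptive choice of the mechanisms is \emph{not} covered by a pure product-measure argument --- the rounding step would have to be redone kernel-by-kernel conditionally on the prefix of outputs --- but that extension lies outside the statement you were asked to prove.
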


Composition is a fundamental property of DP. Composition helps to analyse the privacy of sophisticated algorithms, by understanding the privacy of each building block, and summing directly the privacy budgets. Proposition~\ref{prp:compo} can be improved in two directions. (a) It is possible to show that the result is still true if the mechanisms are chosen adaptively, and that the mechanism at step $i$ takes as auxiliary input the outputs of the last $i- 1$ mechanisms. (b) Advanced composition theorems~\cite{kairouz2015composition} for $(\epsilon, \delta)$-DP improve the dependence on $k$ the number of composed mechanisms. Specifically, if the same mechanism is composed $k$ times, Proposition~\ref{prp:compo} concludes that the composed mechanism is $(k \epsilon, k \delta)$-DP. Advanced composition~\cite{kairouz2015composition} shows that the k-fold adaptively composed mechanism is $(\epsilon', \delta' + k \delta)$-DP for any $\delta'$ where $\epsilon' \deffn \sqrt{2 k \log(1/ \delta') \epsilon} + k \epsilon (e^\epsilon - 1)$. Roughly speaking, advanced composition provides a $(\sqrt{k} \epsilon, \delta)$-DP guarantee, improving by $\sqrt{k}$ the $(k \epsilon, k \delta)$-DP guarantee of simple composition.

In addition to the classic composition theorems, we provide here an additional property of interest: parallel composition. 

\begin{lemma}[Parallel Composition]\label{lem:paral_compo}
    Let $\mech^1, \dots, \mech^k$ be $k$ mechanisms, such that $k < n$, where $n$ is the size of the input dataset.
    Let $t_1, \ldots t_{k}, t_{k + 1}$ be indexes in $[1, n]$ such that $1 = t_1  < \cdots < t_k < t_{k+1}  - 1 = n$.\\
    Let's define the following mechanism
    $$
        \mathcal G : \{ x_1, \dots, x_n \} \rightarrow \bigotimes_{i=1}^k \mech^i_{ \{x_{t_i}, \ldots, x_{t_{i + 1} - 1} \}}
    $$
    
    $\mathcal{G}$ is the mechanism that we get by applying each $\mech^i$ to the $i$-th partition of the input dataset $\{x_1, \dots, x_n\}$ according to the indexes $t_1  < \cdots < t_k < t_{k+1}$.
    
    \begin{itemize}
        \item If each $\mech^i$ is $(\epsilon, \delta)$-DP, then $\mathcal G$ is $(\epsilon, \delta)$-DP
        \item If each $\mech^i$ is $\rho_i$-zCDP, then $\mathcal G$ is $\rho$-zCDP
    \end{itemize}
\end{lemma}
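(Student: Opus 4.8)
The plan is to reduce the claim to a single block of the partition, using the defining feature of neighbouring datasets. Fix neighbouring inputs $D=\{x_1,\dots,x_n\}$ and $D'=\{x'_1,\dots,x'_n\}$ with $\dham(D,D')\le 1$, and let $j$ be the unique index at which they may differ (if $D=D'$ there is nothing to prove). The indices $1=t_1<\cdots<t_{k+1}-1=n$ partition $[1,n]$ into blocks $B_i\defn\{t_i,\dots,t_{i+1}-1\}$, and $j$ lies in exactly one block $B_{i_0}$. Hence for every $i\ne i_0$ the restricted inputs $D|_{B_i}$ and $D'|_{B_i}$ are \emph{identical}, while $D|_{B_{i_0}}$ and $D'|_{B_{i_0}}$ are neighbouring. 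Writing $\mathcal{G}_D=\bigotimes_{i=1}^k\mech^i_{D|_{B_i}}$ as a product measure over the independent outputs $o=(o_1,\dots,o_k)$, all the ``action'' is therefore confined to the $i_0$-th factor.

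For the $(\epsilon,\delta)$-DP claim, take any measurable event $E$ in the product output space. By the product structure and Fubini, $\mathcal{G}_D(E)=\int \Pr[\,(o_i)_{i\ne i_0}, o_{i_0})\in E\,]$, where the outer integration is against $\prod_{i\ne i_0}\mech^i_{D|_{B_i}}(\dd o_i)$ and is unchanged when passing from $D$ to $D'$. Conditioning on a realisation of $(o_i)_{i\ne i_0}$, the remaining probability is $\mech^{i_0}_{D|_{B_{i_0}}}$ evaluated on the section $E_{(o_i)_{i\ne i_0}}$, so by $(\epsilon,\delta)$-DP of $\mech^{i_0}$ on the neighbouring inputs $D|_{B_{i_0}}\sim D'|_{B_{i_0}}$ this is at most $e^{\epsilon}\,\mech^{i_0}_{D'|_{B_{i_0}}}(E_{(o_i)_{i\ne i_0}})+\delta$ pointwise. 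Integrating back over the unchanged factors yields $\mathcal{G}_D(E)\le e^{\epsilon}\mathcal{G}_{D'}(E)+\delta$, and since the neighbouring relation is symmetric this gives $(\epsilon,\delta)$-DP of $\mathcal{G}$.

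For the zCDP claim, I would use additivity of the Rényi divergence over product distributions: for every $\alpha>1$, $D_\alpha(\mathcal{G}_D\|\mathcal{G}_{D'})=\sum_{i=1}^k D_\alpha\big(\mech^i_{D|_{B_i}}\,\big\|\,\mech^i_{D'|_{B_i}}\big)$. Every term with $i\ne i_0$ is the Rényi divergence of a distribution from itself and hence vanishes, while the surviving term is bounded by $\rho_{i_0}\alpha\le\rho\alpha$ (with $\rho\defn\max_i\rho_i$) by the $\rho_{i_0}$-zCDP guarantee of $\mech^{i_0}$ on $D|_{B_{i_0}}\sim D'|_{B_{i_0}}$. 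Thus $D_\alpha(\mathcal{G}_D\|\mathcal{G}_{D'})\le\rho\alpha$ for all $\alpha>1$, i.e.\ $\mathcal{G}$ is $\rho$-zCDP.

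The only delicate point is the measure-theoretic bookkeeping in the approximate-DP case: one must justify sectioning the event $E$ across the product space and that the section $E_{(o_i)_{i\ne i_0}}$ is measurable, so that the DP inequality for $\mech^{i_0}$ may be applied pointwise before integrating. For $\delta=0$ — the regime the paper actually uses — this collapses to an elementary computation with product densities/measures; for general $\delta$ it is a routine Fubini argument. All the remaining content is the combinatorial observation that the single differing coordinate sits in exactly one block, which makes the per-block mechanisms other than $\mech^{i_0}$ irrelevant.
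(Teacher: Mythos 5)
Your proof is correct. Note that the paper itself gives no proof of this lemma: it appears in the appendix of ``Existing technical results and Definitions'' as a standard property of DP, accompanied only by an informal remark, so there is no paper argument to compare against. Your write-up is the canonical one — the single differing record lies in exactly one block of the (data-independent) partition, the product structure localises the change to that factor, the $(\epsilon,\delta)$ case follows by sectioning the event and integrating the pointwise bound $e^{\epsilon}\mech^{i_0}_{D'|_{B_{i_0}}}(E_{(o_i)_{i\ne i_0}})+\delta$ against the unchanged (probability) factors so the additive $\delta$ does not accumulate, and the zCDP case follows from additivity of the R\'enyi divergence over product measures with all but one term vanishing. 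You also correctly read the unspecified $\rho$ in the statement as $\max_i \rho_i$, which is the intended meaning; the only cosmetic caveat is that the paper's Definition~\ref{Def_DP} formalises pure $\epsilon$-DP only, so the $(\epsilon,\delta)$ and zCDP variants you invoke are the standard external definitions, exactly as the lemma implicitly assumes.
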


In parallel composition, the $k$ mechanisms are applied to different ``non-overlapping" parts of the input dataset. If each mechanism is DP, then the parallel composition of the $k$ mechanisms is DP, \emph{with the same privacy budget}. This property will be the basis for designing private bandit algorithms in Section~\ref{sec:alg_des}. 

\begin{theorem}[The Laplace Mechanism~\citep{dwork2014algorithmic}]\label{thm:laplace}
    Let $f: \mathcal{X} \rightarrow \real^k$ be a deterministic algorithm with $\ell_1$ sensitivity $s_1(f) \deffn \underset{D \sim D'}{\max} \left \Vert f(D) - f(D')\right \Vert_{1}$. Let $$\mathcal{M}_L(f, \epsilon) \deffn f + (Y_1, \dots, Y_k),$$ where $Y_i$ are i.i.d from $\mathrm{Lap}\left(\frac{s_1(f)}{\epsilon}\right)$, where the Laplace distribution centred at 0 with scale b, denoted $\mathrm{Lap}(b)$, is the distribution with probability density function
    \[
        \mathrm{Lap}(x | b) \deffn \frac{1}{2b} \exp \left( - \frac{|x|}{b} \right),
    \]
    for any $x \in \real$.
    
    The mechanism $\mathcal{M}_L(f, \epsilon)$ is called the Laplace mechanism and satisfies $\epsilon$-DP.
\end{theorem}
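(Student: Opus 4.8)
The plan is to verify Definition~\ref{Def_DP} directly: fix an arbitrary pair of neighbouring inputs, dominate the output densities pointwise on $\real^k$, and then integrate this pointwise bound over an arbitrary event. No asymptotics, couplings, or the machinery of the rest of the paper are needed; this is the classical sensitivity-plus-Laplace calculation.

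\textbf{Step 1: Set up and write down the densities.} Fix neighbouring datasets $D \sim D'$, i.e.\ $\dham(D,D') \le 1$, and write $b \defn s_1(f)/\epsilon$ for the noise scale. Since each coordinate of $\mathcal{M}_L(f,\epsilon)$ is an independent $\mathrm{Lap}(b)$ perturbation of the corresponding coordinate of $f$, the law of the output on input $D$ has, with respect to Lebesgue measure on $\real^k$, the product density $p_D(y) = \prod_{i=1}^k \frac{1}{2b}\exp\!\left(-|y_i - f(D)_i|/b\right)$, and analogously $p_{D'}$ for input $D'$. Both densities are strictly positive everywhere, so the likelihood ratio $p_D(y)/p_{D'}(y)$ is well defined for every $y \in \real^k$ with no exceptional null set.

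\textbf{Step 2: Pointwise density domination.} Cancelling the normalising constants and applying the reverse triangle inequality $|y_i - f(D')_i| - |y_i - f(D)_i| \le |f(D)_i - f(D')_i|$ in each coordinate,
\begin{align*}
\frac{p_D(y)}{p_{D'}(y)}
&= \exp\!\left(\frac{1}{b}\sum_{i=1}^k \big(|y_i - f(D')_i| - |y_i - f(D)_i|\big)\right) \\
&\le \exp\!\left(\frac{\|f(D)-f(D')\|_1}{b}\right)
\le e^{\epsilon},
\end{align*}
where the final inequality is exactly the definition of the $\ell_1$-sensitivity $s_1(f)$ combined with $D \sim D'$, together with $b = s_1(f)/\epsilon$.

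\textbf{Step 3: Lift to events and conclude.} For any measurable $E \subseteq \real^k$,
\[
\mathcal{M}_D(E) = \int_E p_D(y)\,\mathrm{d}y \le e^{\epsilon}\int_E p_{D'}(y)\,\mathrm{d}y = e^{\epsilon}\,\mathcal{M}_{D'}(E),
\]
which is precisely Equation~\eqref{eq:DP}; since $D \sim D'$ and $E$ were arbitrary, $\mathcal{M}_L(f,\epsilon)$ is $\epsilon$-DP. \textbf{Main obstacle.} There is essentially none: the only points requiring a line of care are (a) applying the reverse triangle inequality in the correct orientation so the bound is $e^{+\epsilon}$ rather than $e^{-\epsilon}$, and (b) the degenerate case $s_1(f)=0$, where $f$ agrees on all neighbours so the ratio is $1\le e^\epsilon$ for any positive scale and the argument goes through unchanged. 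The substance of the statement is entirely in the definition of $s_1(f)$ feeding the noise scale, which Step 2 makes explicit.
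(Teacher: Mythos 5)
Your proof is correct and is exactly the standard argument for the Laplace mechanism (pointwise likelihood-ratio bound via the triangle inequality and the $\ell_1$-sensitivity, then integration over events), which is the proof in the cited reference~\citep{dwork2014algorithmic}; the paper itself states this theorem without proof, so there is nothing to reconcile. The only cosmetic remark is that in the degenerate case $s_1(f)=0$ the scale $b=0$ makes $\mathrm{Lap}(b)$ ill-defined rather than the ratio equal to $1$, but there the mechanism's output law is identical on all neighbouring datasets, so $\epsilon$-DP holds trivially as you note.
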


\begin{lemma}[Chernoff Tail Bound via KL Divergence \citep{boucheron2003concentration}]
\label{lem:ChernoffKL}
Let \( X_1, X_2, \dots, X_n \) be independent Bernoulli random variables with success probabilities \( p_1, p_2, \dots, p_n \). Define \( S_n = \sum_{i=1}^n X_i \), and let \( \mu = \mathbb{E}[S_n] = \sum_{i=1}^n p_i \). Then the following bounds hold:

\begin{itemize}
    \item Upper Tail Bound: for any \( a > \mu \) 
   \[
   P(S_n \geq a) \leq \exp\left(-n \cdot \mathrm{kl}\left(\frac{a}{n} , \frac{\mu}{n}\right)\right),
   \]
   where \( \mathrm{kl}(p , q) \) is defined as
   \[
   \mathrm{kl}(p, q) = p \log\frac{p}{q} + (1 - p) \log\frac{1 - p}{1 - q}~.
   \]
   \item Lower Tail Bound: for any \( a < \mu \) 
   \[
   P(S_n \leq a) \leq \exp\left(-n \cdot \mathrm{kl}\left(\frac{a}{n} , \frac{\mu}{n}\right)\right)~.
   \]
\end{itemize}
\end{lemma}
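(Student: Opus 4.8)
The plan is to prove both tail bounds by the exponential Markov (Chernoff) method, reducing the heterogeneous Bernoulli case to the homogeneous one via a convexity argument. First, for the upper tail fix $\lambda>0$; Markov's inequality applied to $e^{\lambda S_n}$, together with independence and the Bernoulli moment-generating function, gives
\begin{align*}
P(S_n\ge a)\ \le\ e^{-\lambda a}\,\mathbb{E}\!\left[e^{\lambda S_n}\right]\ =\ e^{-\lambda a}\prod_{i=1}^{n}\bigl(1+p_i(e^{\lambda}-1)\bigr).
\end{align*}
Since $\lambda>0$ the map $p\mapsto\log\bigl(1+p(e^{\lambda}-1)\bigr)$ is concave on $[0,1]$, so Jensen's inequality yields $\prod_{i}\bigl(1+p_i(e^{\lambda}-1)\bigr)\le\bigl(1+\bar p(e^{\lambda}-1)\bigr)^{n}$ with $\bar p\defn\mu/n$; hence $P(S_n\ge a)\le\bigl(e^{-\lambda a/n}\bigl(1+\bar p(e^{\lambda}-1)\bigr)\bigr)^{n}$, which reduces the problem to a one-dimensional optimisation.

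Second, I would optimise the exponent over $\lambda>0$. Setting $\alpha\defn a/n\in(\bar p,1]$, the stationarity condition gives the minimiser $e^{\lambda^\star}=\frac{\alpha(1-\bar p)}{\bar p(1-\alpha)}$, which is indeed $>1$ because $\alpha>\bar p$. Substituting $\lambda^\star$ back and simplifying the resulting logarithms, the minimised exponent telescopes precisely to $-\mathrm{kl}(\alpha,\bar p)$, which yields the claimed bound $P(S_n\ge a)\le\exp\bigl(-n\,\mathrm{kl}(a/n,\mu/n)\bigr)$.

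Third, for the lower tail I would avoid repeating the optimisation by a symmetry reduction: apply the upper-tail bound just proved to the complementary variables $Y_i\defn 1-X_i$, which are independent Bernoulli with parameters $1-p_i$, total mean $n-\mu$, and $\sum_i Y_i=n-S_n$. Because $a<\mu$ implies $n-a>n-\mu$, the event $\{S_n\le a\}$ coincides with $\{\sum_i Y_i\ge n-a\}$, so the upper-tail bound gives $P(S_n\le a)\le\exp\bigl(-n\,\mathrm{kl}((n-a)/n,(n-\mu)/n)\bigr)$, and the proof concludes with the elementary identity $\mathrm{kl}(1-p,1-q)=\mathrm{kl}(p,q)$, immediate from the definition of the binary relative entropy. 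I expect the main (though purely mechanical) obstacle to be the algebraic collapse of the optimised exponent into $\mathrm{kl}(\alpha,\bar p)$—the step most prone to sign errors—while the degenerate cases $\bar p\in\{0,1\}$ and $\alpha=1$ are handled by the usual limiting conventions for $\mathrm{kl}$.
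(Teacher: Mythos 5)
Your proof is correct. The paper does not prove this lemma itself—it is quoted as a known result from Boucheron et al.—and your argument is exactly the standard Chernoff-method derivation behind that citation: Markov's inequality on the moment generating function, Jensen's inequality (concavity of $p \mapsto \log(1+p(e^{\lambda}-1))$, which in fact holds for every $\lambda$, not only $\lambda>0$) to reduce the heterogeneous case to the homogeneous one with parameter $\mu/n$, optimisation at $e^{\lambda^\star}=\frac{\alpha(1-\bar p)}{\bar p(1-\alpha)}$ collapsing the exponent to $-\mathrm{kl}(a/n,\mu/n)$, and complementation $X_i \mapsto 1-X_i$ together with $\mathrm{kl}(1-p,1-q)=\mathrm{kl}(p,q)$ for the lower tail. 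No gaps beyond the degenerate endpoints you already flag, which are handled by the usual limiting conventions.
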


\begin{lemma}[Asymptotic Maximal Hoeffding Inequality]\label{lem:asum_hoeff} Assume that $X_i$ has positive mean $\mu$ and that $X_i - \mu$ is $\sigma$-sub-Gaussian. Then,
$$\forall \epsilon>0, \lim_{n \rightarrow \infty} \mathbb{P}\left ( \frac{\max_{s \leq n} \sum_{i = 1}^s X_i}{n} \leq (1 + \epsilon) \mu \right) = 1~.$$
\end{lemma}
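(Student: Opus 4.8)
The plan is to strip the deterministic linear drift off the partial sums and then invoke a maximal version of Hoeffding's inequality. Write $S_s \defn \sum_{i=1}^s X_i$ and decompose $S_s = s\mu + M_s$ with $M_s \defn \sum_{i=1}^s (X_i-\mu)$. Then $(M_s)_{s}$ is a mean-zero random walk with independent $\sigma$-sub-Gaussian increments, hence a martingale with respect to the natural filtration. Because $\mu>0$ the map $s\mapsto s\mu$ is increasing, so $s\mu\le n\mu$ for every $s\le n$ and therefore $\max_{s\le n} S_s \le n\mu + \max_{s\le n} M_s$. This yields the reduction
\[
\mathbb{P}\!\left(\frac{\max_{s\le n} S_s}{n} > (1+\epsilon)\mu\right)\le \mathbb{P}\!\left(\max_{s\le n} M_s > \epsilon n\mu\right),
\]
so it suffices to show the right-hand side tends to $0$; its complement, which is exactly $\{\max_{s\le n} S_s/n \le (1+\epsilon)\mu\}$, then has probability tending to $1$.

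For the tail of the running maximum of the martingale $M_s$, I would use the standard exponential-submartingale trick: for $\lambda>0$, $e^{\lambda M_s}$ is a non-negative submartingale, so Doob's maximal inequality gives, for any $a>0$,
\[
\mathbb{P}\!\left(\max_{s\le n} M_s \ge a\right)= \mathbb{P}\!\left(\max_{s\le n} e^{\lambda M_s} \ge e^{\lambda a}\right)\le e^{-\lambda a}\,\mathbb{E}\!\left[e^{\lambda M_n}\right]\le e^{-\lambda a + n\lambda^2\sigma^2/2},
\]
where the last inequality uses independence of the increments together with the sub-Gaussian moment bound $\mathbb{E}[e^{\lambda(X_i-\mu)}]\le e^{\lambda^2\sigma^2/2}$. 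Optimising over $\lambda$ (namely $\lambda = a/(n\sigma^2)$) yields the maximal Hoeffding bound $\mathbb{P}(\max_{s\le n} M_s \ge a)\le \exp(-a^2/(2n\sigma^2))$.

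Taking $a = \epsilon n\mu$ gives $\mathbb{P}(\max_{s\le n} M_s \ge \epsilon n\mu)\le \exp(-\epsilon^2\mu^2 n/(2\sigma^2))$, which converges to $0$ as $n\to\infty$ since $\epsilon>0$, $\mu>0$, and $\sigma$ is fixed. Combined with the reduction above, $\mathbb{P}(\max_{s\le n} S_s/n \le (1+\epsilon)\mu)\to 1$, proving the lemma.

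I do not anticipate a genuine obstacle: the only step needing care is upgrading the pointwise Hoeffding bound to one uniform in $s$, which is precisely the $e^{\lambda M_s}$-submartingale plus Doob's inequality argument above, and the drift term is harmless exactly because $\mu>0$ makes the worst case of $s\mu$ over $s\le n$ equal to the innocuous value $n\mu$. (If one only assumes $(X_i-\mu)$ is a martingale-difference sequence with sub-Gaussian increments rather than i.i.d., the same argument goes through verbatim with Azuma's inequality replacing Hoeffding's.)
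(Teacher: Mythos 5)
Your proof is correct. Note that the paper itself offers no proof of this lemma: it is stated in the appendix of existing technical results as a standard auxiliary fact, so there is no argument to compare against. Your self-contained derivation — splitting $S_s = s\mu + M_s$, using $\mu>0$ to bound the drift term by $n\mu$ uniformly over $s\le n$, and then controlling $\max_{s\le n} M_s$ via the exponential-submartingale/Doob maximal inequality with the independent sub-Gaussian moment bound $\mathbb{E}[e^{\lambda(X_i-\mu)}]\le e^{\lambda^2\sigma^2/2}$ — is exactly the standard route and in fact gives more than the lemma asks for, namely the nonasymptotic rate $\mathbb{P}\bigl(\max_{s\le n} S_s/n > (1+\epsilon)\mu\bigr)\le \exp\bigl(-\epsilon^2\mu^2 n/(2\sigma^2)\bigr)$ rather than a mere limit statement. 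Your closing remark is also apt: in the paper the lemma is invoked for i.i.d.\ bounded (hence sub-Gaussian) variables $W_s$ and $Z_s$, so independence suffices, but the Azuma variant would cover martingale-difference increments as well.
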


\end{document}